
\documentclass{article}

\usepackage{microtype}
\usepackage{graphicx}
\usepackage{subfigure}
\usepackage{booktabs} 

\usepackage{hyperref}



\usepackage[accepted]{icml2024}

\usepackage{amsmath}
\usepackage{amssymb}
\usepackage{mathtools}
\usepackage{amsthm}

\usepackage[capitalize,noabbrev]{cleveref}

\theoremstyle{plain}
\newtheorem{theorem}{Theorem}[section]
\newtheorem{proposition}[theorem]{Proposition}
\newtheorem{lemma}[theorem]{Lemma}
\newtheorem{corollary}[theorem]{Corollary}
\theoremstyle{definition}
\newtheorem{definition}[theorem]{Definition}
\newtheorem{assumption}[theorem]{Assumption}
\theoremstyle{remark}

\usepackage[textsize=tiny]{todonotes}

\newcommand{\R}{\mathbb{R}}
\newcommand{\eps}{\varepsilon}
\newcommand{\Dc}{\mathcal D}
\newcommand{\Kc}{\mathcal K}
\newcommand{\Bc}{\mathcal B}
\newcommand{\Lc}{\mathcal L}
\newcommand{\Wstar}{\mathcal W_\star}
\newcommand{\F}{\mathcal F}
\newcommand{\HL}{H_{L} }
\newcommand{\epsLoc}{\varepsilon_{\textup{loc}}}
\newcommand{\wloc}{w_{\textup{loc}}}

\newcommand{\Neps}{\mathcal N_{\epsLoc}(w_\star)}
\newcommand{\N}{\mathcal N}
\newcommand{\dist}{\textup{dist}}
\newcommand{\PL}{P\L$^{\star}$}
\newcommand{\A}{\mathcal A}
\newcommand{\lamMin}{\lambda_{\textup{min}}}
\newcommand{\lamMax}{\lambda_{\textup{max}}}
\newcommand{\nres}{n_\textup{res}}
\newcommand{\nbc}{n_{\textup{bc}}}
\newcommand{\lbfgs}{L-BFGS}
\newcommand{\al}{Adam+\lbfgs}
\newcommand{\aln}{Adam+\lbfgs+NNCG}
\newcommand{\alg}{Adam+\lbfgs+GD}

\usepackage{eqparbox}

\usepackage{multirow}

\newcommand{\pnote}[1]{}
\renewcommand{\pnote}[1]{\textcolor{red}{\textbf{[PR: #1]}}}

\icmltitlerunning{Challenges in Training PINNs}

\begin{document}

\twocolumn[
\icmltitle{Challenges in Training PINNs: A Loss Landscape Perspective}



\icmlsetsymbol{equal}{*}

\begin{icmlauthorlist}
\icmlauthor{Pratik Rathore}{stanfordee}
\icmlauthor{Weimu Lei}{stanfordicme}
\icmlauthor{Zachary Frangella}{stanfordmse}
\icmlauthor{Lu Lu}{yalestat}
\icmlauthor{Madeleine Udell}{stanfordicme,stanfordmse}
\end{icmlauthorlist}

\icmlaffiliation{stanfordee}{Department of Electrical Engineering, Stanford University, Stanford, CA, USA}
\icmlaffiliation{stanfordicme}{ICME, Stanford University, Stanford, CA, USA}
\icmlaffiliation{stanfordmse}{Department of Management Science \& Engineering, Stanford University, Stanford, CA, USA}
\icmlaffiliation{yalestat}{Department of Statistics and Data Science, Yale University, New Haven, CT, USA}

\icmlcorrespondingauthor{Pratik Rathore}{pratikr@stanford.edu}

\icmlkeywords{Physics-informed neural networks, scientific machine learning, loss landscape, optimization, preconditioning}

\vskip 0.3in
]



\printAffiliationsAndNotice{}  

\begin{abstract}
This paper explores challenges in training Physics-Informed Neural Networks (PINNs), 
emphasizing the role of the loss landscape in the training process. 
We examine difficulties in minimizing the PINN loss function, particularly due to ill-conditioning caused by differential operators in the residual term. 
We compare gradient-based optimizers Adam, L-BFGS, and their combination \al{}, showing the superiority of \al{}, and introduce a novel second-order optimizer, NysNewton-CG (NNCG), which significantly improves PINN performance. 
Theoretically, our work elucidates the connection between ill-conditioned differential operators and 
ill-conditioning in the PINN loss and shows the benefits of combining first- and second-order optimization methods. 
Our work presents valuable insights and more powerful optimization strategies for training PINNs, 
which could improve the utility of PINNs for solving difficult partial differential equations.
\end{abstract}

\section{Introduction}
The study of Partial Differential Equations (PDEs) grounds a wide variety of scientific and engineering fields,
yet these fundamental physical equations are often difficult to solve numerically.
Recently, neural network-based approaches including physics-informed neural networks (PINNs) 
have shown promise to solve both forward and inverse problems involving PDEs \cite{raissi2019unified,e2018deep,lu2021deeponet,lu2021deepxde,karniadakis2021physicsinformed,cuomo2022scientific}.
PINNs parameterize the solution to a PDE with a neural network, and are often fit by minimizing a least-squares loss involving the PDE residual, boundary condition(s), and initial condition(s).
The promise of PINNs is the potential to obtain solutions to PDEs without discretizing or meshing the space,
enabling scalable solutions to high-dimensional problems that currently require weeks on advanced supercomputers.
This loss is typically minimized with gradient-based optimizers such as Adam \cite{kingma2014adam}, L-BFGS \cite{liu1989limited}, or a combination of both.

However, 
the challenge of optimizing PINNs restricts the application and development of these methods.
Previous work has shown that the PINN loss is difficult to minimize \cite{krishnapriyan2021characterizing,wang2021understanding,wang2022when,de2023operator} even in simple settings. 
As a result, the PINN often fails to learn the solution.
Furthermore, optimization challenges can obscure the effectiveness of new neural network architectures for PINNs, 
as an apparently inferior performance may stem from insufficient loss function optimization 
rather than inherent limitations of an architecture.
A simple, reliable training paradigm is critical to enable wider adoption of PINNs.

\begin{figure}
    \centering
    \includegraphics[scale=0.6]{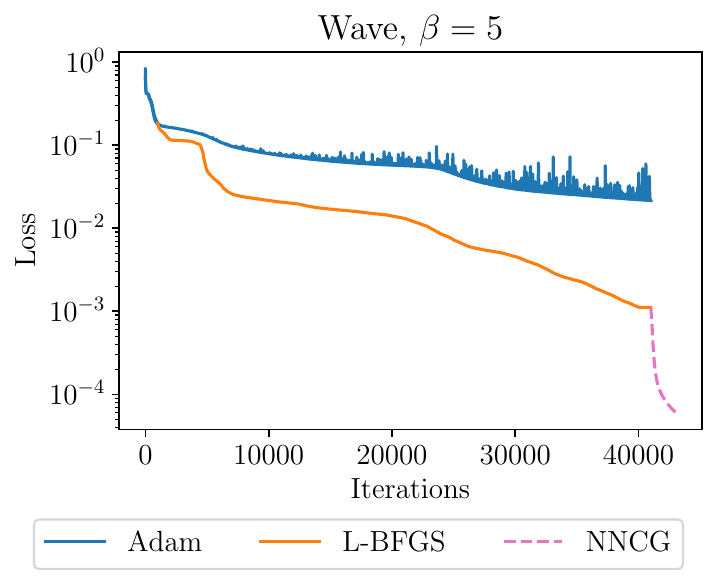}
    \caption{On the wave PDE, Adam converges slowly due to ill-conditioning and the combined \al{} optimizer stalls after about 40000 steps. Running NNCG (our method) after \al{} provides further improvement.}
    \label{fig:under_optimization_intro}
\end{figure}

This work explores the loss landscape of PINNs and the challenges this landscape poses for gradient-based optimization methods.
We provide insights from optimization theory that explain slow convergence of first-order methods such as Adam and show how ill-conditioned differential operators make optimization difficult.
We also use our theoretical insights to improve the PINN training pipeline by combining existing and new optimization methods.

The most closely related works to ours are \citet{krishnapriyan2021characterizing,de2023operator}, 
which both identify ill-conditioning in the PINN loss.
Unlike \citet{krishnapriyan2021characterizing}, we empirically confirm the ill-conditioning of the loss 
by visualizing the spectrum of the Hessian and demonstrating how quasi-Newton methods improve the conditioning.
Our theoretical results directly show how an ill-conditioned linear operator 
induces an ill-conditioned objective, 
in contrast to the approach in \citet{de2023operator} which relies on a linearization.


\textbf{Contributions.} We highlight contributions of this paper:
\begin{itemize}
\setlength\itemsep{-1pt}
    \item We demonstrate that the loss landscape of PINNs is ill-conditioned due to differential operators in the residual term and show that quasi-Newton methods improve the conditioning by 1000$\times$ or more (\cref{sec:loss_landscape}).
    \item We compare three optimizers frequently used for training PINNs: (i) Adam, (ii) \lbfgs{}, and (iii) Adam followed by L-BFGS (referred to as \al). 
    We show that \al{} is superior across a variety of network sizes (\cref{sec:opt_comparison}).
    

    \item We show the PINN solution
    resembles the true PDE solution only for extremely small loss values (\cref{sec:near_zero_loss}).
    However, we find that the loss returned by \al{} can be improved further, which also improves the PINN solution (\cref{sec:under_optimized}).

    \item Motivated by the ill-conditioned loss landscape, we introduce a novel second-order optimizer, NysNewton-CG (NNCG). 
    We show NNCG  can significantly improve the solution returned by \al{} (\cref{fig:under_optimization_intro,sec:under_optimized}). 
    \item We prove that ill-conditioned differential operators lead to an ill-conditioned PINN loss (\cref{sec:theory}). We also prove that combining first- and second-order methods (e.g., \al) leads to fast convergence, providing justification for the importance of the combined method (\cref{sec:theory}).
\end{itemize}

\textbf{Notation.}
We denote the Euclidean norm by $\|\cdot\|_2$ and use $\|M\|$ to denote the operator norm of $M\in \R^{m\times n}$.
For a smooth function $f: \R^p\rightarrow \R$, we denote its gradient at $w \in \R^p$ by $\nabla f(w)$ and its Hessian by $H_f(w)$.
We write $\partial_{w_i}f$ for $\partial f/\partial w_i$.
For $\Omega \subset \R^d$, we denote its boundary by $\partial \Omega$.
For any $m\in \mathbb{N}$, we use $I_m$ to denote the $m\times m$ identity matrix. 
Finally, we use $\preceq$ to denote the Loewner ordering on the convex cone of positive semidefinite matrices.
\section{Problem Setup}
This section introduces physics-informed neural networks as optimization problems and our experimental methodology.

\subsection{Physics-informed Neural Networks}
The goal of physics-informed neural networks is to solve partial differential equations.
Similar to prior work \cite{lu2021deepxde,hao2023pinnacle}, we consider the following system of partial differential equations:
\begin{subequations}
    \label{eq:gen_pde}
    \begin{align}
        & \Dc[u(x),x] = 0, \quad x\in \Omega, \\
        & \Bc[u(x),x] = 0, \quad x\in \partial \Omega, 
    \end{align} 
\end{subequations}

where $\Dc$ is a differential operator defining the PDE, $\Bc$ is an operator associated with the boundary and/or initial conditions, and $\Omega \subseteq \R^d$. 
To solve \eqref{eq:gen_pde}, PINNs model $u$ as a neural network $u(x; w)$ (often a multi-layer perceptron (MLP)) and approximate the true solution by the network whose weights solve the following non-linear least-squares problem:
\begin{align}
\label{eq:pinn_prob_gen}
    \underset{w\in \R^p}{\mbox{minimize}}~L(w) \coloneqq & \frac{1}{2\nres}\sum_{i=1}^{\nres}\left(\Dc[u(x_r^i; w),x_r^i]\right)^2\\ 
    &+\frac{1}{2\nbc}\sum^{\nbc}_{i=1}\left(\Bc[u(x_b^j;w),x_b^j]\right)^2. \nonumber
\end{align}
Here $\{x_r^i\}^{\nres}_{i=1}$ are the residual points and $\{x^j_b\}^{\nbc}_{j=1}$ are the boundary/initial points.
The first loss term measures how much $u(x;w)$ fails to satisfy the PDE, while the second term measures how much $u(x;w)$ fails to satisfy the boundary/initial conditions.

For this loss, $L(w)=0$ means that $u(x;w)$ exactly satisfies the PDE and boundary/initial conditions at the training points.
In deep learning, this condition is called \emph{interpolation} \cite{zhang2021understanding,belkin2021fit}.
There is no noise in \eqref{eq:gen_pde},
so the true solution of the PDE would make \eqref{eq:pinn_prob_gen} equal to zero. 
Hence a PINN approach should choose an architecture and an optimizer to achieve interpolation.
Moreover, smaller training error corresponds to better generalization for PINNs \cite{mishra2023estimates}.
Common optimizers for \eqref{eq:pinn_prob_gen} include Adam, L-BFGS, and Adam+L-BFGS \cite{raissi2019unified,krishnapriyan2021characterizing,hao2023pinnacle}.


\subsection{Experimental Methodology}
We conduct experiments on optimizing PINNs for convection, wave PDEs, and a reaction ODE. 
These equations have been studied in previous works investigating difficulties in training PINNs; we use the formulations in \citet{krishnapriyan2021characterizing, wang2022when} for our experiments. 
The coefficient settings we use for these equations are considered challenging in the literature \cite{krishnapriyan2021characterizing, wang2022when}.
\cref{sec:problem_setup_additional} contains additional details.

We compare the performance of Adam, \lbfgs{}, and \al{} on training PINNs for all three classes of PDEs. 
For Adam, we tune the learning rate by a grid search on $\{10^{-5}, 10^{-4}, 10^{-3}, 10^{-2}, 10^{-1}\}$.
For \lbfgs, we use the default learning rate $1.0$, memory size $100$, and strong Wolfe line search.
For \al, we tune the learning rate for Adam as before, and also vary the switch from Adam to \lbfgs{} (after 1000, 11000, 31000 iterations).
These correspond to \al{} (1k), \al{} (11k), and \al{} (31k) in our figures.
All three methods are run for a total of 41000 iterations.

We use multilayer perceptrons (MLPs) with tanh activations and three hidden layers. These MLPs have widths 50, 100, 200, or 400.
We initialize these networks with the Xavier normal initialization \cite{glorot2010understanding} and all biases equal to zero.
Each combination of PDE, optimizer, and MLP architecture is run with 5 random seeds.

We use 10000 residual points randomly sampled from a $255 \times 100$ grid on the interior of the problem domain. 
We use 257 equally spaced points for the initial conditions and 101 equally spaced points for each boundary condition.

We assess the discrepancy between the PINN solution and the ground truth using $\ell_2$ relative error (L2RE), a standard metric in the PINN literature. Let $y = (y_i)_{i = 1}^n$ be the PINN prediction and $y' = (y'_i)_{i = 1}^n$ the ground truth. Define
\begin{align*}
    \mathrm{L2RE} = \sqrt{\frac{\sum_{i = 1}^n (y_i - y'_i)^2}{\sum_{i = 1}^n y'^2_i}} = \sqrt{\frac{\|y - y'\|_2^2}{\|y'\|_2^2}}.
\end{align*}
We compute the L2RE using all points in the $255 \times 100$ grid on the interior of the problem domain, along with the 257 and 101 points used for the initial and boundary conditions.

We develop our experiments in PyTorch 2.0.0 \cite{paszke2019pytorch} with Python 3.10.12.
Each experiment is run on a single NVIDIA Titan V GPU using CUDA 11.8.
The code for our experiments is available at \href{https://github.com/pratikrathore8/opt_for_pinns}{https://github.com/pratikrathore8/opt\_for\_pinns}.


\section{Related Work}
Here we review common approaches for solving PDEs with physics-informed machine learning and PINN training strategies proposed in the literature.

\subsection{Physics-informed ML for Solving PDEs}
A variety of ML-based methods for solving PDEs have been proposed, including PINNs \cite{raissi2019unified}, the Fourier Neural Operator (FNO) \cite{li2021fourier}, and DeepONet \cite{lu2021deeponet}.
The PINN approach solves the PDE by 
using the loss function to penalize deviations from the PDE residual, boundary, and initial conditions.
Notably, PINNs do not require knowledge of the solution to solve the forward PDE problem.
On the other hand, the FNO and DeepONet sample and learn from known solutions to a parameterized class of PDEs to solve PDEs with another fixed value of the parameter. 
However, these operator learning approaches may not produce predictions consistent with the underlying physical laws that produced the data, which has led to the development of hybrid approaches such as physics-informed DeepONet \cite{wang2021learning}.
Our theory shows that the ill-conditioning issues we study in PINNs are unavoidable for any ML-based approach that penalizes deviations from the known physical laws.

\subsection{Challenges in Training PINNs}
The vanilla PINN \cite{raissi2019unified} can perform poorly when trying to solve high-dimensional, non-linear, and/or multi-scale PDEs.
Researchers have proposed a variety of modifications to the vanilla PINN to address these issues, many of which attempt to make the optimization problem easier to solve. 
\citet{wang2021understanding,wang2022respecting,wang2022when,nabian2021efficient,wu2023comprehensive,wu2023effective} propose loss reweighting/resampling to balance different components of the loss, \citet{yao2023multiadam,muller2023achieving} propose scale-invariant and natural gradient-based optimizers for PINN training, \citet{jagtap2020adaptive,jagtap2020locally,CiCP-34-869} propose adaptive activation functions which can accelerate convergence of the optimizer, and \citet{liu2024preconditioning} propose an approach to precondition the PINN loss itself. 
Other approaches include innovative loss functions and regularizations \cite{e2018deep,lu2021physics,kharazmi2021hpvpinns,khodayimehr2020varnet,yu2022gradient} and new architectures \cite{jagtap2020conservative,jagtap2020extended,li2020d3m,moseley2023fbpinn}.
These strategies work with varying degrees of success, and no single strategy improves performance across all PDEs.

Our work attempts to understand and tame the ill-conditioning in the (vanilla) PINN loss directly. 
We expect our ideas to work well with many of the above training strategies for PINNs; none of these training strategies rid the objective of the differential operator that generates the ill-conditioning in the PINN loss (with the possible exception of \citet{liu2024preconditioning}). 
However, \citet{liu2024preconditioning} preconditions the PINN loss directly, which is equivalent to left preconditioning, 
while our work studies the effects of preconditioned optimization methods on the PINN loss, which is equivalent to right preconditioning (\cref{subsec:how_lbfgs_preconditions}).
There is potential in combining the approach of \citet{liu2024preconditioning} and our approach to obtain a more reliable framework for training PINNs. 

Our work analyzes the spectrum (eigenvalues) of the Hessian $H_L$ of the loss.
Previous work \cite{wang2022when} studies the conditioning of the loss using the neural tangent kernel (NTK), which requires an infinite-width assumption on the neural network; our work studies the conditioning of the loss through the lens of the Hessian and yields useful results for finite-width PINN architectures.
Several works have also studied the \textit{spectral bias} of PINNs \cite{wang2021eigenvector,wang2022when,moseley2023fbpinn}, which refers to the inability of neural networks to learn high-frequency functions.
Note that our paper uses the word spectrum to refer to the Hessian eigenvalues, not the spectrum of the PDE solution.





\section{Good Solutions Require Near-zero Loss}
\label{sec:near_zero_loss}

\begin{figure*}[t]
    \centering
    \includegraphics[scale=0.33]{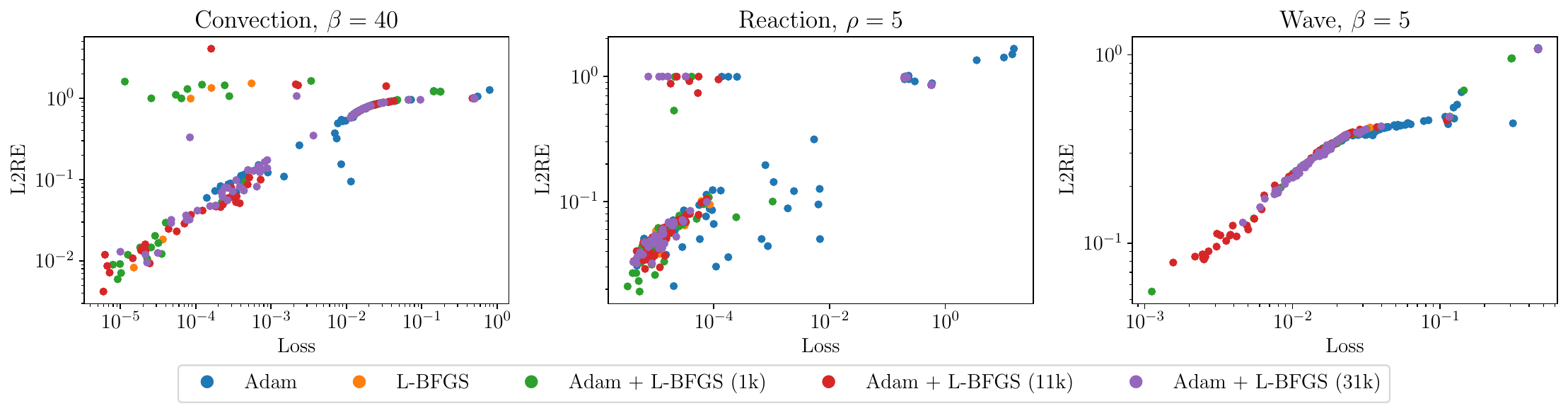}
    \caption{We plot the final L2RE against the final loss for each combination of network width, optimization strategy, and random seed.
    Across all three PDEs, a lower loss generally corresponds to a lower L2RE.}
    \label{fig:l2re_vs_loss}
\end{figure*}

First, we show that PINNs must be trained to near-zero loss to obtain a reasonably low L2RE.
This phenomenon can be observed in \cref{fig:l2re_vs_loss}, demonstrating that a lower loss generally corresponds to a lower L2RE.
For example, on the convection PDE, a loss of $10^{-3}$ yields an L2RE around $10^{-1}$, but decreasing the loss by a factor of $100$ to $10^{-5}$ yields an L2RE around $10^{-2}$, a 10$\times$ improvement.
This relationship between loss and L2RE in \cref{fig:l2re_vs_loss} 
is typical of many PDEs \cite{lu2022multifidelity}. 

The relationship in \cref{fig:l2re_vs_loss} underscores that high-accuracy optimization is required for a useful PINN.
There are instances (especially on the reaction ODE), where the PINN solution has a L2RE around 1, despite a near-zero loss; we provide insight into why this is occurring in \cref{sec:low_loss_high_l2re}. 
In \cref{sec:loss_landscape,sec:under_optimized}, we show that ill-conditioning and under-optimization make reaching a solution with sufficient accuracy difficult.


\section{The Loss Landscape is Ill-conditioned}
\label{sec:loss_landscape}

\begin{figure*}[t]
    \centering
    \includegraphics[scale=0.33]{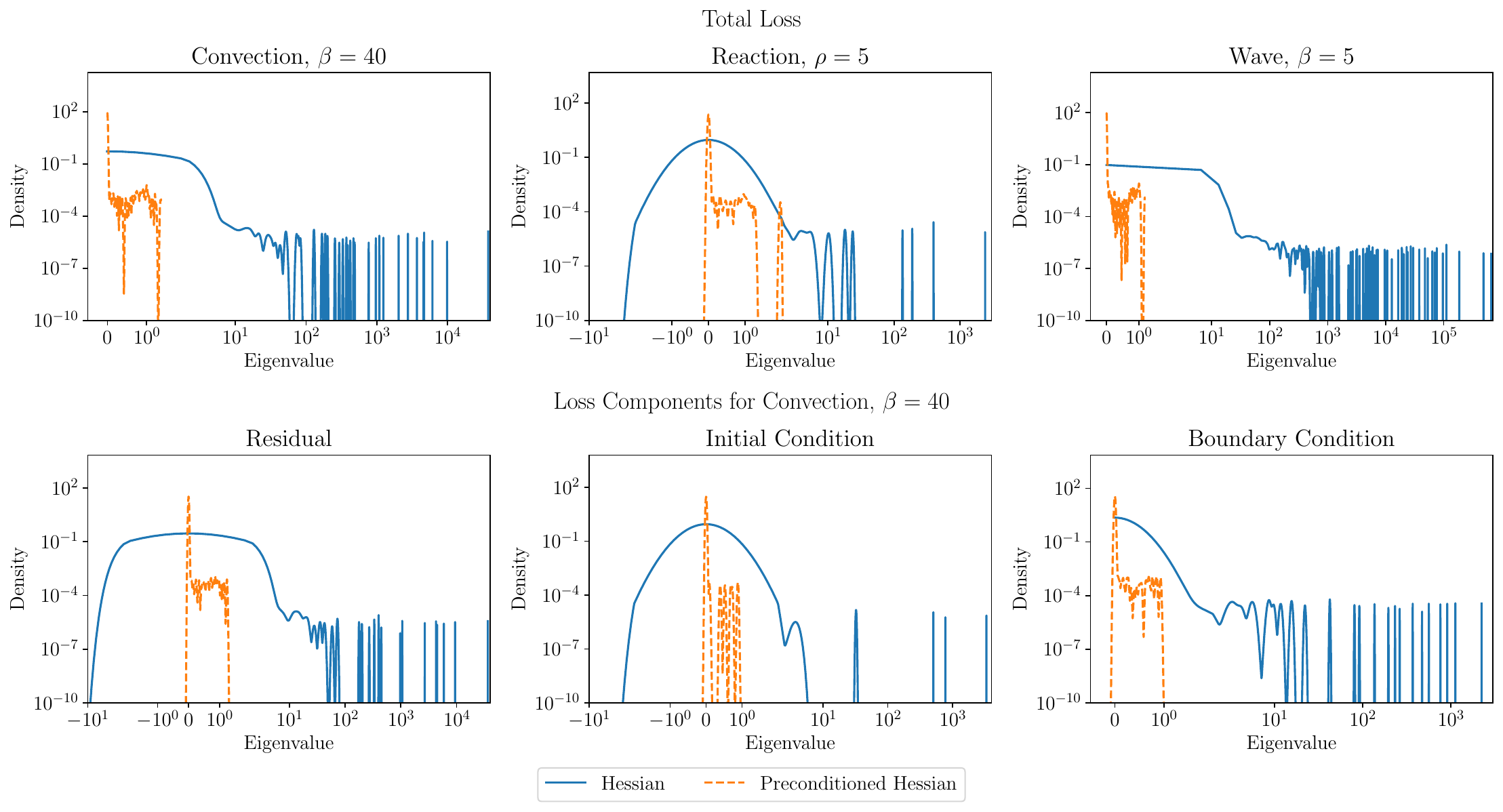}
    \caption{(Top) Spectral density of the Hessian and the preconditioned Hessian after 41000 iterations of \al{}. 
    The plots show that the PINN loss is ill-conditioned and that \lbfgs{} improves the conditioning, reducing the top eigenvalue by $10^3$ or more. \\
    (Bottom) Spectral density of the Hessian and the preconditioned Hessian of each loss component after 41000 iterations of \al{} for convection. The plots show that each component loss is ill-conditioned and that the conditioning is improved by \lbfgs{}.}
    \label{fig:spectral_density_multi_pde_convection_combined}
\end{figure*}

We show empirically that the ill-conditioning of the PINN loss is mainly due to the residual loss, which contains the differential operator.
We also show that quasi-Newton methods like \lbfgs{} improve the conditioning of the problem. 

\subsection{The PINN Loss is Ill-conditioned}


The conditioning of the loss $L$ plays a key role in the performance of first-order optimization methods \cite{nesterov2018lectures}.
We can understand the conditioning of an optimization problem through the eigenvalues of the Hessian of the loss, $H_L$. 
Intuitively, the eigenvalues of $H_L$ provide information about the local curvature of the loss function at a given point along different directions. 
The condition number is defined as the ratio of the largest magnitude's eigenvalue to the smallest magnitude's eigenvalue.
A large condition number implies the loss is very steep in some directions and flat in others, making it difficult for first-order methods to make sufficient progress toward the minimum. 
When $H_L(w)$ has a large condition number (particularly, for $w$ near the optimum), the loss $L$ is called \emph{ill-conditioned}.
For example, the convergence rate of gradient descent (GD) depends on the condition number \cite{nesterov2018lectures}, which results in GD converging slowly on ill-conditioned problems.


To investigate the conditioning of the PINN loss $L$, 
we would like to examine the eigenvalues of the Hessian. 
For large matrices, it is convenient to visualize the set of eigenvalues via  \emph{spectral density}, which approximates the distribution of the eigenvalues.
Fast approximation methods for the spectral density of the Hessian are available for deep neural networks \cite{ghorbani2019an, yao2020pyhessian}. 
\cref{fig:spectral_density_multi_pde_convection_combined} shows the estimated Hessian spectral density (solid lines) of the PINN loss for the convection, reaction, and wave problems after training with \al{}. 
For all three problems, we observe large outlier eigenvalues ($> 10^4$ for convection, $> 10^3$ for reaction, and $> 10^5$ for wave) in the spectrum, and a significant spectral density near $0$, implying that the loss $L$ is ill-conditioned.
The plots also show how the spectrum is improved by preconditioning (\cref{subsec:lbfgs_improvement}).

\subsection{The Ill-conditioning is Due to the Residual Loss}


We use the same method to study the conditioning of each component of the PINN loss. \cref{fig:spectral_density_multi_pde_convection_combined,fig:spectral_density_reaction_wave} show the estimated spectral density of the Hessian of the residual, initial condition, and boundary condition components of the PINN loss for each problem after training with \al. 
We see residual loss, which contains the differential operator $\mathcal D$, is the most ill-conditioned among all components.
Our theory (\cref{sec:theory}) shows this ill-conditioning is likely due to the ill-conditioning of $\mathcal D$.

\subsection{\lbfgs{} Improves Problem Conditioning}
\label{subsec:lbfgs_improvement}


Preconditioning is a popular technique for improving conditioning in optimization. 
A classic example is Newton's method, which uses second-order information (i.e., the Hessian) to (locally) transform an ill-conditioned loss landscape into a well-conditioned one.
\lbfgs{} is a quasi-Newton method that improves conditioning without explicit access to the problem Hessian. 
To examine the effectiveness of quasi-Newton methods for optimizing $L$, 
we compute the spectral density of the Hessian after \lbfgs{} preconditioning. (For details of this computation and how L-BFGS preconditions, see \cref{sec:lbfgs_spectral_info}.)
\cref{fig:spectral_density_multi_pde_convection_combined} shows this preconditioned Hessian spectral density (dashed lines). 
For all three problems, the magnitude of eigenvalues and the condition number has been reduced by at least $10^3$. 
In addition, the preconditioner improves the conditioning of each individual loss component of $L$ (\cref{fig:spectral_density_multi_pde_convection_combined,fig:spectral_density_reaction_wave}). 
These observations offer clear evidence that quasi-Newton methods improve the conditioning of the loss, and show the importance of quasi-Newton methods in training PINNs, which we demonstrate in \cref{sec:opt_comparison}. 
\section{\al{} Optimizes the Loss Better Than Other Methods}
\label{sec:opt_comparison}
We demonstrate that the combined optimization method \al{} consistently provides a smaller loss and L2RE than using Adam or \lbfgs{} alone.
We justify this finding using intuition from optimization theory.


\subsection{\al{} vs Adam or \lbfgs}
\cref{fig:opt_comparison} in \cref{sec:opt_comparison_additional} compares \al, Adam, and \lbfgs{} on the convection, reaction, and wave problems at difficult coefficient settings noted in the literature \cite{krishnapriyan2021characterizing, wang2022when}.
Across each network width, the lowest loss and L2RE is always delivered by \al.
Similarly, the lowest median loss and L2RE are almost always delivered by \al{} (\cref{fig:opt_comparison}).
The only exception is the reaction problem, where Adam outperforms \al{} on loss at width = 100 and L2RE at width = 200 (\cref{fig:opt_comparison}).

\cref{tab:loss_l2re_comparison} summarizes the best performance of each optimizer.
Again, \al{} is better than running either Adam or L-BFGS alone.
Notably, \al{} attains 14.2$\times$ smaller L2RE than Adam on the convection problem and 6.07$\times$ smaller L2RE than \lbfgs{} on the wave problem.

\begin{table}[t]
    \caption{Lowest loss for Adam, \lbfgs, and \al{} across all network widths after hyperparameter tuning. 
    \al{} attains both smaller loss and L2RE vs. Adam or \lbfgs. 
    }
    \vskip 0.15in
    \centering
    \tiny
    \begin{tabular}{|c|c|c|c|c|c|c|c|} 
    \hline 
    \multirow{2}{*}{Optimizer} & \multicolumn{2}{c|}{Convection} & \multicolumn{2}{c|}{Reaction} & \multicolumn{2}{c|}{Wave} \\ \cline{2-7}
                               & Loss & L2RE & Loss & L2RE & Loss & L2RE \\ \hline 
    Adam                        & 1.40e-4     & 5.96e-2     & 4.73e-6     & 2.12e-2     & 2.03e-2     & 3.49e-1     \\ \hline 
    L-BFGS                      & 1.51e-5     & 8.26e-3     & 8.93e-6     & 3.83e-2     & 1.84e-2     & 3.35e-1     \\ \hline 
    \al                         & \textbf{5.95e-6}     & \textbf{4.19e-3}     & \textbf{3.26e-6}     & \textbf{1.92e-2}     & \textbf{1.12e-3}     & \textbf{5.52e-2}      \\ \hline
    \end{tabular}
    \label{tab:loss_l2re_comparison}
\end{table}

\subsection{Intuition From Optimization Theory}
The success of \al{} over Adam and \lbfgs{} can be explained by existing results in optimization theory.
In neural networks, saddle points typically outnumber local minima \cite{dauphin2014identifying,lee2019firstorder}.
A saddle point can never be a global minimum. 
We want to reach a global minimum when training PINNs.

Newton's method (which \lbfgs{} attempts to approximate) is attracted to saddle points \cite{dauphin2014identifying},  and quasi-Newton methods such as \lbfgs{} also converge to saddle points since they ignore negative curvature \cite{dauphin2014identifying}.
On the other hand, first-order methods such as gradient descent and AdaGrad \cite{duchi2011adaptive} avoid saddle points \cite{lee2019firstorder,antonakopoulos2022adagrad}.
We expect that (full-gradient) Adam also avoids saddles for similar reasons, although
we are not aware of such a result.

Alas, first-order methods converge slowly when the problem is ill-conditioned. 
This result generalizes the well-known slow convergence of conjugate gradient (CG)
for ill-conditioned linear systems:
 $\mathcal O (\sqrt \kappa \log(\frac{1}{\epsilon}))$ iterations to converge to an $\epsilon$-approximate solution of a system with condition number $\kappa$.
In optimization, an analogous notion of a condition number in a set $\mathcal S$ near a global minimum is given by $\kappa_{f}(\mathcal S) \coloneqq \sup_{w \in \mathcal S} \| H_f(w) \| / \mu$, where $\mu$ is the \PL-constant (see \cref{sec:theory}).
Then gradient descent requires $\mathcal O (\kappa_{f}(\mathcal S) \log(\frac{1}{\epsilon}))$ iterations to converge to an $\epsilon$-suboptimal point.
For PINNs, the condition number near a solution is often $> 10^{4}$ (\cref{fig:spectral_density_multi_pde_convection_combined}), which leads to slow convergence of first-order methods. 
However, Newton's method and L-BFGS can significantly reduce the condition number (\cref{fig:spectral_density_multi_pde_convection_combined}), which yields faster convergence. 


\al{} combines the best of both first- and second-order/quasi-Newton methods. 
By running Adam first, we avoid saddle points that would attract L-BFGS.
By running L-BFGS after Adam, 
we can reduce the condition number of the problem, which leads to faster local convergence.
\cref{fig:under_optimization_intro} exemplifies this, showing faster convergence of \al{} over Adam on the wave equation.

This intuition also explains why Adam sometimes performs as well as \al{} on the reaction problem.
\cref{fig:spectral_density_multi_pde_convection_combined} shows the largest eigenvalue of the reaction problem is around $10^{3}$, 
while the largest eigenvalues of the convection and wave problems are around $10^{4}$ and $10^{5}$, suggesting the reaction problem is less ill-conditioned.


\section{The Loss is Often Under-optimized}
\label{sec:under_optimized}
In \cref{sec:opt_comparison}, we show that \al{} improves on running Adam or \lbfgs{} alone.
However, even \al{} does not reach a critical point of the loss: the loss is still under-optimized.
We show that the loss and L2RE can be further improved by running a damped version of Newton's method.

\subsection{Why is the Loss Under-optimized?}
\cref{fig:under_optimization} shows the run of \al{} with smallest L2RE for each PDE.
For each run, \lbfgs{} stops making progress before reaching the maximum number of iterations.
\lbfgs{} uses \textit{strong Wolfe line search}, as it is needed to maintain the stability of \lbfgs{} \cite{nocedal2006numerical}.
\lbfgs{} often terminates because it cannot find a positive step size satisfying these conditions---we have observed several instances where \lbfgs{} picks a step size of zero (\cref{fig:line_search_multi_pde} in \cref{sec:under_optimization_additional}), leading to early stopping.
Perversely, \lbfgs{} stops in these cases without reaching a critical point: 
the gradient norm is around $10^{-2}$ or $10^{-3}$ 
(see the bottom row of \cref{fig:under_optimization}).
The gradient still contains useful information for improving the loss.

\subsection{NysNewton-CG (NNCG)}
\label{subsec:NNCG} 
We can avoid premature termination by using a damped version of Newton's method with \textit{Armijo line search}.
The Armijo conditions use only a subset of the strong Wolfe conditions.
Under only Armijo conditions, \lbfgs{} is unstable; we require a different 
approximation to the Hessian ($p\times p$ for a neural net with $p$ parameters) that does not require storing ($\mathcal O(p^2)$) or inverting ($\mathcal O(p^3)$) the Hessian.
Instead, we run a Newton-CG algorithm that solves for the Newton step using preconditioned conjugate gradient (PCG).
This algorithm can be implemented efficiently with Hessian-vector products. These can be computed $\mathcal{O}\left((\nres+\nbc)p\right)$ time \cite{pearlmutter1994fast}.
\cref{sec:loss_landscape} shows that the Hessian is ill-conditioned with fast spectral decay, so CG without preconditioning will converge slowly.
Hence we use Nystr\"{o}mPCG, a PCG method that is designed to solve linear systems with fast spectral decay \cite{frangella2023randomized}.
The resulting algorithm is called NysNewton-CG (abbreviated NNCG); a full description of the algorithm appears in \cref{sec:under_optimization_additional}.

\subsection{Performance of NNCG}
\begin{figure*}
    \centering
    \includegraphics[scale=0.33]{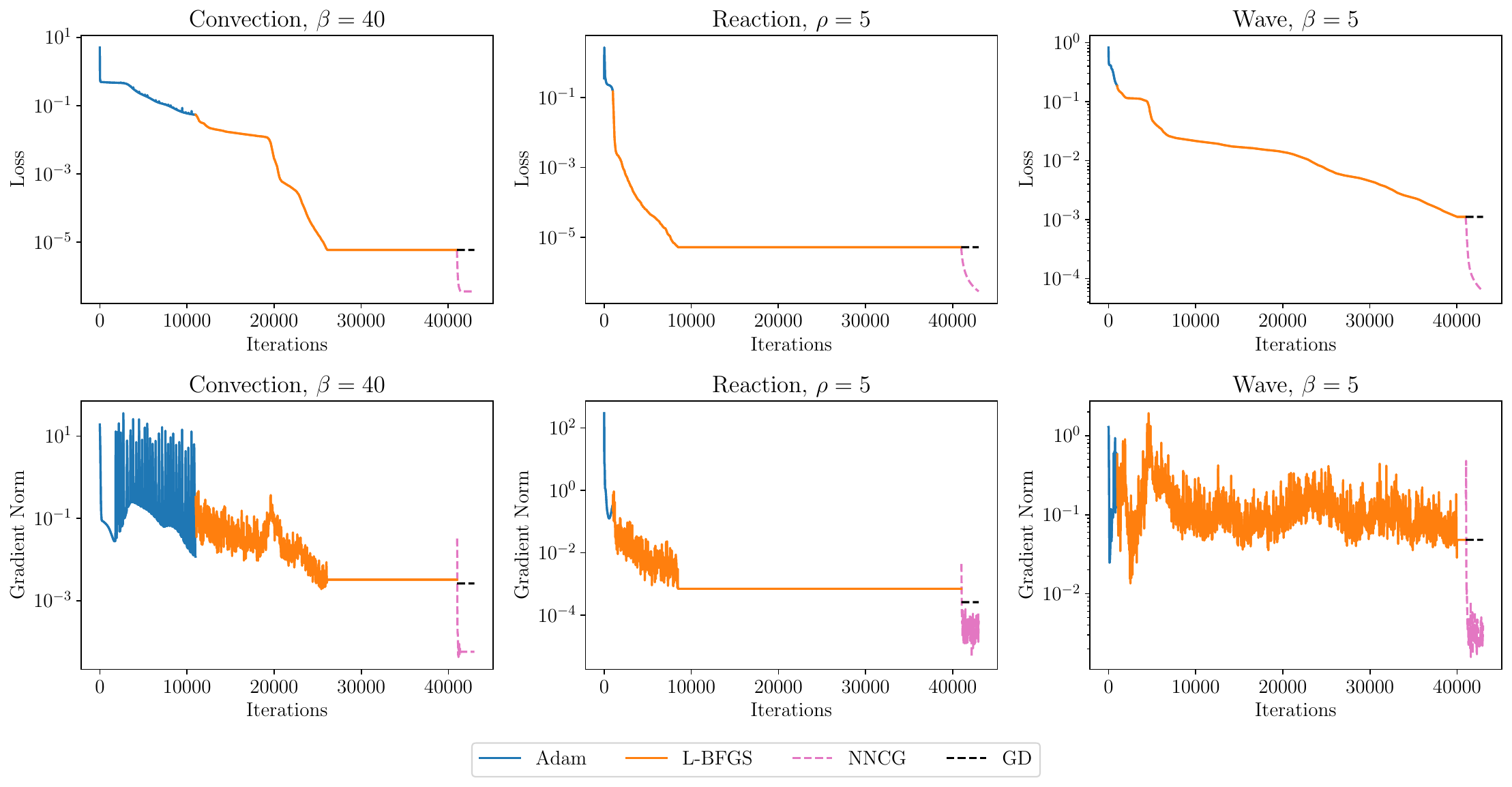}
    \caption{Performance of NNCG and GD after \al. 
    (Top) NNCG reduces the loss by a factor greater than 10 in all instances, while GD fails to make progress. (Bottom) Furthermore, NNCG significantly reduces the gradient norm on the convection and wave problems, while GD fails to do so.}
    \label{fig:under_optimization}
\end{figure*}

\begin{figure*}
    \centering
    \includegraphics[scale=0.33]{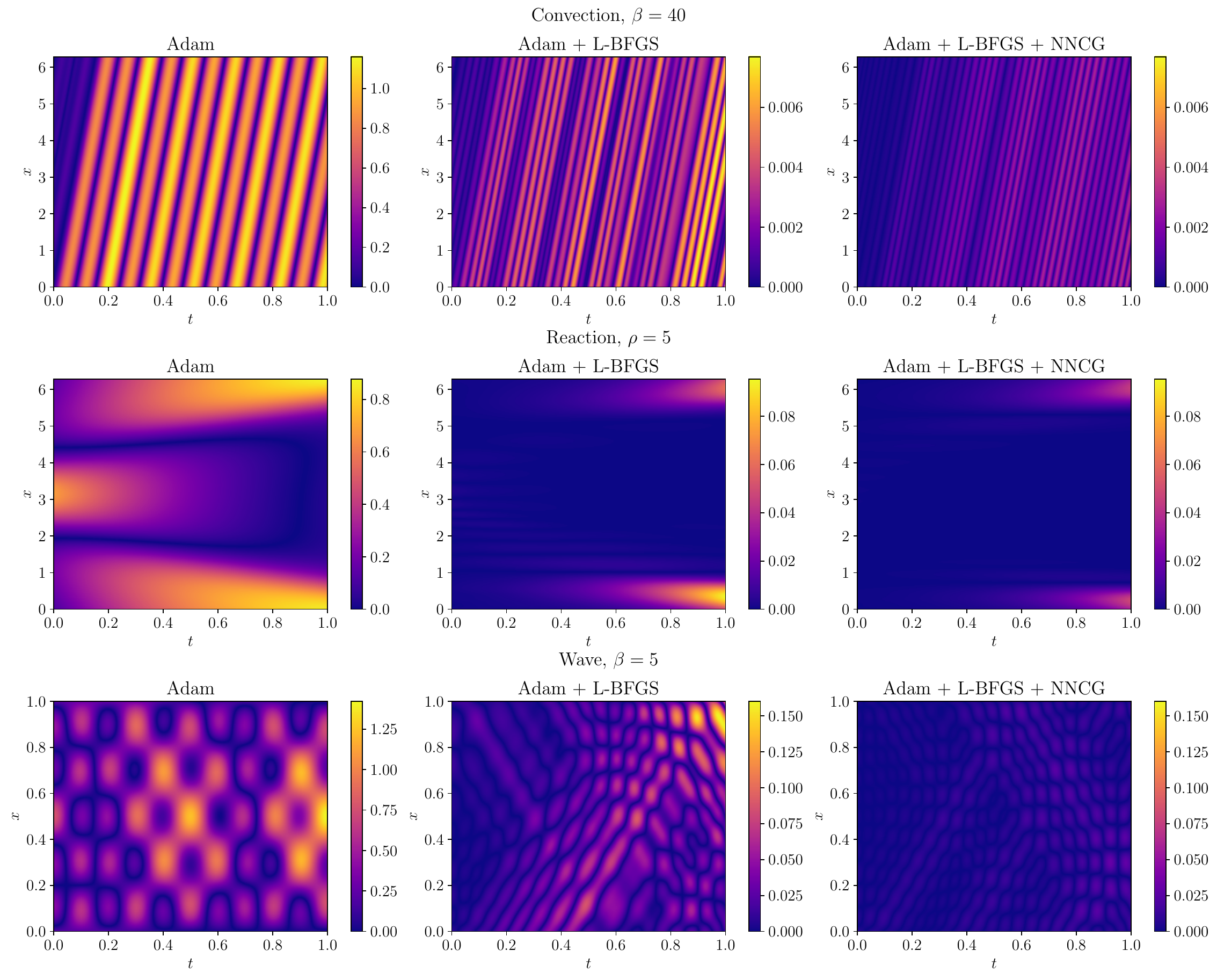}
    \caption{Absolute errors of the PINN solution at optimizer switch points. 
    The first column shows errors after Adam, the second column shows errors after running \lbfgs{} following Adam, and the third column shows the errors after running NNCG folllowing \al{}.
    \lbfgs{} improves the solution obtained from first running Adam, and NNCG further improves the solution even after \al{} stops making progress. 
    Note that Adam solution errors (left-most column) are presented at separate scales as these solutions are far off from the exact solutions. }
    \label{fig:solution_evolutions}
\end{figure*}


\begin{table*}[t]
    \caption{Loss and L2RE after fine-tuning by NNCG and GD. NNCG outperforms both GD and the original \al{} results.}
    \vskip 0.15in
    \centering
    \scriptsize
    \begin{tabular}{|c|c|c|c|c|c|c|c|} 
    \hline 
    \multirow{2}{*}{Optimizer} & \multicolumn{2}{c|}{Convection} & \multicolumn{2}{c|}{Reaction} & \multicolumn{2}{c|}{Wave} \\ \cline{2-7}
                               & Loss & L2RE & Loss & L2RE & Loss & L2RE \\ \hline 
        \al{} & 5.95e-6 & 4.19e-3 & 5.26e-6 & 1.92e-2 & 1.12e-3 & 5.52e-2 \\
        \hline
        \aln{} & \textbf{3.63e-7} & \textbf{1.94e-3} & \textbf{2.89e-7} & \textbf{9.92e-3} & \textbf{6.13e-5} & \textbf{1.27e-2} \\
        \hline
        \alg{} & 5.95e-6 & 4.19e-3 & 5.26e-6 & 1.92e-2 & 1.12e-3 & 5.52e-2 \\
        \hline
    \end{tabular}
    \label{tab:nncg_l2re_improvement}
\end{table*}

\cref{fig:under_optimization} shows that NNCG significantly improves both the loss and gradient norm of the solution when applied after \al{}, while \cref{fig:solution_evolutions} visualizes how NNCG improves the absolute error (pointwise) of the PINN solution when applied after \al{}.
Furthermore, \cref{tab:nncg_l2re_improvement} shows that NNCG also improves the L2RE of the PINN solution.
In contrast, applying gradient descent (GD) after \al{} improves neither the loss nor the L2RE. This result is unsurprising, as our theory predicts that NNCG will work better than GD for an ill-conditioned loss (\cref{sec:theory}). 

\subsection{Why Not Use NNCG Directly After Adam?}
\label{subsec:why_not_nncg}
Since NNCG improves the PINN solution and uses simpler line search conditions than \lbfgs, it is tempting to replace \lbfgs{} with NNCG entirely.
However, NNCG is slower than \lbfgs{}: the \lbfgs{} update can be computed in $\mathcal O(mp)$ time, where $m$ is the memory parameter, while just a single Hessian-vector product for computing the NNCG update requires $\mathcal{O}\left((\nres+\nbc)p\right)$ time. \cref{tab:wall_clock_time_comparison} shows NNCG takes 5, 20, and 322 more times per-iteration as \lbfgs{} on convection, reaction, and wave respectively. 
Consequently, we should run \al{} to make as much progress as possible before switching to NNCG.


\section{Theory}
\label{sec:theory} 

We relate the conditioning of the differential operator to the conditioning of the PINN loss function \eqref{eq:pinn_prob_gen}
in \cref{thm:informal_ill_cond}. 
When the differential operator is ill-conditioned, 
gradient descent takes many iterations to reach a high-precision solution. 
As a result, first-order methods alone may not deliver sufficient accuracy.

\begin{algorithm}[H]
	\centering
    \footnotesize
	\caption{Gradient-Damped Newton Descent (GDND)}
	\label{alg-GDND}
	\begin{algorithmic}
	\INPUT{$\#$ of gradient descent iterations $K_{\textup{GD}}$, gradient descent learning rate $\eta_{\textup{GD}}$, $\#$ of damped Newton iterations $K_{\textup{DN}}$, damped Newton learning rate $\eta_{\textup{DN}}$, damping parameter $\gamma$}
        \STATE{\textbf{Phase I}: \textbf{Gradient descent}} 
        \FOR{$k = 0,\dots,K_{\textup{GD}} - 1$}
            \STATE {$w_{k+1} = w_k-\eta_{\textup{GD}}\nabla L(w_k)$}
        \ENDFOR
        \STATE{\textbf{Phase II}: \textbf{Damped Newton}} 
        \STATE{Set $\tilde w_0 = w_{K_{\textup{GD}}}$}
		\FOR{$k = 0,\dots,K_{\textup{DN}} - 1$}
		  \STATE{$\tilde w_{k+1} = \tilde w_k-\eta_{\textup{DN}}\left(H_L(\tilde w_k)+\gamma I\right)^{-1}\nabla L(\tilde w_k)$}
		\ENDFOR
	\OUTPUT{approximate solution $\tilde w_{K_{\textup{DN}}}$}
	\end{algorithmic}
\end{algorithm}

To address this issue, 
we develop and analyze a hybrid algorithm, 
Gradient Damped Newton Descent (GDND, \cref{alg-GDND}),
that switches from gradient descent to damped Newton's method after a fixed number of iterations.
We show that GDND gives fast linear convergence independent of the condition number.
This theory supports our empirical results, 
which show that the best performance is obtained by running Adam and switching to L-BFGS.
Moreover, it provides a theoretical basis for using \aln{} to achieve the best performance. 

GDND differs from \aln{}, the algorithm we recommend in practice. 
We analyze GD instead of Adam because existing analyses of Adam \cite{defossez2022simple,zhang2022adam} do not mirror its empirical performance.
The reason we run both \lbfgs{} and damped Newton is to maximize computational efficiency (\cref{subsec:why_not_nncg}).

\subsection{Preliminaries}
We begin with the main assumption for our analysis.
\begin{assumption}[Interpolation]
\label{assp:interpolation}
    Let $\Wstar$ denote the set of minimizers of \eqref{eq:pinn_prob_gen}. 
    We assume that
    \[
    L(w_\star) = 0, \quad \text{for all}~w_\star \in \Wstar,
    \]
    i.e., the model perfectly fits the training data.
\end{assumption}
From a theoretical standpoint, 
\cref{assp:interpolation} is natural in light of various universal approximation theorems \cite{cybenko1989approximation,hornik1990universal,de2021approximation}, which show neural networks are capable of approximating any continuous function to arbitrary accuracy.  
Moreover, interpolation in neural networks is common in practice \cite{zhang2021understanding,belkin2021fit}.

\textbf{\PL-condition.}
In modern neural network optimization, the \PL-condition \cite{liu2022loss,liu2023aiming}
is key to showing convergence of gradient-based optimizers.
It is a local version of the celebrated Polyak-\L ojasiewicz condition \cite{polyak1963gradient,karimi2016linear}, specialized to interpolation. 
\begin{definition}[\PL-condition]
\label{def:pl_star}
    Suppose $L$ satisfies \cref{assp:interpolation}. 
    Let $\mathcal S\subset \R^p$. Then $L$ is $\mu$-\PL in $\mathcal S$ if 
    \[
    \frac{\|\nabla L(w)\|^2}{2\mu}\geq L(w), \qquad \forall w \in \mathcal S.
    \]
\end{definition}
The \PL-condition relates the gradient norm to the loss and implies that any minimizer in $\mathcal S$ is a global minimizer. 
Importantly, the \PL-condition can hold for non-convex losses and is known to hold, with high probability, 
for sufficiently wide neural nets with the least-squares loss \cite{liu2022loss}.

\begin{definition}[Condition number for \PL~loss functions]
\label{def:S_cond_num}
    Let $\mathcal S$ be a set for which $L$ is $\mu$-\PL. 
    Then the condition number of $L$ over $\mathcal S$ is given by
    \[
    \kappa_L(\mathcal S) = \frac{\sup_{w\in \mathcal S}\|H_L(w)\|}{\mu},
    \]
    where $H_L(w)$ is the Hessian matrix of the loss function.
\end{definition}
Gradient descent over $\mathcal S$ converges to $\epsilon$-suboptimality in $\mathcal O\left(\kappa_L(\mathcal S)\log\left(\frac{1}{\epsilon}\right)\right)$ iterations \cite{liu2022loss}.

\subsection{Ill-conditioned Differential Operators Lead to Challenging Optimization}
Here, we show that when the differential operator defining the PDE is $\emph{linear}$ and ill-conditioned, 
the condition number of the PINN objective (in the sense of \cref{def:S_cond_num}) is large. 
Our analysis in this regard is inspired by the recent work of \citet{de2023operator}, who prove a similar result for the population PINN residual loss.
However, \citet{de2023operator}'s analysis is based on the \emph{lazy training regime},
which assumes the NTK is approximately constant.
This regime does not accurately capture the behavior of practical neural networks \cite{allen2019can,chizat2019lazy,ghorbani2020neural,ghorbani2021linearized}.
Moreover, gradient descent can converge even with a non-constant NTK \cite{liu2020linearity}.
Our theoretical result is more closely aligned with deep learning practice as 
it does not assume lazy training
and pertains to the empirical loss rather than the population loss.

\cref{thm:informal_ill_cond} provides an informal version of our result in \cref{section:ill-cond-D} that shows that ill-conditioned differential operators induce ill-conditioning in the loss \eqref{eq:pinn_prob_gen}.
The theorem statement involves a kernel integral operator, $\Kc_{\infty}$ (defined in \eqref{eq:kern_op} in \cref{section:ill-cond-D}), evaluated at the optimum $w_\star$.
\begin{theorem}[Informal]
\label{thm:informal_ill_cond}
    Suppose \cref{assp:interpolation} holds and $p\geq \nres+\nbc$.  
    Fix $w_\star \in \Wstar$ and set $\A = \Dc^{*}\Dc$. 
    For some $\alpha>1/2$, suppose the eigenvalues of $\A\circ \Kc_{\infty}(w_\star)$ satisfy $\lambda_j(\mathcal A\circ \Kc_{\infty}(w_\star)) = \mathcal O\left(j^{-2\alpha}\right)$.
    If $\sqrt{n_{\textup{res}}} = \Omega\left(\log\left(\frac{1}{\delta}\right)\right)$, then for any set $\mathcal S$ that contains $w_\star$ and for which $L$ is $\mu$-\PL,
    \[
     \kappa_{L}(\mathcal S) = \Omega\left(n_{\textup{res}}^{\alpha}\right),\qquad \text{with probability} \geq 1-\delta.
    \]
\end{theorem}
\Cref{thm:informal_ill_cond} relates the conditioning of the PINN optimization problem to the conditioning of the operator $\A\circ\Kc_{\infty}(w_\star)$, where $\A$ is the Hermitian square of $\Dc$.
If the spectrum of $\A \circ \Kc_{\infty}(w_\star)$ decays polynomially,
then, with high probability, the condition number grows with $\nres$.
As $\nres$ typically ranges from $10^3$ to $10^4$, \cref{thm:informal_ill_cond} shows the condition number of the PINN problem is generally large, and so first-order methods will be slow to converge to the optimum. 
\cref{fig:condition_number_bound} in \cref{subsec:kappa_grows} empirically verifies the claim of \cref{thm:informal_ill_cond} for the convection equation.

\subsection{Efficient High-precision Solutions via GDND}

We now analyze the  convergence behavior of \cref{alg-GDND}. 
\cref{thm:GDND_informal} provides an informal version of our result in \cref{section:GDND_conv}.

\begin{theorem}[Informal]
\label{thm:GDND_informal}
    Suppose $L(w)$ satisfies the $\mu$-\PL-condition in a certain ball about $w_0$. 
    Then there exists $\eta_{\textup{GD}}>0$ and $K_{\textup{GD}}<\infty$ such that 
    Phase I of \cref{alg-GDND} outputs a point $w_{K_{\textup{GD}}}$, 
    for which Phase II of \cref{alg-GDND} with $\eta_{\textup{DN}} = 5/6$ and appropriate damping $\gamma>0$, satisfies
        \[
        L(\tilde w_{k})\leq \left(\frac{2}{3}\right)^{k}L(w_{K_{\textup{GD}}}).
        \]
        Hence after $K_{\textup{DN}} \geq 3\log\left(\frac{L(w_{K_{\textup{GD}}})}{\epsilon}\right)$ iterations, Phase II of \cref{alg-GDND} outputs a point satisfying
        $
        L(\tilde w_{K_{\textup{DN}}})\leq \epsilon.
        $
\end{theorem}
\cref{thm:GDND_informal} shows only a fixed number of gradient descent iterations are needed before \cref{alg-GDND} can switch to damped Newton's method and enjoy linear convergence independent of the condition number.
As the convergence rate of Phase II with damped Newton is independent of the condition number, 
\cref{alg-GDND} produces a highly accurate solution to \eqref{eq:pinn_prob_gen}.

Note that \cref{thm:GDND_informal} is local; \cref{alg-GDND} must find a point sufficiently close to a minimizer with gradient descent before switching to damped Newton's method and achieving rapid convergence.
It is not possible to develop a second-order method with a fast rate that does not require a good initialization,
as in the worst-case, global convergence of second-order methods may fail to improve over first-order methods \cite{cartis2010complexity,arjevani2019oracle}.
Moreover, \cref{thm:GDND_informal} is consistent with our experiments, which show \lbfgs{} is inferior to \al{}.

\section{Conclusion}
In this work, we explore the challenges posed by the loss landscape of PINNs for gradient-based optimizers.
We demonstrate ill-conditioning in the PINN loss and show it hinders effective training of PINNs.
By comparing Adam, L-BFGS, and \al{}, and introducing NNCG, we have demonstrated several approaches to improve the training process.
Our theory supports our experimental findings: we connect ill-conditioned differential operators to ill-conditioning in the PINN loss and prove the benefits of second-order methods over first-order methods for PINNs.

\section*{Acknowledgements}
We would like to acknowledge helpful comments from the anonymous reviewers and area chairs, which have improved this submission.
MU, PR, WL, and ZF gratefully acknowledge support from the National Science Foundation (NSF) Award IIS-2233762, the Office of Naval Research (ONR) Award N000142212825 and N000142312203, and the Alfred P. Sloan Foundation.
LL gratefully acknowledges support from the U.S. Department of Energy [DE-SC0022953].



\section*{Impact Statement}
This paper presents work whose goal is to advance the field of scientific machine learning. There are many potential societal consequences of our work, none which we feel must be specifically highlighted here.





\bibliography{references}
\bibliographystyle{icml2024}

\newpage
\appendix
\onecolumn
\section{Additional Details on Problem Setup}
\label{sec:problem_setup_additional}
Here we present the differential equations that we study in our experiments.

\subsection{Convection}
The one-dimensional convection problem is a hyperbolic PDE that can be used to model fluid flow, heat transfer, and biological processes.
The convection PDE we study is
\begin{align*}
    \frac{\partial u}{\partial t} + \beta \frac{\partial u}{\partial x} = 0, & \quad x \in (0, 2\pi), t \in (0, 1), \\
    u(x, 0) = \sin(x), & \quad x \in [0, 2\pi], \\
    u(0, t) = u(2 \pi, t), & \quad t \in [0, 1]. 
\end{align*}

The analytical solution to this PDE is $u(x, t) = \sin(x - \beta t)$.
We set $\beta = 40$ in our experiments.

\subsection{Reaction}
The one-dimensional reaction problem is a non-linear ODE which can be used to model chemical reactions.
The reaction ODE we study is
\begin{align*}
    \frac{\partial u}{\partial t} - \rho u (1 - u) = 0, & \quad x \in (0, 2\pi), t \in (0, 1) \\
    u(x, 0) = \exp \left( -\frac{(x - \pi)^2}{2 (\pi / 4)^2} \right), & \quad x \in [0, 2\pi], \\
    u(0, t) = u(2 \pi, t), & \quad t \in [0, 1].
\end{align*}

The analytical solution to this ODE is $u(x, t) = \frac{h(x) e^{\rho t}}{h(x) e^{\rho t} + 1 - h(x)}$, where $h(x) = \exp \left( -\frac{(x - \pi)^2}{2 (\pi / 4)^2} \right)$.
We set $\rho = 5$ in our experiments.



\subsection{Wave}
The one-dimensional wave problem is a hyperbolic PDE that often arises in acoustics, electromagnetism, and fluid dynamics.
The wave PDE we study is
\begin{align*}
    \frac{\partial^2 u}{\partial t^2} - 4 \frac{\partial^2 u}{\partial x^2} = 0, & \quad x \in (0, 1), t \in (0, 1), \\
    u(x, 0) = \sin(\pi x) + \frac{1}{2} \sin(\beta \pi x), & \quad x \in [0, 1], \\
    \frac{\partial u(x, 0)}{\partial t} = 0, & \quad x \in [0, 1], \\
    u(0, t) = u(1, t) = 0, & \quad t \in [0, 1].
\end{align*}

The analytical solution to this PDE is $u(x, t) = \sin(\pi x) \cos(2 \pi t) + \frac{1}{2} \sin(\beta \pi x) \cos(2 \beta \pi t)$.
We set $\beta = 5$ in our experiments.
\section{Why can Low Losses Correspond to Large L2RE?}
\label{sec:low_loss_high_l2re}

\begin{figure*}
    \centering
    \includegraphics[scale=0.33]{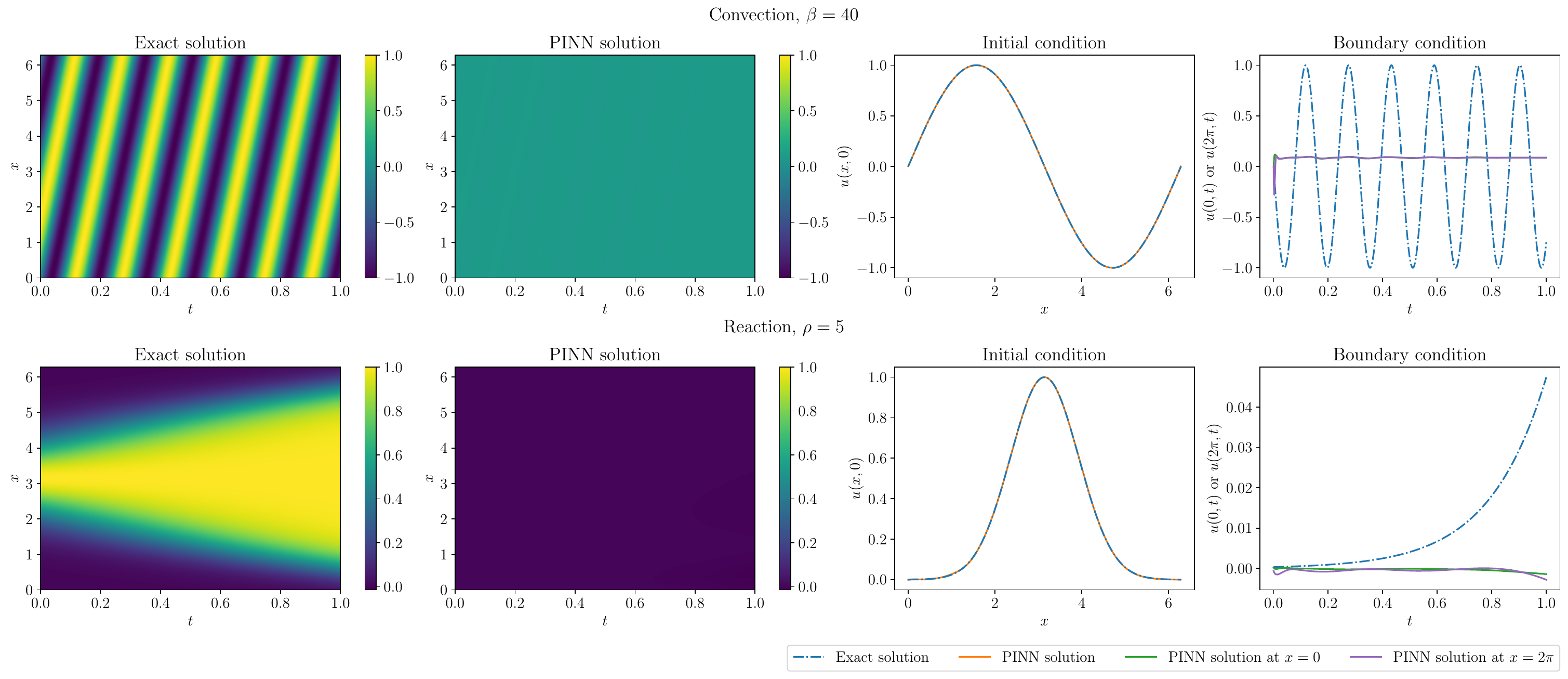}
    \caption{The first two columns from the left display the exact solutions and PINN solutions. 
    The PINN fails to learn the exact solution, which leads to large L2RE.
    Moreover, the PINN solutions are effectively constant over the domain.
    The third and fourth columns from the left display the PINN solutions at the initial time ($t = 0$) and the boundaries ($x = 0$ and $x = 2\pi$). 
    The PINN solutions learn the initial conditions, but they do not learn the boundary conditions.
    }
    \label{fig:high_l2re_solutions}
\end{figure*}

In \cref{fig:l2re_vs_loss}, there are several instances on the convection PDE and reaction ODE where the PINN loss is close to 0, but the L2RE of the PINN solution is close to 1.
\citet{rohrhofer2023on} demonstrate that PINNs can be attracted to points in the loss landscape that minimize the residual portion of the PINN loss, $\frac{1}{2\nres}\sum_{i=1}^{\nres}\left(\Dc[u(x_r^i; w),x_r^i]\right)^2$, to 0.
However, these can correspond to trivial solutions: for the convection PDE, the residual portion is equal to 0 for any constant function $u$; for the reaction ODE, the residual portion is equal to $0$ for constant $u = 0$ or $u = 1$.

To show that the PINN is indeed learning a trivial solution, we visualize two solutions with small residual loss but large L2RE in \cref{fig:high_l2re_solutions}.
The second column of \cref{fig:high_l2re_solutions} shows the PINN solutions are close to 0 almost everywhere in the domain. 
Interestingly, the PINN solutions correctly learn the initial condition.
However, the PINN solutions for the convection PDE and reaction ODE do not match the exact solution at the boundaries.
One approach for alleviating this training issue would be to (adaptively) reweight the residual, initial condition, and boundary condition terms in the PINN loss \cite{wang2021understanding,wang2022when}.

\section{Computing the Spectral Density of the \lbfgs{}-preconditioned Hessian}
\label{sec:lbfgs_spectral_info}

\subsection{How L-BFGS Preconditions} 
\label{subsec:how_lbfgs_preconditions}
To minimize \eqref{eq:pinn_prob_gen}, L-BFGS uses the update
\begin{equation}
\label{eq:l-bfgs}
w_{k+1} = w_k-\eta H_k \nabla L(w_k), 
\end{equation}
where $H_k$ is a matrix approximating the inverse Hessian.
We now show how \eqref{eq:l-bfgs} is equivalent to preconditioning the objective \eqref{eq:pinn_prob_gen}.
Define the coordinate transformation $w = H_k^{1/2}z$. 
By the chain rule, $\nabla L(z) = H_k^{1/2}\nabla L(w)$ and $H_L(z) = H^{1/2}_k H_L(w)H_k^{1/2}$. 
Thus, \eqref{eq:l-bfgs} is equivalent to
\begin{align}
\label{eq:precond}
    & z_{k+1} = z_k-\eta \nabla L(z_k), \\
    & w_{k+1} = H^{1/2}_kz_{k+1}. \nonumber
\end{align}

\cref{eq:precond} reveals how L-BFGS preconditions \eqref{eq:pinn_prob_gen}.
L-BFGS first takes a step in the \emph{preconditioned} $z$-space, where the conditioning is determined by $H_L(z)$, the preconditioned Hessian. 
Since $H_k$ approximates $H_L^{-1}(w)$, $H^{1/2}_k H_L(w) H_k^{1/2} \approx I_p$, so the condition number of $H_L(z)$ is much smaller than that of $H_L(w)$. 
Consequently, L-BFGS can take a step that makes more progress than a method like gradient descent, which performs no preconditioning at all. 
In the second phase, L-BFGS maps the progress in the preconditioned space back to the original space.
Thus, L-BFGS is able to make superior progress by transforming \eqref{eq:pinn_prob_gen} to another space where the conditioning is more favorable, which enables it to compute an update that better reduces the loss in \eqref{eq:pinn_prob_gen}.

\subsection{Preconditioned Spectral Density Computation}
Here we discuss how to compute the spectral density of the Hessian after preconditioning by L-BFGS.
This is the procedure we use to generate the figures in \cref{subsec:lbfgs_improvement}. 

\lbfgs{} stores a set of vector pairs given by the difference in consecutive iterates and gradients from most recent $m$ iterations (we use $m = 100$ in our experiments).
To compute the update direction $H_k \nabla f_k$, \lbfgs{} combines the stored vector pairs with a recursive scheme \cite{nocedal2006numerical}.
Defining
\[
    s_{k} = x_{k+1} - x_{k}, 
    \quad y_k = \nabla f_{k+1} - \nabla f_{k}, 
    \quad \rho_{k} = \frac{1}{y_{k}^{T}s_{k}}, 
    \quad \gamma_{k} = \frac{s_{k-1}^{T}y_{k-1}}{y_{k-1}^{T}y_{k-1}}, 
    \quad V_k = I - \rho_{k} y_{k} s_{k}^{T}, 
    \quad H_k^{0} = \gamma_{k} I,
\]
the formula for $H_k$ can be written as
\[
    H_{k} 
    = (V_{k-1}^{T} V_{k-m}^{T}) H_{k}^{0} (V_{k-m} V_{k-1}) 
    + \sum_{l=2}^{m} \rho_{k-l} (V_{k-1}^{T} \cdots V_{k-l+1}^{T}) s_{k-l} s_{k-l}^{T} (V_{k-l+1} \cdots V_{k-1})
    + \rho_{k-1} s_{k-1} s_{k-1}^{T}.
\]
Expanding the terms, we have for $j \in \{1, 2, \ldots, i\}$,
\[
    V_{k-i} \cdots V_{k-1} = I - \sum_{j=1}^{i} \rho_{k-j} y_{k-j} \tilde{v}_{k-j}^{T}
    \quad \text{where} \quad \tilde{v}_{k-j} = s_{k-j} - \sum_{l=1}^{j-1} (\rho_{k-l} y_{k-l}^{T} s_{k-j}) \tilde{v}_{k-l}.
\]
It follows that
\[
    H_{k} 
    = (I - \tilde{Y}\tilde{V}^{T})^{T} \gamma_{k} I (I - \tilde{Y}\tilde{V}^{T}) + \tilde{S} \tilde{S}^{T}
    = 
    \begin{bmatrix*}
        \sqrt{\gamma_k} (I - \tilde{Y}\tilde{V}^{T})^{T} & \tilde{S}
    \end{bmatrix*}
    \begin{bmatrix*}
        \sqrt{\gamma_k} (I - \tilde{Y}\tilde{V}^{T}) \\ 
        \tilde{S}^{T}.
    \end{bmatrix*}
    = \tilde{H}_k \tilde{H}_k^{T},
\]
where
\[
\begin{aligned}
    & \tilde{Y} = 
    \begin{bmatrix*} 
        \vert & & \vert \\
        \rho_{k-1} y_{k-1} & \cdots & \rho_{k-m} y_{k-m} \\
        \vert & & \vert \\
    \end{bmatrix*}, \\
    & \tilde{V} = 
    \begin{bmatrix*} 
        \vert & & \vert \\
        \tilde{v}_{k-1} & \cdots & \tilde{v}_{k-m} \\
        \vert & & \vert \\
    \end{bmatrix*}, \\
    & \tilde{S} = 
    \begin{bmatrix*} 
        \vert & & \vert \\
        \tilde{s}_{k-1} & \cdots & \tilde{s}_{k-m} \\
        \vert & & \vert \\
    \end{bmatrix*},
    \quad \tilde{s}_{k-1} = \sqrt{\rho_{k-1}} s_{k-1}, 
    ~ \tilde{s}_{k-l} = \sqrt{\rho_{k-l}} (V_{k-1}^{T} \cdots V_{k-l+1}^{T}) s_{k-l} ~ \text{for} ~ 2 \leq l \leq m.
\end{aligned}
\]
We now apply \cref{alg-unrolling-lbfgs} to unroll the above recurrence relations to compute columns of $\tilde Y, \tilde S$ and $\tilde V$. 

\begin{algorithm}[H]
  \centering
  \caption{Unrolling the \lbfgs{} Update}
  \label{alg-unrolling-lbfgs}
  \begin{algorithmic}
  \INPUT{saved directions $\{y_i\}_{i=k-1}^{k-m}$, saved steps $\{s_i\}_{i=k-1}^{k-m}$, saved inverse of inner products $\{\rho_i\}_{i=k-1}^{k-m}$}
    \STATE {$\tilde{y}_{k-1} = \rho_{k-1} y_{k-1}$}
    \STATE {$\tilde{v}_{k-1} = s_{k-1}$}
    \STATE {$\tilde{s}_{k-1} = \sqrt{\rho_{k-1}} s_{k-1}$}
    \FOR{$i = k-2, \dots, k-m$}
        \STATE {$\tilde{y}_i = \rho_i y_i$}
        \STATE {Set $\alpha = 0$}
        \FOR{$j = k-1, \dots, i+1$}
          \STATE {$\alpha = \alpha + (\tilde{y}_j^{T} s_i) \tilde{v}_j$}
        \ENDFOR
        \STATE {$\tilde{v}_i = s_i - \alpha$}
        \STATE {$\tilde{s}_i = \sqrt{\rho_i} (s_i - \alpha)$}
    \ENDFOR
  \OUTPUT{vectors $\{\tilde{y}_i, \tilde{v}_i, \tilde{s}_i\}_{i=k-1}^{k-m}$}
  \end{algorithmic}
\end{algorithm}

Since (non-zero) eigenvalues of $\tilde{H}_{k}^{T} H_L(w)\tilde{H}_{k}$ equal the eigenvalues of the preconditioned Hessian $H_{k} H_L(w) = \tilde{H}_k \tilde{H}_k^{T} H_L(w)$ (Theorem 1.3.22 of \citet{horn2012matrix}), we can analyze the spectrum of $\tilde{H}_{k}^{T}H_L(w)\tilde{H}_{k}$ instead.
This is advantageous since methods for calculating the spectral density of neural network Hessians are only compatible with symmetric matrices.

Since $\tilde{H}_{k}^{T} H_L(w)\tilde{H}_{k}$ is symmetric, we can use stochastic Lanczos quadrature (SLQ) \cite{golub2009matrices,lin2016approximating} to compute spectral density of this matrix.
SLQ only requires matrix-vector products with $\tilde H_k$ and Hessian-vector products, the latter of which may be efficiently computed via automatic differentiation; this is precisely what PyHessian does to compute spectral densities \cite{yao2020pyhessian}.

\begin{algorithm}[H]
  \centering
  \caption{Performing matrix-vector product}
  \label{alg-mvp}
  \begin{algorithmic}
  \INPUT{matrices $\tilde{Y}$, $\tilde{V}$, $\tilde{S}$ formed from resulting vectors from unrolling, vector $v$, and saved scaling factor for initializing diagonal matrix $\gamma_k$}
    \STATE {Split vector $v$ of length $\mathrm{size}(w) + m$ into $v_1$ of size $\mathrm{size}(w)$ and $v_2$ of size $m$}
    \STATE {$v' = \sqrt{\gamma_k}(v_1 - \tilde{V}\tilde{Y}^{T}v_1) + \tilde{S} v_2$}
    \STATE {Perform Hessian-vector-product on $v'$, and obtain $v''$}
    \STATE {Stack $\sqrt{\gamma_k}(v'' - \tilde{Y}\tilde{V}^{T}v'')$ and $\tilde{S}^{T}v''$, and obtain $v'''$}
  \OUTPUT{resulting vector $v'''$}
  \end{algorithmic}
\end{algorithm}

By combining the matrix-vector product procedure described in \cref{alg-mvp} with the Hessian-vector product operation, we are able to obtain spectral information of the preconditioned Hessian. 

\begin{figure*}
    \centering
    \includegraphics[trim={0 1.25cm 0 0}, clip, scale=0.45]{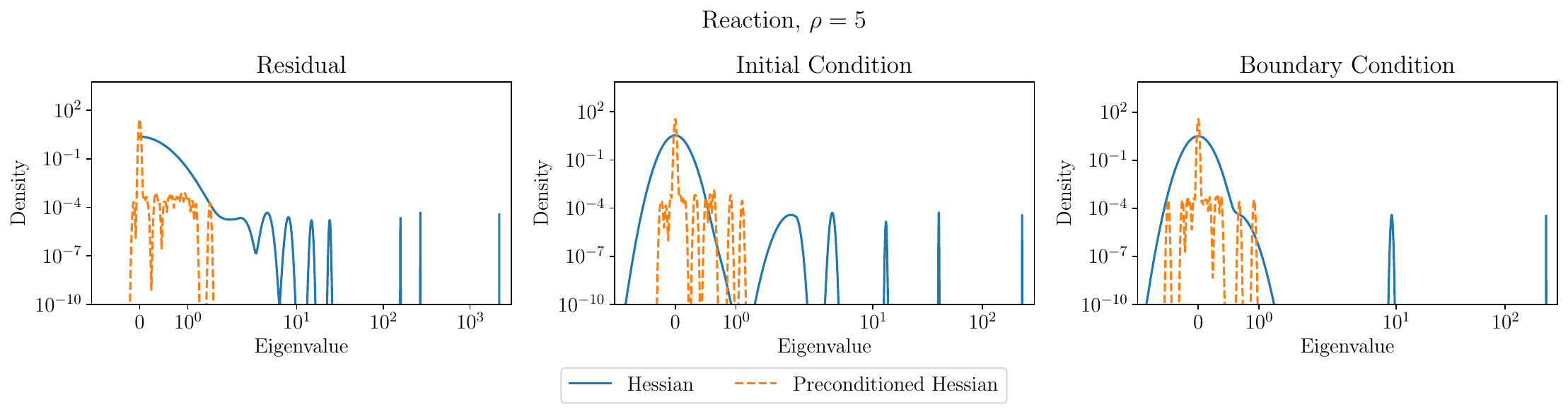}
    \includegraphics[scale=0.45]{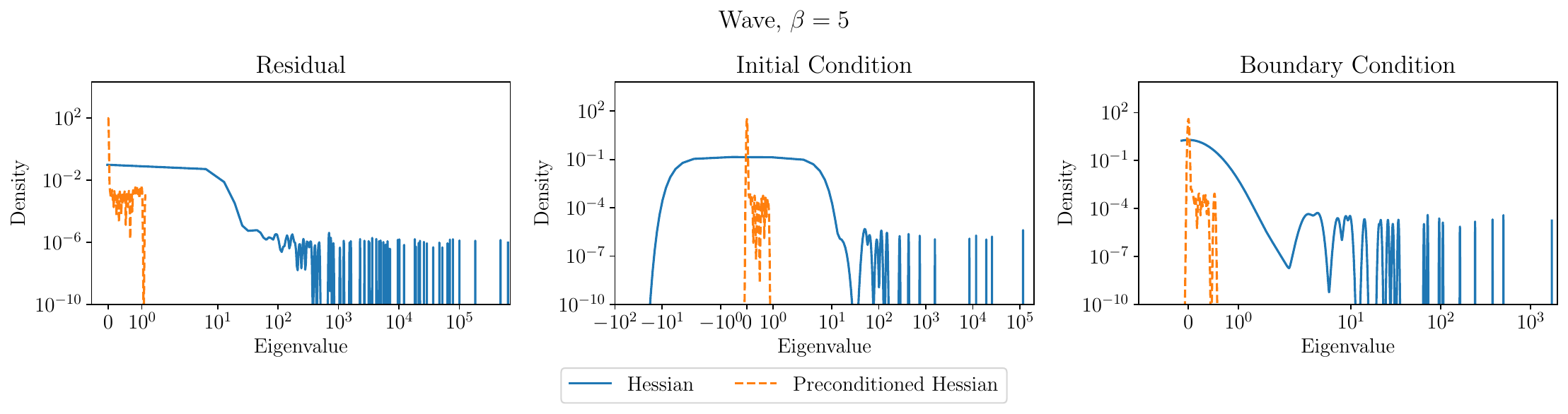}
    \caption{Spectral density of the Hessian and the preconditioned Hessian of each loss component after 41000 iterations of \al{} for the reaction and wave problems. The plots show the loss landscape of each component is ill-conditioned, and the conditioning of each loss component is improved by \lbfgs{}.}
    \label{fig:spectral_density_reaction_wave}
\end{figure*}
\section{\al{} Generally Gives the Best Performance}
\label{sec:opt_comparison_additional}
\cref{fig:opt_comparison} shows that \al{} typically yields the best performance on both loss and L2RE across network widths. 
\begin{figure*}
    \centering
    \includegraphics[scale=0.4]{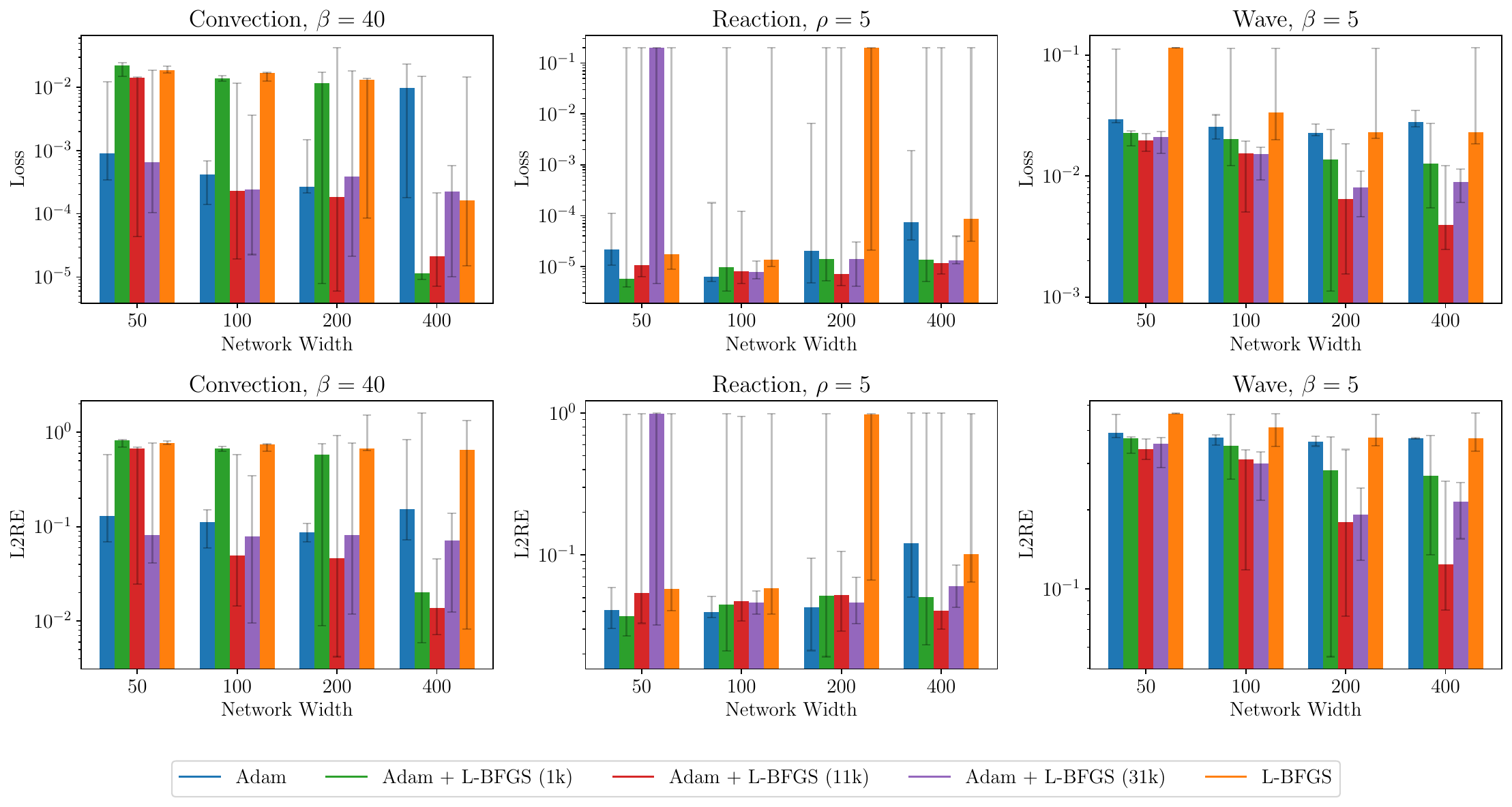}
    \caption{Performance of Adam, \lbfgs, and \al{} after tuning. 
    We find the learning rate $\eta^\star$ for each network width and optimization strategy that attains the lowest loss (L2RE) across all random seeds.
    The min, median, and max loss (L2RE) are calculated by taking the min, median, and max of the losses (L2REs) for learning rate $\eta^*$ across all random seeds.
    Each bar on the plot corresponds to the median, while the top and bottom error bars correspond to the max and min, respectively.
    The smallest min loss and L2RE are always attained by one of the \al{} strategies; the smallest median loss and L2RE are nearly always attained by one of the \al{} strategies.}
    \label{fig:opt_comparison}
\end{figure*}
\section{Additional details on Under-optimization}
\label{sec:under_optimization_additional}

\subsection{Early Termination of \lbfgs{}}
\cref{fig:line_search_multi_pde} explains why \lbfgs{} terminates early for the convection, reaction, and wave problems.
We evaluate the loss at $10^{4}$ uniformly spaced points in the interval $[0, 1]$.
The orange stars in \cref{fig:line_search_multi_pde} are step sizes that satisfy the strong Wolfe conditions and the red dots are step sizes that \lbfgs{} examines during the line search.

\begin{figure}
    \centering
    \includegraphics[scale=0.45]{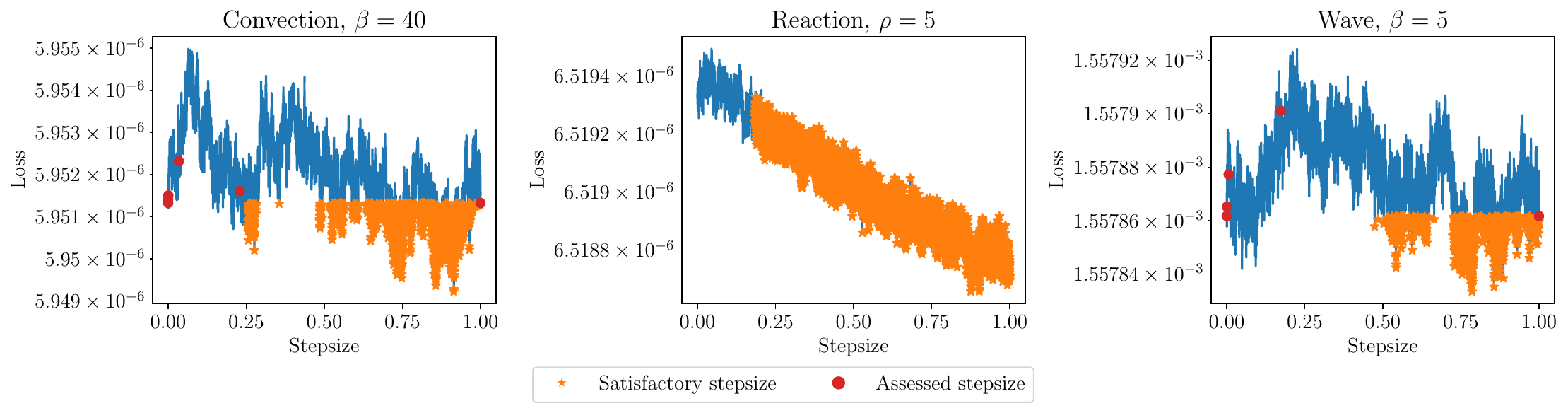}
    \caption{Loss evaluated along the \lbfgs{} search direction at different stepsizes after 41000 iterations of \al{}. For convection and wave, the line search does not find a stepsize that satisfies the strong Wolfe conditions, even though there are plenty of such points. For reaction, the slope of the objective used in the line search procedure at the current iterate is less than a pre-defined threshold $10^{-9}$, so \lbfgs{} terminates without performing any line-search.}
    \label{fig:line_search_multi_pde}
\end{figure}

\subsection{NysNewton-CG (NNCG)}
Here we present the NNCG algorithm (\cref{alg-NNCG}) introduced in \cref{subsec:NNCG} and its associated subroutines RandomizedNystr{\"o}mApproximation (\cref{alg-RNA}), Nystr\"{o}mPCG (\cref{alg-nyspcg}), and Armijo (\cref{alg-armijo}).
At each iteration, NNCG first checks whether the Nystr\"{o}m preconditioner (stored in $U$ and  $\hat{\Lambda}$) for the Nystr\"{o}mPCG method needs to be updated.
If so, the preconditioner is recomputed using the RandomizedNystr{\"o}mApproximation subroutine.
From here, the Newton step $d_k$ is computed using Nystr\"{o}mPCG; we warm start the PCG algorithm using the Newton step $d_{k - 1}$ from the previous iteration.
After computing the Newton step, we compute the step size $\eta_k$ using Armijo line search --- this guarantees that the loss will decrease when we update the parameters.
Finally, we update the parameters using $\eta_k$ and $d_k$.

In our experiments, we set $\eta = 1, K = 2000, s = 60, F = 20, \epsilon = 10^{-16}, M = 1000, \alpha = 0.1$, and $\beta = 0.5$.
We tune $\mu \in [10^{-5}, 10^{-4}, 10^{-3}, 10^{-2}, 10^{-1}]$; we find that $\mu = 10^{-2}, 10^{-1}$ work best in practice.
\cref{fig:under_optimization_intro,fig:under_optimization} show the NNCG run that attains the lowest loss after tuning $\mu$.

\begin{algorithm}[H]
	\centering
	\caption{NysNewton-CG (NNCG)}
	\label{alg-NNCG}
	\begin{algorithmic}
	\INPUT{Initialization $w_0$, max. learning rate $\eta$, number of iterations $K$, preconditioner sketch size $s$, preconditioner update frequency $F$, damping parameter $\mu$, CG tolerance $\epsilon$, CG max. iterations $M$, backtracking parameters $\alpha, \beta$}
    \STATE{$d_{-1} = 0$}
    \FOR{$k = 0, \dots, K - 1$}
        \IF{$k$ is a multiple of $F$} 
            \STATE{$[U, \hat{\Lambda}] = \textrm{RandomizedNystr{\"o}mApproximation}(H_{L}(w_k), s)$} \COMMENT{Update Nystr{\"o}m preconditioner every $F$ iterations}
        \ENDIF
        \STATE{$d_k = \textrm{Nystr{\"o}mPCG}(H_{L}(w_k), \nabla L(w_k), d_{k - 1}, U, \hat{\Lambda}, s, \mu, \epsilon, M)$} \COMMENT{Damped Newton step $(H_L(w_k) + \mu I)^{-1} \nabla L(w_k)$}
        \STATE{$\eta_k = \textrm{Armijo}(L, w_k, \nabla L(w_k), -d_k, \eta)$} \COMMENT{Compute step size via line search}
        \STATE{$w_{k+1} = w_k - \eta_k d_k$} \COMMENT{Update parameters}
    \ENDFOR
	\end{algorithmic}
\end{algorithm}

The RandomizedNystr{\"o}mApproximation subroutine (\cref{alg-RNA}) is used in NNCG to compute the preconditioner for Nystr\"{o}mPCG.
The algorithm returns the top-$s$ approximate eigenvectors and eigenvalues of the input matrix $M$.
Within NNCG, the sketch computation $Y = MQ$ is implemented using Hessian-vector products.
The portion in red is a fail-safe that allows for the preconditioner to be computed when $H$ is an indefinite matrix.
For further details, please see \citet{frangella2023randomized}.

\begin{algorithm}[H] 
   \caption{RandomizedNystr{\"o}mApproximation}
   \label{alg-RNA}
    \begin{algorithmic}
       \INPUT{Symmetric matrix $M$, sketch size $s$}
       \STATE{$S = \textrm{randn}(p, s)$} \COMMENT{Generate test matrix}
       \STATE{$Q = \textrm{qr\_econ}(S)$}
       \STATE{$Y = M Q$} \COMMENT{Compute sketch}
       \STATE $\nu = \sqrt{p} \text{eps}(\text{norm}(Y, 2))$ \hfill \COMMENT{Compute shift}
       \STATE $Y_{\nu} = Y + \nu Q$ \hfill \COMMENT{Add shift for stability}
       \STATE $\lambda = 0$ \hfill \COMMENT{Additional shift may be required for positive definiteness}
       \STATE $C = \text{chol}(Q^TY_\nu)$ \hfill \COMMENT{Cholesky decomposition: $C^{T}C = Q^{T}Y_\nu$}
       \textcolor{red}{
       \IF {chol fails}
        \STATE Compute $[W, \Gamma] = \mathrm{eig}(Q^T Y_\nu)$ \hfill \COMMENT{$Q^T Y_\nu$ is small and square}
        \STATE Set $\lambda = \lambda_{\min}(Q^T Y_\nu)$
        \STATE $R = W(\Gamma + |\lambda| I)^{-1/2} W^T$
        \STATE $B = YR$ \hfill \COMMENT{$R$ is psd}
       \ELSE
        \STATE $B = YC^{-1}$ \hfill \COMMENT{Triangular solve}
       \ENDIF
       }
       \STATE $[\hat V, \Sigma, \sim] = \text{svd}(B, 0)$ \hfill \COMMENT{Thin SVD}
       \STATE $\hat \Lambda = \text{max}\{0, \Sigma^2 - (\nu + |\lambda| I)\}$ \hfill \COMMENT{Compute eigs, and remove shift with element-wise max}
       \STATE {\bfseries Return:} $\hat V, \hat \Lambda$
    \end{algorithmic}
\end{algorithm} 

The Nystr\"{o}mPCG subroutine (\cref{alg-nyspcg}) is used in NNCG to compute the damped Newton step.
The preconditioner $P$ and its inverse $P^{-1}$ are given by 
\begin{align*}
    P &= \frac{1}{\hat{\lambda}_s + \mu} U (\hat{\Lambda} + \mu I) U^T + (I - UU^T) \\
    P^{-1} &= (\hat{\lambda}_s + \mu) U (\hat{\Lambda} + \mu I)^{-1} U^T + (I - UU^T).
\end{align*}
Within NNCG, the matrix-vector product involving the Hessian (i.e., $A = H_L(w_k)$) is implemented using Hessian-vector products.
For further details, please see \citet{frangella2023randomized}.

\begin{algorithm}[H] 
   \caption{Nystr\"{o}mPCG}
   \label{alg-nyspcg}
    \begin{algorithmic}
       \INPUT{Psd matrix $A$, right-hand side $b$, initial guess $x_0$, approx. eigenvectors $U$, approx. eigenvalues $\hat{\Lambda}$, sketch size $s$, damping parameter $\mu$, CG tolerance $\epsilon$, CG max. iterations $M$}
       \STATE{$r_0 = b - (A + \mu I) x_0$}
       \STATE{$z_0 = P^{-1} r_0$}
       \STATE{$p_0 = z_0$}
       \STATE{$k = 0$} \COMMENT{Iteration counter}
       \WHILE{$\|r_0\|_2 \geq \eps$ and $k < M$}
           \STATE{$v = (A + \mu I) p_0$}
           \STATE{$\alpha = (r_0^T z_0) / (p_0^T v_0)$} \COMMENT{Compute step size}
           \STATE{$x = x_0 + \alpha p_0$} \COMMENT{Update solution}
           \STATE{$r = r_0 - \alpha v$} \COMMENT{Update residual}
           \STATE{$z = P^{-1}r$}
           \STATE{$\beta = (r^T z) / (r_0^T z_0)$}
           \STATE{$x_0 \gets x, r_0 \gets r, p_0 \gets z + \beta p_0, z_0 \gets z, k \gets k + 1$}
       \ENDWHILE
       \STATE {\bfseries Return:} $x$
    \end{algorithmic}
\end{algorithm} 

The Armijo subroutine (\cref{alg-armijo}) is used in NNCG to guarantee that the loss decreases at every iteration.
The function oracle is implemented in PyTorch using a \textit{closure}.
At each iteration, the subroutine checks whether the \textit{sufficient decrease condition} has been met; if not, it shrinks the step size by a factor of $\beta$.
For further details, please see \citet{nocedal2006numerical}.

\begin{algorithm}[H] 
   \caption{Armijo}
   \label{alg-armijo}
    \begin{algorithmic}
       \INPUT{Function oracle $f$, current iterate $x$, current gradient $\nabla f(x)$, search direction $d$, initial step size $t$, backtracking parameters $\alpha, \beta$}
       \WHILE{$f(x + td) > f(x) + \alpha t (\nabla f(x)^T d)$} 
        \STATE{$t \gets \beta t$} \COMMENT{Shrink step size}
       \ENDWHILE
       \STATE {\bfseries Return:} $t$
    \end{algorithmic}
\end{algorithm} 

\subsection{Wall-clock Times for \lbfgs{} and NNCG}

\begin{table}[t]
    \caption{Per-iteration times (in seconds) of \lbfgs{} and NNCG on each PDE.}
    \vskip 0.15in
    \centering
    \begin{tabular}{|c|c|c|c|}
        \hline
        Optimizer & Convection & Reaction & Wave \\
        \hline
        \lbfgs{} & 4.6e-2 & 3.6e-2 & 9.0e-2 \\
        \hline
        NNCG & 2.5e-1 & 7.2e-1 & 2.9e1 \\
        \hline
        Time Ratio & 5.43 & 20 & 322.22 \\
        \hline
    \end{tabular}
    \label{tab:wall_clock_time_comparison}
\end{table}

\cref{tab:wall_clock_time_comparison} summarizes the per-iteration wall-clock times of \lbfgs{} and NNCG on each PDE. The large gap on wave (compared to reaction and convection) is because NNCG has to compute hessian-vector products involving second derivatives, while this is not the case for the two other PDEs. 


\section{Ill-conditioned Differential Operators Lead to Difficult Optimization Problems}
In this section, we state and prove the formal version of \cref{thm:informal_ill_cond}.
The overall structure of the proof is based on showing the conditioning of the Gauss-Newton matrix of the population PINN loss is controlled by the conditioning of the differential operator.
We then show the empirical Gauss-Newton matrix is close to its population counterpart by using matrix concentration techniques. 
Finally, as the conditioning of $H_L$ at a minimizer is controlled by the empirical Gauss-Newton matrix, we obtain the desired result. 

\label{section:ill-cond-D}
\subsection{Preliminaries}
Similar to \citet{de2023operator}, we consider a general linear PDE with Dirichlet boundary conditions:
\[
\begin{array}{ll}
    & \Dc[u](x) = f(x),\quad x\in \Omega, \\
    & u(x) = g(x), \quad x\in \partial \Omega,
\end{array}
\]
where $u: \R^d \mapsto \R$, $f:\R^d \mapsto \R$ and $\Omega$ is a bounded subset of $\R^d$.
The ``population'' PINN objective for this PDE is
\[
L_\infty(w) = \frac{1}{2}\int_{\Omega}\left(\Dc[u(x;w)]-f(x)\right)^2d\mu(x)+\frac{\lambda}{2} \int_{\partial \Omega}\left(u(x; w)-g(x)\right)^2d\sigma(x).
\]
$\lambda$ can be any positive real number; we set $\lambda = 1$ in our experiments.
Here $\mu$ and $\sigma$ are probability measures on $\Omega$ and $\partial \Omega$ respectively, from which the data is sampled. 
The empirical PINN objective is given by
\[
L(w) = \frac{1}{2\nres}\sum_{i=1}^{\nres}\left(\Dc[u(x^i_r;w)]-f(x_i)\right)^2+\frac{\lambda}{2\nbc}\sum_{j=1}^{\nbc}\left(u(x_b^j;w)-g(x_j)\right)^2.
\]
Moreover, throughout this section we use the notation $\langle f,g\rangle_{L^{2}(\Omega)}$ to denote the standard $L^2$-inner product on $\Omega$:
\[
\langle f,g\rangle_{L^{2}(\Omega)} = \int_{\Omega}fg d\mu(x).
\]

\begin{lemma}
    The Hessian of the $L_\infty(w)$ is given by
    \begin{align*}
    H_{L_\infty}(w) & = \int_{\Omega}\Dc[\nabla_w u(x;w)]\Dc[\nabla_w u(x;w)]^{T}d\mu(x)+\int_{\Omega}\Dc[\nabla^2_w u(x;w)]\left(\Dc[\nabla_w u(x;w)]-f(x)\right)d\mu(x)\\
    & + \lambda\int_{\partial \Omega}\nabla_w u(x; w)\nabla_w u(x; w)^{T}d\sigma(x) + \lambda\int_{\partial \Omega}\nabla^2_w u(x; w)\left(u(x;w)-g(x)\right)d\sigma(x). 
    \end{align*}
    The Hessian of $L(w)$ is given by
    \begin{align}
       H_L(w) & = \frac{1}{n_{\textup{res}}}\sum^{n_{\textup{res}}}_{i=1}\Dc[\nabla_w u(x_r^i; w)]\Dc[\nabla_w u(x_r^i; w)]^{T}+\frac{1}{n_{\textup{res}}}\sum^{n_{\textup{res}}}_{i=1}\Dc[\nabla^2_w u(x^r_i;w)]\left(\Dc[\nabla_w u(x_r^i;w)]-f(x_r^i)\right)\\
       & +\frac{\lambda}{n_{\textup{bc}}}\sum_{j=1}^{n_{\textup{bc}}}\nabla_w u(x_b^j;w)\nabla_w u(x_b^j;w)^{T} + \frac{\lambda}{n_{\textup{bc}}}\sum^{n_{\textup{bc}}}_{j=1}\nabla^2_w u(x_b^j;w)\left(u(x_b^j;w)-g(x_j)\right). \nonumber
    \end{align}
    In particular, for $w_\star\in \Wstar$
    \begin{align*}
        H_L(w_\star) = G_r(w)+ G_b(w).
    \end{align*}
    Here 
    \[
    G_r(w) \coloneqq \frac{1}{n_{\textup{res}}}\sum^{n_{\textup{res}}}_{i=1}\Dc[\nabla_w u(x_i; w_\star)]\Dc[\nabla_w u(x_i; w_\star)]^{T},\quad G_b(w) = \frac{\lambda}{n_{\textup{bc}}}\sum_{j=1}^{n_{\textup{bc}}}\nabla_w u(x_b^j;w_\star)\nabla_w u(x_b^j;w_\star)^{T}.
    \]
\end{lemma}
Define the maps $\F_{\textup{res}}(w) = \begin{bmatrix}
    \Dc[u(x_r^1;w)] \\
    \vdots \\
    \Dc[u(x_r^{\nres};w)]
\end{bmatrix}$,
and $\F_{\textup{bc}}(w) = \begin{bmatrix}
    u(x_b^1;w) \\
    \vdots \\
    u(x_b^{\nbc};w)]
\end{bmatrix}$.
We have the following important lemma, which follows via routine calculation. 
\begin{lemma}
\label{lemma:jac_ntk}
    Let $n = \nres+\nbc$. Define the map $\mathcal F:\R^p\rightarrow \R^{n}$, by stacking $\F_{\textup{res}}(w), \F_{\textup{bc}}(w)$.
    Then, the Jacobian of $\F$ is given by
    \[
    J_\F(w) = \begin{bmatrix}
        J_{\F_{\textup{res}}}(w) \\
        J_{\F_{\textup{bc}}}(w).
    \end{bmatrix}
    \]
    Moreover, the tangent kernel $K_\F(w) = J_\F(w)J_\F(w)^{T}$ is given by
    \[ K_\F(w) = 
    \begin{bmatrix}
        J_{\F_{\textup{res}}}(w)J_{\F_{\textup{res}}}(w)^{T} & J_{\F_{\textup{res}}}(w)J_{\F_{\textup{bc}}}(w)^{T}  \\
        J_{\F_{\textup{bc}}}(w)J_{\F_{\textup{res}}}(w)^{T} & J_{\F_{\textup{bc}}}(w)J_{\F_{\textup{bc}}}(w)^{T} 
    \end{bmatrix} =
    \begin{bmatrix}
        K_{\F_{\textup{res}}}(w) & J_{\F_{\textup{res}}}(w)J_{\F_{\textup{bc}}}(w)^{T}  \\
        J_{\F_{\textup{bc}}}(w)J_{\F_{\textup{res}}}(w)^{T} & K_{\F_{\textup{bc}}}(w) 
    \end{bmatrix}.
    \]
\end{lemma}

\subsection{Relating $G_{\infty}(w)$ to $\mathcal D$}
Isolate the population Gauss-Newton matrix for the residual:
\[
G_{\infty}(w) = \int_{\Omega}\Dc[\nabla_w u(x;w)]\Dc[\nabla_w u(x;w)]^{T}d\mu(x).
\]
Analogous to \citet{de2023operator} we define the functions $\phi_i(x;w) = \partial_{w_i}u(x;w)$ for $i\in\{1\dots,p\}$.
From this and the definition of $G_{\infty}(w)$, it follows that $\left(G_\infty(w)\right)_{ij} = \langle \Dc[\phi_i], \Dc[\phi_j]\rangle_{L^2(\Omega)}$.

Similar to \citet{de2023operator} we can associate each $w\in\R^p$ with a space of functions $\mathcal H(w) = \textup{span}\left(\phi_1(x;w),\dots,\phi_p(x;w)\right)\subset L^2(\Omega).$
We also define two linear maps associated with $\mathcal H(w)$:
\[
T(w)v = \sum_{i=1}^{p}v_i\phi_i(x;w),
\]
\[
T^{*}(w)f = \left(\langle f,\phi_1\rangle_{L^2(\Omega)},\dots,\langle f,\phi_p\rangle_{L^2(\Omega)}\right).
\]
From these definitions, we establish the following lemma. 
\begin{lemma}[Characterizing $G_{\infty}(w)$]
\label{lemma:Pop-GN}
Define $\mathcal A = \Dc^{*}\Dc$. 
Then the matrix $G_{\infty}(w)$ satisfies
    \[
    G_{\infty}(w) = T^{*}(w)\mathcal A T(w). 
    \]
\end{lemma}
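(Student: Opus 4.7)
The plan is to verify the claimed factorization entrywise. Since $T$ and $T^{*}$ are linear maps between $\R^p$ and $L^2(\Omega)$, the matrix $T^{*}\A T$ is determined by its entries $e_i^{\top} T^{*} \A T e_j$, and I only need to match these against the integral formula
\[
\bigl(G_{\infty}(w)\bigr)_{ij} = \langle \Dc[\phi_i], \Dc[\phi_j] \rangle_{L^2(\Omega)}
\]
already recorded in the excerpt.

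First I would unwind the action of $T$ on the standard basis: by the definition of $T$, $T e_j = \phi_j(\cdot;w) \in \mathcal H \subset L^2(\Omega)$. Applying $\A = \Dc^{*} \Dc$ produces the $L^2$ element $\Dc^{*}\Dc[\phi_j]$. Then using the definition of $T^{*}$, the $i$-th coordinate of $T^{*}\A T e_j$ is
\[
\langle \Dc^{*}\Dc[\phi_j], \phi_i \rangle_{L^2(\Omega)}.
\]

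Next I would invoke the defining property of the adjoint $\Dc^{*}$ to move $\Dc$ across the inner product, yielding $\langle \Dc[\phi_j], \Dc[\phi_i] \rangle_{L^2(\Omega)}$. This is exactly $(G_{\infty}(w))_{ij}$, so $T^{*}(w)\A T(w) = G_{\infty}(w)$ as claimed.

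The algebra is a one-line computation; the only real subtlety is the functional-analytic setup. I need each $\phi_i(\cdot;w)$ to lie in the domain of $\Dc$ (so that $\Dc[\phi_i] \in L^2(\Omega)$ and the Gauss–Newton integral is well defined) and I need $\Dc^{*}$ to satisfy the adjoint relation $\langle \Dc^{*} f, g \rangle = \langle f, \Dc g \rangle$ on the relevant subspace. For a linear differential operator $\Dc$ with smooth coefficients and a sufficiently regular network parameterization, both requirements are standard, so I would record them as a mild regularity assumption on $u(\cdot;w)$ and then present the computation above.
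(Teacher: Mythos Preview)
Your proposal is correct and matches the paper's proof: both verify the identity entrywise by writing $Te_j=\phi_j$, applying $\A=\Dc^{*}\Dc$, and using the adjoint relation to recover $\langle \Dc[\phi_i],\Dc[\phi_j]\rangle_{L^2(\Omega)}$. The only cosmetic difference is that the paper starts from $(G_\infty)_{ij}$ and works toward $\langle e_i,(T^{*}\Dc^{*}\Dc T)e_j\rangle$, whereas you go in the opposite direction.
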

\begin{proof}
Let $e_i$ and $e_j$ denote the $ith$ and $jth$ standard basis vectors in $\R^p$. 
Then,
\begin{align*}
(G_{\infty}(w))_{ij} &= \langle \Dc[\phi_i](w), \Dc[\phi_j](w)\rangle_{L^2(\Omega)} = \langle \phi_i(w),\Dc^{*}\Dc[\phi_j(w)] \rangle_{L^2(\Omega)} = \langle Te_i, \Dc^{*}\Dc[Te_j]\rangle_{L^2(\Omega)} \\
&= \langle e_i, (T^{*}\Dc^{*}\Dc T)[e_j]\rangle_{L^2(\Omega)},
\end{align*}
where the second equality follows from the definition of the adjoint. 
Hence, using $\mathcal A = \Dc^{*}\Dc$, we conclude $G_{\infty}(w) = T^{*}(w)\mathcal A T(w)$.
\end{proof}

Define the kernel integral operator $\mathcal K_{\infty}(w):L^2(\Omega)\rightarrow \mathcal H$ by
\begin{equation}
\label{eq:kern_op}
\mathcal K_{\infty}(w)[f](x) = T(w)T^{*}(w)f = \sum_{i=1}^{p}\langle f,\phi_i(x;w)\rangle \phi_i(x;w),
\end{equation}
and the kernel matrix $A(w)$ with entries $A_{ij}(w) = \langle \phi_{i}(x;w),\phi_{j}(x;w)\rangle_{L^2(\Omega)}$. 

Using \cref{lemma:Pop-GN} and applying the same logic as in the proof of Theorem 2.4 in \citet{de2023operator},
we obtain the following theorem. 
\begin{theorem}
\label{thm:pop-gn-eigvals}
    Suppose that the matrix $A(w)$ is invertible.
    Then the eigenvalues of $G_{\infty}(w)$ satisfy
    \[
    \lambda_j(G_\infty(w)) = \lambda_j(\mathcal A\circ \Kc_\infty(w)),\quad \text{for all}~j\in[p].
    \]
\end{theorem}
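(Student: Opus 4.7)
The strategy is to rewrite both $G_\infty$ and $\mathcal{A}\circ\Kc_\infty$ as products of the basic operators $T$, $T^*$, and $\mathcal{A}$, and then invoke the standard spectral swap between $MN$ and $NM$. The preceding lemma already gives $G_\infty = T^*\mathcal{A}T$. From the definition of $\Kc_\infty$ we have $\Kc_\infty = TT^*$, so $\mathcal{A}\circ\Kc_\infty = \mathcal{A}TT^* = (\mathcal{A}T)T^*$. Setting $M := T^* : L^2(\Omega) \to \R^p$ and $N := \mathcal{A}T : \R^p \to L^2(\Omega)$, we have $G_\infty = MN$ and $\mathcal{A}\circ\Kc_\infty = NM$, so the theorem reduces to showing that $MN$ and $NM$ share the same spectrum when indexed over $[p]$.

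The central step is the classical eigenvalue-swap argument. If $MNv = \lambda v$ with $\lambda \neq 0$, then $NM(Nv) = N(\lambda v) = \lambda(Nv)$, and $Nv \neq 0$ since $\lambda v \neq 0$. Conversely, if $NMf = \lambda f$ with $\lambda \neq 0$, then $Mf$ is a nonzero eigenvector of $MN$ with the same eigenvalue. The two maps $v \mapsto Nv$ and $f \mapsto Mf$ are mutually inverse up to a factor of $\lambda$ between the corresponding eigenspaces, yielding a bijection of nonzero eigenspaces that preserves algebraic multiplicities. Both operators have rank at most $p$: $G_\infty$ trivially as a $p \times p$ matrix, and $\mathcal{A}\circ\Kc_\infty$ because $\Kc_\infty = TT^*$ factors through $\R^p$. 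Ordering eigenvalues in decreasing order and padding with zeros, the equality $\lambda_j(G_\infty) = \lambda_j(\mathcal{A}\circ\Kc_\infty)$ for $j\in[p]$ follows. The invertibility of $A = T^*T$ is used here to guarantee that $T$ is injective and $\mathcal{H}$ is genuinely $p$-dimensional, so that the indexing over $[p]$ is unambiguous.

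The main obstacle is handling the algebraic multiplicity of the zero eigenvalue, since $\mathcal{A}\circ\Kc_\infty$ acts on the infinite-dimensional space $L^2(\Omega)$ and has a huge kernel that is not present on the $G_\infty$ side. I would sidestep this by noting that any eigenvector of $\mathcal{A}\circ\Kc_\infty$ associated to a nonzero eigenvalue must lie in $\mathrm{range}(\mathcal{A}T)$, a subspace of dimension at most $p$. Restricting $\mathcal{A}\circ\Kc_\infty$ to this finite-dimensional invariant subspace reduces the spectral comparison to a $p \times p$ matrix calculation, where a Sylvester-type identity for the characteristic polynomials of $MN$ and $NM$ makes the equality of all $p$ eigenvalues (including repeated zeros) rigorous.
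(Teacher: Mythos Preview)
The paper does not actually supply a proof of this theorem: it is quoted as a result of De~Ryck and Mishra and stated without argument, so there is no in-paper proof to compare your proposal against. Your route is the standard one and is correct. Writing $G_\infty = T^{*}(\mathcal{A}T) = MN$ and $\mathcal{A}\circ\Kc_\infty = (\mathcal{A}T)T^{*} = NM$, the $MN\leftrightarrow NM$ swap gives equality of nonzero eigenvalues with multiplicities; since $G_\infty$ is symmetric positive semidefinite (hence diagonalizable) and $NM$ has finite rank through $\R^p$, padding with zeros on both sides yields $\lambda_j(G_\infty)=\lambda_j(\mathcal{A}\circ\Kc_\infty)$ for all $j\in[p]$. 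Your reduction to the finite-dimensional invariant subspace $\mathrm{range}(\mathcal{A}T)$ and the Sylvester determinant identity is precisely the clean way to handle the multiplicity bookkeeping, and your reading of the hypothesis ``$A=T^{*}T$ invertible'' as ensuring $T$ is injective (so the $\phi_i$ are independent and the index set $[p]$ is meaningful) is the natural interpretation.
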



\subsection{$G_r(w)$ Concentrates Around $G_{\infty}(w)$}
In order to relate the conditioning of the population objective to the empirical objective, we must relate the population Gauss-Newton residual matrix to its empirical counterpart. 
We accomplish this by showing $G_r(w)$ concentrates around $G_{\infty}(w)$. 
To this end, we recall the following variant of the intrinsic dimension matrix Bernstein inequality from \citet{tropp2015introduction}.
\begin{theorem}[Intrinsic Dimension Matrix Bernstein]
 \label{thm:int_bern}
    Let $\{X_i\}_{i\in [n]}$ be a sequence of independent mean zero random matrices of the same size. 
    Suppose that the following conditions hold:
    \begin{align*}
        &\|X_i\|  \leq B,~\sum^{n}_{i=1}\mathbb E[X_i X_i^{T}]\preceq V_1,~\sum^{n}_{i=1}\mathbb E[X_i^{T} X_i]\preceq V_2.
    \end{align*}
    Define 
    \[\mathcal V = 
    \begin{bmatrix}
        V_1 & 0 \\
        0   &  V_2
    \end{bmatrix},~ \varsigma^2 = \max\{\|V_1\|,\|V_2\|\},
    \]
    and the \emph{intrinsic dimension} $d_{\textup{int}} = \frac{\textup{trace}(\mathcal V)}{\|\mathcal V\|}$.
    \newline
    Then for all $t\geq \varsigma+\frac{B}{3}$, 
    \[
    \mathbb P\left(\left\|\sum^{n}_{i=1}X_i\right\|\geq t\right) \leq 4d_{\textup{int}}\exp\left(-\frac{3}{8}\min\left\{\frac{t^2}{\varsigma^2},\frac{t}{B}\right\}\right).
    \]   
\end{theorem}

Next, we recall two key concepts from the kernel ridge regression literature and approximation via sampling literature: $\gamma$-\emph{effective dimension} and $\gamma$-\emph{ridge leverage coherence} \cite{bach2013sharp,cohen2017input,rudi2017falkon}. 
\begin{definition}[$\gamma$-Effective dimension and $\gamma$-ridge leverage coherence]
    Let $\gamma>0$. 
    Then the $\gamma$-effective dimension of $G_{\infty}(w)$ is given by
    \[
    d^{\gamma}_{\textup{eff}}(G_{\infty}(w)) = \textup{trace}\left(G_{\infty}(w)(G_{\infty}(w)+\gamma I)^{-1}\right).
    \]
    The $\gamma$-ridge leverage coherence is given by
    \[
    \chi^\gamma(G_{\infty}(w)) = \sup_{x\in \Omega}\frac{\left\|(G_{\infty}(w)+\gamma I)^{-1/2}\Dc[\nabla_w u(x;w)]\right\|^2}{\mathbb E_{x\sim \mu}\left\|(G_{\infty}(w)+\gamma I)^{-1/2}\Dc[\nabla_w u(x;w)]\right\|^2} = \frac{\sup_{x\in \Omega}{\left\|(G_{\infty}(w)+\gamma I)^{-1/2}\Dc[\nabla_w u(x;w)]\right\|^2}}{{d^{\gamma}_{\textup{eff}}(G_{\infty}(w))}}.
    \]
\end{definition}
Observe that $d^{\gamma}_{\textup{eff}}(G_{\infty}(w))$ only depends upon $\gamma$ and $w$, while $\chi^\gamma(G_{\infty}(w)) $ only depends upon $\gamma, w,$ and $\Omega$. 
Moreover, $\chi^\gamma(G_{\infty}(w))<\infty$ as $\Omega$ is bounded. 

We prove the following lemma using the $\gamma$-effective dimension and $\gamma$-ridge leverage coherence in conjunction with \cref{thm:int_bern}.
\begin{lemma}[Finite-sample approximation]
\label{lemma:sampling}
Let $0<\gamma<\lambda_1(G_{\infty}(w))$. 
If $\nres\geq 40\chi^\gamma(G_{\infty}(w))d^{\gamma}_{\textup{eff}}(G_{\infty}(w))\log\left(\frac{8d^{\gamma}_{\textup{eff}}(G_{\infty}(w))}{\delta}\right)$, then with probability at least $1-\delta$
    \[
    \frac{1}{2}\left[G_\infty(w)-\gamma I\right] \preceq G_{r}(w)\preceq \frac{1}{2}\left[3 G_\infty(w)+\gamma I.\right]
    \]
\end{lemma}
\begin{proof}
    Let $x_i = (G_{\infty}(w)+\gamma I)^{-1/2}\Dc[\nabla_w u(x_i;w)]$, and $X_i = \frac{1}{\nres}\left(x_ix_i^{T}-D_\gamma\right)$, where $D_\gamma = G_{\infty}(w)\left(G_{\infty}(w)+\gamma I\right)^{-1}$.
    Clearly, $\mathbb E[X_i] = 0$. 
    Moreover, the $X_i$'s are bounded as
    \begin{align*}
    \|X_i\| & = \max\left\{\frac{\lamMax(X_i)}{\nres},-\frac{\lamMin(X_i)}{\nres}\right\} \leq \max\left\{\frac{\|x_i\|^2}{\nres}, \frac{\lamMax(-X_i)}{\nres}\right\} \leq \max\left\{\frac{\chi^{\gamma}(G_{\infty}(w))d^{\gamma}_{\textup{eff}}(G_{\infty}(w))}{\nres},\frac{1}{\nres}\right\} \\
    & = \frac{\chi^{\gamma}(G_{\infty}(w))d^{\gamma}_{\textup{eff}}(G_{\infty}(w))}{\nres}.
    \end{align*}
    Thus, it remains to verify the variance condition. 
    We have
    \begin{align*}
    \sum^{\nres}_{i=1}\mathbb E[X_iX_i^{T}] & = \nres\mathbb E[X_1^2] = \nres\times \frac{1}{\nres^2}\mathbb E[(x_1x_1^{T}-D_\gamma)^{2}]\preceq \frac{1}{\nres}\mathbb E[\|x_1\|^2 x_1 x_1^{T}] \\ 
    & \preceq \frac{\chi^{\gamma}(G_{\infty}(w))d^{\gamma}_{\textup{eff}}(G_{\infty}(w))}{\nres} D_\gamma. 
    \end{align*}
    Hence, the conditions of \cref{thm:int_bern} hold with $B = \frac{\chi^{\gamma}(G_{\infty}(w))d^{\gamma}_{\textup{eff}}(G_{\infty}(w))}{\nres}$ and $V_1 = V_2 = \frac{\chi^{\gamma}(G_{\infty}(w))d^{\gamma}_{\textup{eff}}(G_{\infty}(w))}{\nres} D_\gamma$.
    Now $1/2 \leq \|\mathcal V\|\leq 1$ as $\nres\geq \chi^{\gamma}(G_{\infty}(w))d^{\gamma}_{\textup{eff}}(G_{\infty}(w))$ and $\gamma\leq \lambda_1\left(G_\infty(w)\right)$.
    Moreover, as $V_1 = V_2$ we have $d_{\textup{int}} \leq 4 d^{\gamma}_{\textup{eff}}(G_{\infty}(w))$. 
    So, setting 
    \[
    t = \sqrt{\frac{8\chi^\gamma(G_{\infty}(w))d^{\gamma}_{\textup{eff}}(G_{\infty}(w))\log\left(\frac{8d^{\gamma}_{\textup{eff}}(G_{\infty}(w))}{\delta}\right)}{3\nres}}+\frac{8\chi^\gamma(G_{\infty}(w))d^{\gamma}_{\textup{eff}}(G_{\infty}(w))\log\left(\frac{8d^{\gamma}_{\textup{eff}}(G_{\infty}(w))}{\delta}\right)}{3\nres}
    \]
    and using $\nres\geq 40\chi^\gamma(G_{\infty}(w)) d^{\gamma}_{\textup{eff}}(G_{\infty}(w))\log\left(\frac{8d^{\gamma}_{\textup{eff}}(G_{\infty}(w))}{\delta}\right)$, we conclude
    \[\mathbb P\left(\left\|\sum_{i=1}^{\nres}X_i\right\|\geq \frac{1}{2}\right)\leq \delta.\]
    Now, $\left\|\sum_{i=1}^{\nres}X_i\right\|\leq \frac{1}{2}$ implies
    \[
    -\frac{1}{2}\left[G_{\infty}(w)+\gamma I\right]\preceq G_r(w)-G_{\infty}(w)\preceq \frac{1}{2}\left[G_{\infty}(w)+\gamma I\right].
    \]
    The claim now follows by rearrangement. 
\end{proof}

By combining \cref{thm:pop-gn-eigvals} and \cref{lemma:sampling}, we show that if the spectrum of $\A\circ \Kc_{\infty}(w)$ decays, then the spectrum of the empirical Gauss-Newton matrix also decays with high probability.  
\begin{proposition}[Spectrum of empirical Gauss-Newton matrix decays fast]
\label{prop:emp_gn_spectrum}
Suppose the eigenvalues of $\A\circ \Kc_{\infty}(w)$ satisfy $\lambda_j(\mathcal A\circ \Kc_{\infty}(w))\leq Cj^{-2\alpha}$, where $\alpha>1/2$ and $C>0$ is some absolute constant.
Then if $\sqrt{\nres}\geq 40C_1\chi^{\gamma}(G_{\infty}(w))\log\left(\frac{1}{\delta}\right)$, for some absolute constant $C_1$, it holds that
\[
  \lambda_{\nres}(G_r(w))\leq \nres^{-\alpha}
\]
with probability at least $1-\delta$.
         
\end{proposition}
\begin{proof} 
    The hypotheses on the decay of the eigenvalues implies $d^{\gamma}_{\textup{eff}}(G_{\infty}(w)) \leq C_1\gamma^{-\frac{1}{2\alpha}}$ (see Appendix C of \citet{bach2013sharp}).
    Consequently, given $\gamma = \nres^{-\alpha}$, we have $d^{\gamma}_{\textup{eff}}(G_{\infty}(w)) \leq C_1\nres^{\frac{1}{2}}$. 
    Combining this with our hypotheses on $\nres$, it follows $\nres\geq 40 C_1\chi^{\gamma}(G_{\infty}(w))d^{\gamma}_{\textup{eff}}(G_{\infty}(w))\log\left(\frac{8d^{\gamma}_{\textup{eff}}(G_{\infty}(w))}{\delta}\right)$.
    Hence \cref{lemma:sampling} implies with probability at least $1-\delta$ that 
    \[
    G_r(w)\preceq \frac{1}{2}\left(3 G_\infty(w)+\gamma I\right),
    \]
    which yields for any $1\leq r\leq n$
    \[
    \lambda_{\nres}(G_r(w))\leq \frac{1}{2}\left(3\lambda_r(G_\infty(w))+\gamma\right).
    \]
    Combining the last display with $\nres\geq 3d^{\gamma}_{\textup{eff}}(G_{\infty}(w))$, 
    Lemma 5.4 of \citet{frangella2023randomized} guarantees $\lambda_r(G_\infty(w))\leq \gamma/3$, and so 
    \[
    \lambda_{\nres}(G_r(w))\leq \frac{1}{2}\left(3\lambda_r(G_\infty(w))+\gamma\right)\leq \gamma \leq \nres^{-\alpha}.
    \]
\end{proof}

\subsection{Formal Statement of \cref{thm:informal_ill_cond} and Proof}
\begin{theorem}[An ill-conditioned differential operator leads to hard optimization]
    Fix $w_\star \in \Wstar$, and let $\mathcal S$ be a set containing $w_\star$ for which $\mathcal S$ is $\mu$-\PL.
    Let $\alpha>1/2$.
    If the eigenvalues of $\A\circ \Kc_{\infty}(w_\star)$ satisfy $\lambda_j(\mathcal A\circ \Kc_{\infty}(w_\star))\leq C j^{-2\alpha}$ and $\sqrt{\nres}\geq 40 C_1\chi^{\gamma}(G_{\infty}(w_\star))\log\left(\frac{1}{\delta}\right)$, then 
    \[
            \kappa_L(\mathcal S) \geq C_2\nres^{\alpha},
    \]
    with probability at least $1-\delta$.
    Here $C, C_1,$ and $C_2$ are absolute constants. 
        
\end{theorem}

\begin{proof}
    By the assumption on $\nres$, the conditions of \cref{prop:emp_gn_spectrum} are met, so, 
    \[
    \lambda_{\nres}(G_r(w_\star))\leq \nres^{-\alpha}.
    \] 
    with probability at least $1-\delta$.
    By definition $G_r(w_\star) = J_{\F_{\textup{res}}}(w_\star)^{T}J_{\F_{\textup{res}}}(w_\star)$, consequently,
    \[
    \lambda_{\nres}(K_{\F_{\textup{res}}}(w_\star)) = \lambda_{\nres}(G_r(w_\star)) \leq \nres^{-\alpha}.
    \] 
    Now, the \PL-constant for $\mathcal S$, satisfies $\mu = \inf_{w \in \mathcal S}\lambda_{n}(K_{\F}(w))$ \cite{liu2022loss}. 
    Combining this with the expression for $K_\F(w_\star)$ in \cref{lemma:jac_ntk}, we reach 
    \[
    \mu\leq \lambda_n(K_\F(w_\star))\leq \lambda_{\nres}(K_{\F_{\textup{res}}}(w_\star))\leq \nres^{-\alpha},
    \]
    where the second inequality follows from Cauchy's Interlacing theorem. 
    Recalling that $\kappa_L(\mathcal S) = \frac{\sup_{w \in \mathcal S}\|H_L(w)\|}{\mu}$, and $H_L(w_\star)$ is symmetric psd, we reach
    \begin{align*}
        \kappa_L(\mathcal S) \geq \frac{\lambda_1(H_L(w_\star))}{\mu}\overset{(1)}{\geq} \frac{\lambda_1(G_r(w_\star))+\lambda_p(G_b(w_\star))}{\mu} \overset{(2)}{=} \frac{\lambda_1(G_r(w_\star))}{\mu} \overset{(3)}{\geq} C_3\lambda_1(G_\infty(w_\star))\nres^{\alpha}. 
    \end{align*}
    Here $(1)$ uses $H_L(w_\star) = G_r(w_\star)+G_b(w_\star)$ and Weyl's inequalities, $(2)$ uses $p\geq \nres+\nbc$, so that $\lambda_p(G_b(w_\star)) = 0$.
    Inequality $(3)$ uses the upper bound on $\mu$ and the lower bound on $G_r(w)$ given in \cref{lemma:sampling}. 
    Hence, the claim follows with $C_2 = C_3\lambda_1(G_\infty(w_\star))$.
\end{proof}
\subsection{$\kappa$ Grows with the Number of Residual Points}
\label{subsec:kappa_grows}
\begin{figure*}
    \centering
    \includegraphics[scale=0.45]{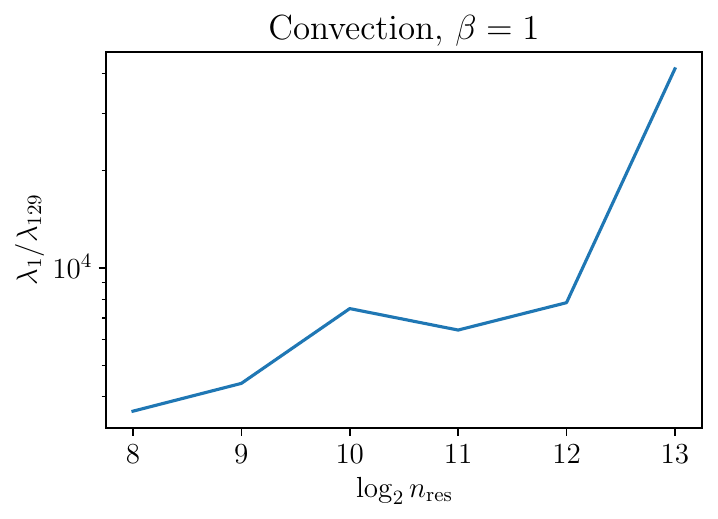}
    \caption{Estimated condition number after 41000 iterations of \al{} with different number of residual points from $255 \times 100$ grid on the interior. Here $\lambda_i$ denotes the $i$th largest eigenvalue of the Hessian. The model has $2$ layers and the hidden layer has width $32$. The plot shows $\kappa_L$ grows polynomially in the number of residual points.}
    \label{fig:condition_number_bound}
\end{figure*}
\Cref{fig:condition_number_bound} plots the ratio $\lambda_1(H_L)/\lambda_{129}(H_L)$ near a minimizer $w_\star$. This ratio is a lower bound for the condition number of $H_L$, and is computationally tractable to compute. 
We see that the estimate of the $\kappa$ grows polynomially with $\nres$, which provides empirical verification for \cref{thm:informal_ill_cond}.


\section{Convergence of GDND (\cref{alg-GDND})}
\label{section:GDND_conv}
In this section, we provide the formal version of \cref{thm:GDND_informal} and its proof. 
However, this is delayed till \cref{subsec:GDND_conv}, as the theorem is a consequence of a series of results.
Before jumping to the theorem, we recommend reading the statements in the preceding subsections to understand the statement and corresponding proof. 
\subsection{Overview and Notation}
Recall, we are interested in minimizing the objective in \eqref{eq:pinn_prob_gen}:
\[
L(w) = \frac{1}{2\nres}\sum_{i=1}^{\nres}\left(\Dc[u(x_r^i;w)]\right)^2+\frac{1}{2\nbc}\sum_{j=1}^{\nbc}\left(\Bc[u(x_b^j;w)]\right)^2, 
\]
where $\Dc$ is the differential operator defining the PDE and $\Bc$ is the operator defining the boundary conditions. 
Define
\[
\F(w) = \begin{bmatrix}
    \frac{1}{\sqrt{\nres}}\Dc[u(x^1_r;w)] \\
    \vdots \\
    \frac{1}{\sqrt{\nres}}\Dc[u(x_r^{\nres};w)]\\
    \frac{1}{\sqrt{\nbc}}\Bc[u(x^1_b;w)] \\
    \vdots \\
    \frac{1}{\sqrt{\nbc}}\Bc[u(x_b^{\nbc};w)]
\end{bmatrix},~ y = 0
\]
Using the preceding definitions, our objective may be rewritten as:
\[
L(w) = \frac{1}{2}\|\F(w)-y\|^2.
\]
Throughout the appendix, we work with the condensed expression for the loss given above.
We denote the $(\nres+\nbc)\times p$ Jacobian matrix of $\mathcal F$ by $J_\F(w)$. 
The tangent kernel at $w$ is given by the $n\times n$ matrix $K_\F(w) = J_\F(w) J_\F(w)^{T}$.
The closely related Gauss-Newton matrix is given by $G(w) = J_\F(w)^{T} J_\F(w)$.

\subsection{Global Behavior: Reaching a Small Ball About a Minimizer}
We begin by showing that under appropriate conditions, gradient descent outputs a point close to a minimizer after a fixed number of iterations.
We first start with the following assumption which is common in the neural network literature \cite{liu2022loss,liu2023aiming}.
\begin{assumption}
\label{assp:loss_reg}
     The mapping $\mathcal F(w)$ is $\mathcal L_\F$-Lipschitz, and the loss $L(w)$ is $\beta_{L}$-smooth.
\end{assumption}


Under \Cref{assp:loss_reg} and a \PL-condition, we have the following theorem of \citet{liu2022loss}, which shows gradient descent converges linearly. 
\begin{theorem}
\label{thm:grad_dsct_conv}
    Let $w_0$ denote the network weights at initialization. 
    Suppose \Cref{assp:loss_reg} holds, and that $L(w)$ is $\mu$-P\L$^{\star}$ in $B(w_0,2R)$ with $R = \frac{2\sqrt{2\beta_{L}L(w_0)}}{\mu}$.
    Then the following statements hold:
    \begin{enumerate}
        \item The intersection $B(w_0,R)\cap\Wstar$ is non-empty.
        \item Gradient descent with step size $\eta = 1/\beta_L$ satisfies:
        \begin{align*}
        &w_{k+1} = w_k-\eta \nabla L(w_k)\in B(w_0,R)~ \text{for all } k\geq 0,\\
        &L(w_k)\leq \left(1-\frac{\mu}{\beta_L}\right)^kL(w_0).
        \end{align*}
    \end{enumerate}
\end{theorem}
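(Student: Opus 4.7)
The plan is to run a single induction on $k$ that simultaneously tracks the geometric decay of $L(w_k)$ and the confinement $w_k\in B(w_0,R)$. The base case $k=0$ is immediate. For the inductive step, assuming $w_j\in B(w_0,R)\subset B(w_0,2R)$ for all $j\le k$, I would first apply $\beta_L$-smoothness of $L$ and the choice $\eta=1/\beta_L$ to obtain the standard one-step descent inequality
\[
L(w_{k+1}) \le L(w_k) - \tfrac{1}{2\beta_L}\|\nabla L(w_k)\|^2.
\]
Invoking the $\mu$-\PL{} condition at $w_k$ then yields $L(w_{k+1})\le (1-\mu/\beta_L)L(w_k)$, and iterating gives the stated linear rate.

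For the confinement, the key tool is the self-bounding consequence of smoothness and nonnegativity of $L$, namely $\|\nabla L(w)\|^2\le 2\beta_L L(w)$, which together with the loss decay from the previous paragraph gives
\[
\|w_{j+1}-w_j\| = \eta\|\nabla L(w_j)\| \le \sqrt{\tfrac{2L(w_j)}{\beta_L}} \le \bigl(1-\tfrac{\mu}{\beta_L}\bigr)^{j/2}\sqrt{\tfrac{2L(w_0)}{\beta_L}}.
\]
Summing the resulting geometric series and using the elementary inequality $1-\sqrt{1-x}\ge x/2$ for $x\in[0,1]$, I would bound the total path length by
\[
\|w_{k+1}-w_0\| \le \sqrt{\tfrac{2L(w_0)}{\beta_L}}\cdot\frac{1}{1-\sqrt{1-\mu/\beta_L}} \le \frac{2\sqrt{2\beta_L L(w_0)}}{\mu} = R,
\]
which closes the induction and proves both parts of item (2).

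Item (1) then follows for free: the step-size bound shows $\{w_k\}$ is Cauchy, so it converges to some $w_\infty\in \overline{B(w_0,R)}$, and continuity of $L$ combined with the geometric decay gives $L(w_\infty)=0$, i.e.\ $w_\infty\in\Wstar$. The strict inequality in the telescoping sum (when $\mu<\beta_L$) places $w_\infty$ in the open ball; the boundary case is handled by a trivial perturbation of the argument.

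The main subtlety, and the only nonroutine piece of bookkeeping, is the coupling between the two halves of the induction: the \PL{} hypothesis is available only in $B(w_0,2R)$, so before invoking it at $w_k$ we must already know $w_k$ lies in that region, yet we need the tighter confinement $w_k\in B(w_0,R)$ to propagate the induction hypothesis forward. The factor-of-two slack in the radius where \PL{} is assumed is exactly what makes this bootstrap go through, since the geometric-series distance bound barely saturates $R$. Once this is set up correctly, everything else reduces to the textbook smooth-nonconvex descent calculation.
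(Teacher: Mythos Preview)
Your proposal is correct and is precisely the standard argument due to Liu, Zhu, and Belkin. Note, however, that the paper does not supply its own proof of this theorem: it is stated without proof and explicitly attributed to that reference, so there is nothing in the paper to compare against line-by-line. Your simultaneous induction on confinement and loss decay, the self-bounding gradient estimate $\|\nabla L(w)\|^2\le 2\beta_L L(w)$, and the geometric-series path-length bound with the elementary inequality $1-\sqrt{1-x}\ge x/2$ are exactly the ingredients of the original proof.

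One small remark on item~(1): your Cauchy-sequence argument is fine, but the paper separately records the Descent Principle (\Cref{lemma:descent_principle}), which gives item~(1) directly without running gradient descent at all: since $L$ is $\mu$-\PL{} in $B(w_0,2R)\supset B(w_0,R)$ and $L(w_0)<\tfrac{1}{2}\mu R^2 = 4(\beta_L/\mu)L(w_0)$ whenever $\mu<4\beta_L$ (which always holds), the lemma immediately yields $B(w_0,R)\cap\Wstar\neq\emptyset$. Either route is acceptable.
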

For wide neural neural networks, it is known that the $\mu$-\PL condition in \cref{thm:grad_dsct_conv} hold with high probability, see \citet{liu2022loss} for details.

We also recall the following lemma from \citet{liu2023aiming}.
    \begin{lemma}[Descent Principle]
    \label{lemma:descent_principle}
        Let $L:\R^p\mapsto [0,\infty)$ be differentiable and $\mu$-\PL in the ball $B(w,r)$. 
        Suppose $L(w)<\frac{1}{2}\mu r^2$.
        Then the intersection $B(w,r)\cap \Wstar$ is non-empty, and
        \[
        \frac{\mu}{2}\dist^2(w,\Wstar)\leq L(w).
        \]
    \end{lemma}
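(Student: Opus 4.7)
The plan is to follow a gradient flow trajectory $\dot w(t) = -\nabla L(w(t))$ with $w(0) = w$ and show three things: (i) the trajectory stays inside $B(w,r)$ for all time, (ii) its total arc length is bounded by $\sqrt{2 L(w)/\mu}$, and (iii) it converges to a point $w_\infty \in \Wstar$. The statement then falls out of (ii) and (iii).

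First, I would exploit the classical $\sqrt{L}$ trick. Whenever $L(w(t)) > 0$, the chain rule gives $\tfrac{d}{dt}\sqrt{L(w(t))} = -\|\nabla L(w(t))\|^2/(2\sqrt{L(w(t))})$, and the \PL\ hypothesis $\|\nabla L\|^2 \geq 2\mu L$ yields $\sqrt{L(w(t))} \leq \|\nabla L(w(t))\|/\sqrt{2\mu}$. Combining these,
\[
\frac{d}{dt}\sqrt{L(w(t))} \;\leq\; -\sqrt{\mu/2}\,\|\nabla L(w(t))\|.
\]
Integrating on $[0,t]$ and using $\sqrt{L(w(t))}\geq 0$ produces the arc-length estimate $\int_0^t \|\nabla L(w(s))\|\,ds \leq \sqrt{2 L(w)/\mu}$, and simultaneously the usual exponential decay $L(w(t)) \leq L(w) e^{-2\mu t}$.

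Second, the strict hypothesis $L(w) < \mu r^2/2$ gives $\sqrt{2L(w)/\mu} < r$, so the arc-length bound immediately implies the trajectory cannot escape $B(w,r)$. I would formalize this by a standard maximal-interval argument: let $T_\star = \sup\{t \geq 0 : w(s)\in B(w,r)\text{ for all }s\in[0,t]\}$, observe that the \PL\ bound (and hence the arc-length estimate) is valid on $[0, T_\star)$, and derive a contradiction with the definition of $T_\star$ if $T_\star < \infty$. Because the trajectory has finite total length, $\{w(t)\}$ is Cauchy as $t\to\infty$ and converges to some $w_\infty \in \overline{B(w,r)}$; continuity of $L$ together with $L(w(t))\to 0$ forces $L(w_\infty) = 0$, so $w_\infty \in \Wstar \cap B(w,r)$, giving non-emptiness. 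Finally, $\dist(w,\Wstar)\leq \|w_\infty - w\| \leq \sqrt{2L(w)/\mu}$ rearranges to $\tfrac{\mu}{2}\dist^2(w,\Wstar) \leq L(w)$.

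The main obstacle is regularity: the lemma only assumes $L$ differentiable, not that $\nabla L$ is locally Lipschitz, so existence and uniqueness of the gradient flow are not automatic. The cleanest workaround is to replace the ODE by a small-stepsize gradient descent iterate $w_{k+1} = w_k - \eta \nabla L(w_k)$ and redo the telescoping argument in discrete time: one then sums $\sqrt{L(w_k)} - \sqrt{L(w_{k+1})} \geq c\eta\|\nabla L(w_k)\|$ to get the same arc-length bound, uses the discrete analog of the maximal-index argument to keep iterates in $B(w,r)$, and extracts a limit by compactness. The algebra mirrors the continuous case exactly; only the bookkeeping changes.
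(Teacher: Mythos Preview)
The paper does not give its own proof of this lemma; it merely records it as a result of Zhu et al.\ and moves on. Your gradient-flow arc-length argument is the standard route to this statement and is correct. One small correction: your discrete-time fallback does not actually sidestep the regularity issue you flagged, since the telescoping inequality $\sqrt{L(w_k)}-\sqrt{L(w_{k+1})}\geq c\eta\|\nabla L(w_k)\|$ relies on a descent lemma that itself requires Lipschitz $\nabla L$. In the paper's context this is harmless, because Assumption~\ref{assp:loss_reg} already supplies $\beta_L$-smoothness, so either the continuous or the discrete version goes through.
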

        Let $\Lc_{H_L}$ be the Hessian Lipschitz constant in $B(w_0,2R)$, and $\Lc_{J_\F} = \sup_{w\in B(w_0,2R)}\|H_\F(w)\|$, where $\|H_\F(w)\| = \max_{i\in [n]}\|H_{\F_i}(w)\|$. 
        Define $M = \max\{\mathcal L_{\HL},\Lc_{J_\F},\mathcal \Lc_\F \Lc_{J_\F},1\}$,  $\epsLoc = \frac{\varepsilon \mu^{3/2}}{4M}$, where $\varepsilon\in (0,1)$.
        By combining \cref{thm:grad_dsct_conv} and \cref{lemma:descent_principle}, we are able to establish the following important corollary, which shows gradient descent outputs a point close to a minimizer.
    \begin{corollary}[Getting close to a minimizer]
        \label{corr:close_to_min}
        Set $\rho =  \min\left\{\frac{\epsLoc}{19\sqrt{\frac{\beta_L}{\mu}}},\sqrt{\mu}R,R\right\}$.
        Run gradient descent for $k = \frac{\beta_L}{\mu}\log\left(\frac{4\max\{2\beta_{L},1\}L(w_0)}{\mu\rho^2}\right)$ iterations, 
        gradient descent outputs a point $\wloc$ satisfying 
        \[
        L(\wloc) \leq \frac{\mu \rho^2}{4}\min\left\{1,\frac{1}{2\beta_L}\right\},
        \]
        \[
        \|\wloc-w_\star\|_{H_{L}(w_\star)+\mu I}\leq \rho,~ \text{for some}~ w_\star\in \Wstar.
        \]
    \end{corollary}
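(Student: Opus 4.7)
My plan is to combine the linear convergence guarantee of \Cref{thm:grad_dsct_conv} with the Descent Principle (\Cref{lemma:descent_principle}), and then convert the resulting Euclidean distance bound into the weighted-norm bound appearing in the conclusion.

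First I would invoke \Cref{thm:grad_dsct_conv} with step size $\eta = 1/\beta_L$: every iterate $w_k$ stays in $B(w_0, R)$ and $L(w_k) \leq (1 - \mu/\beta_L)^k L(w_0)$. Using $(1-\mu/\beta_L)^k \leq e^{-k\mu/\beta_L}$, a direct inversion shows that the prescribed iteration count
$k = \frac{\beta_L}{\mu}\log\left(\frac{4\max\{\beta_L+\mu,1\}L(w_0)}{\mu\rho^2}\right)$
drives the loss below
\[
L(\wloc) \leq \frac{\mu\rho^2}{4\max\{\beta_L+\mu,1\}} = \frac{\mu\rho^2}{4}\min\!\left\{1,\tfrac{1}{\beta_L+\mu}\right\},
\]
which is the first claimed bound.

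For the second bound I would apply the Descent Principle to $\wloc$ with radius $\rho$. Since $\wloc \in B(w_0,R)$ and $\rho \leq R$, the ball $B(\wloc,\rho)$ sits inside $B(w_0,2R)$, so the \PL{} hypothesis is in force there; moreover the loss bound above certainly gives $L(\wloc) < \tfrac{1}{2}\mu\rho^2$. \Cref{lemma:descent_principle} then furnishes some $w_\star \in B(\wloc,\rho)\cap\Wstar$ with
\[
\|\wloc - w_\star\|^2 \leq \frac{2L(\wloc)}{\mu} \leq \frac{\rho^2}{2(\beta_L+\mu)}.
\]

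The remaining step is to upgrade this Euclidean estimate to the $H_L(w_\star)+\mu I$ norm. By $\beta_L$-smoothness of $L$ one has $\|H_L(w_\star)\| \leq \beta_L$, whence $H_L(w_\star)+\mu I \preceq (\beta_L+\mu)I$, and multiplying the preceding display by $\beta_L+\mu$ yields $\|\wloc - w_\star\|_{H_L(w_\star)+\mu I}^2 \leq \rho^2/2 \leq \rho^2$. The one piece of bookkeeping that requires care is matching constants: the factor $\min\{1,1/(\beta_L+\mu)\}$ in the loss target exists precisely so that this final conversion between norms preserves the bound $\rho^2$. The intricate form of $\rho$ through $\epsLoc$ is not used in this corollary itself; it is chosen this way so that $\wloc$ lies in a neighborhood of $w_\star$ suitable for the local analysis carried out in the subsequent results.
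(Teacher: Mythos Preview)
Your proposal is correct and follows essentially the same route as the paper: invoke \Cref{thm:grad_dsct_conv} for the loss bound, apply the Descent Principle on $B(\wloc,\rho)\subset B(w_0,2R)$ to locate a minimizer, and then convert the Euclidean distance to the $\HL(w_\star)+\mu I$ norm via the smoothness bound $\HL(w_\star)+\mu I\preceq(\beta_L+\mu)I$. Your write-up is in fact more explicit than the paper's (which compresses the last two steps into a one-line ``Cauchy--Schwarz'' remark), and your observation that the factor $\min\{1,1/(\beta_L+\mu)\}$ is there precisely to make the norm conversion go through is exactly the point.
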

\begin{proof}
    The first claim about $L(\wloc)$ is an immediate consequence of \Cref{thm:grad_dsct_conv}.
    For the second claim, consider the ball $B(\wloc,\rho)$.
    Observe that $B(\wloc,\rho)\subset B(w_0,2R)$, so $L$ is $\mu$-\PL~in $B(\wloc,\rho)$.
    Combining this with $L(\wloc) \leq \frac{\mu \rho^2}{4}$, \Cref{lemma:descent_principle} guarantees the existence of $w_\star\in B(\wloc,\rho)\cap\Wstar$, with 
    $\|\wloc-w_\star\|\leq \sqrt{\frac{2}{\mu}L(\wloc)}$.
    Hence Cauchy-Schwarz yields 
    \begin{align*}
        \|\wloc-w_\star\|_{H_L(w_\star)+\mu I} & \leq \sqrt{\beta_L+\mu}\|\wloc-w_\star\| \leq \sqrt{2\beta_L}\|\wloc-w_\star\|\\
        & \leq 2\sqrt{\frac{\beta_L}{\mu}L(\wloc)} \leq 2 \times \sqrt{\frac{\beta_L}{\mu}\frac{\mu \rho^2}{8{\beta_L}}}\leq \rho,
    \end{align*}
   which proves the claim.
\end{proof}


\subsection{Fast Local Convergence of Damped Newton's Method}
In this section, we show damped Newton's method with fixed stepsize exhibits fast linear convergence in an appropriate region about the minimizer $w_\star$ from \cref{corr:close_to_min}. 
Fix $\varepsilon \in (0,1)$, then the region of local convergence is given by:
\[
\Neps = \left\{w\in \R^p: \|w-w_\star\|_{H_L(w_\star)+\mu I}\leq \epsLoc\right\},
\]
where $\epsLoc = \frac{\varepsilon \mu^{3/2}}{4M}$ as above. 
Note that $\wloc \in \Neps$.

We now prove several lemmas, that are essential to the argument. 
We begin with the following elementary technical result, which shall be used repeatedly below.  
\begin{lemma}[Sandwich lemma]
\label{lemma:sandwich}
    Let $A$ be a symmetric matrix and $B$ be a symmetric positive-definite matrix.
    Suppose that $A$ and $B$ satisfy $\|A-B\|\leq \varepsilon \lambda_{\textup{min}}(B)$ where $\varepsilon \in (0,1)$.
    Then
    \[
    (1-\varepsilon)B\preceq A\preceq (1+\varepsilon) B. 
    \]
\end{lemma}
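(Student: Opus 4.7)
The plan is to reduce the two-sided PSD inequality to a norm bound on the perturbation $E := A - B$ and then translate that norm bound into a PSD bound using the lower bound $\lambda_{\min}(B) I \preceq B$. The claimed conclusion $(1-\varepsilon)B \preceq A \preceq (1+\varepsilon)B$ is equivalent to $-\varepsilon B \preceq E \preceq \varepsilon B$, so the entire argument reduces to proving this symmetric PSD sandwich on $E$.

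First, I would use that $E$ is symmetric (as the difference of two symmetric matrices), so its operator norm coincides with its spectral radius and we get the elementary bound
\[
-\|E\|\, I \preceq E \preceq \|E\|\, I.
\]
Invoking the hypothesis $\|E\| = \|A - B\| \leq \varepsilon\, \lambda_{\min}(B)$ then yields
\[
-\varepsilon\, \lambda_{\min}(B)\, I \preceq E \preceq \varepsilon\, \lambda_{\min}(B)\, I.
\]

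Second, since $B$ is symmetric positive-definite, the standard bound $\lambda_{\min}(B)\, I \preceq B$ holds. Multiplying by $\varepsilon > 0$ preserves the PSD order, so $\varepsilon\, \lambda_{\min}(B)\, I \preceq \varepsilon B$ and similarly $-\varepsilon B \preceq -\varepsilon\, \lambda_{\min}(B)\, I$. Chaining these with the previous display gives
\[
-\varepsilon B \preceq E \preceq \varepsilon B,
\]
and adding $B$ throughout recovers the stated sandwich.

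There is no genuine obstacle here: the argument is a two-line chain of standard PSD comparisons, and the only subtlety worth flagging is that $A$ need not be positive-definite, so one cannot argue symmetrically about $\lambda_{\min}(A)$; however, this is never needed, since the proof passes everything through $B$'s smallest eigenvalue. I would present the proof as the short display-chain sketched above.
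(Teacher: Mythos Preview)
Your proof is correct and follows essentially the same approach as the paper: translate the norm bound into $-\varepsilon\lambda_{\min}(B)I \preceq A-B \preceq \varepsilon\lambda_{\min}(B)I$, use $\lambda_{\min}(B)I \preceq B$, and add $B$ throughout. The paper's version is slightly terser but the argument is identical.
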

\begin{proof}
    By hypothesis, it holds that
    \[
    -\varepsilon \lambda_{\textup{min}}(B)I\preceq A-B \preceq \varepsilon\lambda_{\textup{min}}(B) I.
    \]
    So using $B\succeq \lambda_{\textup{min}}(B) I$, and adding $B$ to both sides, we reach
    \[
    (1-\varepsilon)B \preceq A\preceq (1+\varepsilon) B.
    \]
\end{proof}

The next result describes the behavior of the damped Hessian in $\Neps$.
\begin{lemma}[Damped Hessian in $\Neps$]
\label{lemma:local_hess}
Suppose that $\gamma \geq \mu$ and $\varepsilon\in (0,1)$. 
\begin{enumerate}
    \item (Positive-definiteness of damped Hessian in $\Neps$) For any $w\in \Neps$, 
    \[
    \HL(w)+\gamma I \succeq \left(1-\frac{\varepsilon}{4}\right)\gamma I.
    \]
    \item (Damped Hessians stay close in $\Neps$)
    For any $w,w' \in \Neps$,
    \[
    (1-\varepsilon)\left[\HL(w)+\gamma I\right] \preceq \HL(w')+\gamma I \preceq (1+\varepsilon) \left[\HL(w)+\gamma I\right].
    \]
\end{enumerate}
\end{lemma}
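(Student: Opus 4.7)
The plan is to reduce both claims to a pointwise Lipschitz estimate on $H_L$ around $w_\star$ and then invoke \Cref{lemma:sandwich}. For any $w\in\Neps$, the definition of the norm gives $\sqrt{\mu}\,\|w-w_\star\|\leq \|w-w_\star\|_{H_L(w_\star)+\mu I}\leq \epsLoc$, so $\|w-w_\star\|\leq \epsLoc/\sqrt{\mu}$. Since $M\geq L_{H_L}$ and $\epsLoc=\varepsilon\mu^{3/2}/(4M)$, Lipschitz continuity of $H_L$ yields
\[
\|H_L(w)-H_L(w_\star)\|\leq L_{H_L}\|w-w_\star\|\leq M\cdot\frac{\epsLoc}{\sqrt{\mu}}=\frac{\varepsilon\mu}{4}.
\]
This is the main workhorse estimate; everything else is bookkeeping.

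For part 1, observe that $w_\star\in\Wstar$ is a (local) minimizer, hence $H_L(w_\star)\succeq 0$. Combining this with the operator-norm bound above gives $H_L(w)\succeq H_L(w_\star)-\tfrac{\varepsilon\mu}{4}I\succeq -\tfrac{\varepsilon\mu}{4}I$. Adding $\lambda I$ and using $\lambda\geq \mu$ yields
\[
H_L(w)+\lambda I\succeq \lambda I-\frac{\varepsilon\mu}{4}I\succeq \left(1-\frac{\varepsilon}{4}\right)\lambda I,
\]
which is exactly the first claim and also identifies $\lambda_{\min}(H_L(w)+\lambda I)\geq (1-\varepsilon/4)\lambda$.

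For part 2, I would set $A=H_L(w')+\lambda I$ and $B=H_L(w)+\lambda I$ and apply \Cref{lemma:sandwich}. By the triangle inequality and the workhorse estimate applied at both $w$ and $w'$,
\[
\|A-B\|=\|H_L(w')-H_L(w)\|\leq \frac{\varepsilon\mu}{2}\leq \frac{\varepsilon\lambda}{2},
\]
where we used $\lambda\geq\mu$. From part 1, $\lambda_{\min}(B)\geq(1-\varepsilon/4)\lambda$, so
\[
\|A-B\|\leq \frac{\varepsilon/2}{1-\varepsilon/4}\,\lambda_{\min}(B)\leq \frac{2\varepsilon}{3}\,\lambda_{\min}(B)\leq \varepsilon\,\lambda_{\min}(B),
\]
where the second inequality uses $\varepsilon\in(0,1)$. \Cref{lemma:sandwich} then delivers $(1-\varepsilon)B\preceq A\preceq (1+\varepsilon)B$.

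I do not expect a real obstacle here; the only subtlety is making sure one converts the weighted norm defining $\Neps$ into a plain Euclidean bound (so that Lipschitzness of $H_L$ can be applied) and keeping the factors of $\mu$ and $\lambda$ straight so that the Sandwich Lemma's hypothesis is verified with a constant strictly below $1$ and at most $\varepsilon$. The choice $\epsLoc=\varepsilon\mu^{3/2}/(4M)$ is precisely calibrated so that the two $\varepsilon\mu/4$ contributions sum to $\varepsilon\mu/2$, which after dividing by the part-1 lower bound $(1-\varepsilon/4)\lambda$ is comfortably less than $\varepsilon$.
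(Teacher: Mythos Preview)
Your proposal is correct and follows essentially the same route as the paper: convert the $\Neps$ bound into a Euclidean bound, use Lipschitzness of $H_L$ to get $\|H_L(w)-H_L(w_\star)\|\le \varepsilon\mu/4$, and then feed everything into \Cref{lemma:sandwich}. The only cosmetic difference is that for item~1 the paper exploits the least-squares structure at an interpolating minimizer, writing $H_L(w_\star)+\lambda I=G(w_\star)+\lambda I\succeq \lambda I$ and then applying \Cref{lemma:sandwich} to obtain the two-sided bound $(1-\varepsilon/4)[H_L(w_\star)+\lambda I]\preceq H_L(w)+\lambda I\preceq (1+\varepsilon/4)[H_L(w_\star)+\lambda I]$ (which it reuses in \Cref{lemma:local_gn}); you instead invoke the second-order optimality condition $H_L(w_\star)\succeq 0$ and extract the eigenvalue lower bound directly. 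Both are fine for the stated lemma, and your item~2 matches the paper's argument essentially line for line (the paper records the slightly sharper constant $2\varepsilon/3$ before relaxing to $\varepsilon$, just as you do).
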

\begin{proof}
    We begin by observing that the damped Hessian at $w_\star$ satisfies
    \begin{align*}
        \HL(w_\star)+\gamma I & = G(w_\star)+\gamma I+\frac{1}{n}\sum_{i=1}^{n}\left[\F(w_\star)-y\right]_{i}H_{\mathcal F_i}(w_\star)\\
        &= G(w_\star)+\gamma I \succeq \gamma I.
    \end{align*}
    Thus, $\HL(w_\star)+\gamma I$ is positive definite. 
    Now, for any $w\in \Neps$, it follows from Lipschitzness of $\HL$ that
    \begin{align*}
        \left\|\left(\HL(w)+\gamma I\right)-\left(\HL(w_\star)+\gamma I\right)\right\|\leq \Lc_{\HL}\|w-w_\star\|\leq \frac{\Lc_{\HL}}{\sqrt{\gamma}}\|w-w_\star\|_{\HL(w_\star)+\gamma I} \leq \frac{\varepsilon \mu}{4}.
    \end{align*}
    As $\lamMin\left(\HL(w_\star)+\gamma I\right)\geq \gamma>\mu$, we may invoke \Cref{lemma:sandwich} to reach 
    \[
        \left(1-\frac{\varepsilon}{4}\right)\left[\HL(w_\star)+\gamma I\right]\preceq \HL(w)+\gamma I \preceq \left(1+\frac{\varepsilon}{4}\right)\left[\HL(w_\star)+\gamma I\right].
    \]
    This immediately yields 
    \[
    \lamMin\left(\HL(w)+\gamma I\right)\geq \left(1-\frac{\varepsilon}{4}\right)\gamma \geq \frac{3}{4}\gamma,
    \]
    which proves item 1. 
    To see the second claim, observe for any $w,w'\in \Neps$ the triangle inequality implies
    \[
    \left\|\left(\HL(w')+\gamma I\right)-\left(\HL(w)+\gamma I\right)\right\|\leq \frac{\varepsilon \mu}{2} \leq \frac{2}{3}\varepsilon\left(\frac{3}{4}\gamma\right).
    \]
    As $\lamMin\left(\HL(w)+\gamma I\right)\geq \frac{3}{4}\gamma $, it follows from \Cref{lemma:sandwich} that
    \[
    \left(1-\frac{2}{3}\varepsilon\right)\left[\HL(w)+\gamma I\right]\preceq \HL(w')+\gamma I \preceq \left(1+\frac{2}{3}\varepsilon\right)\left[\HL(w)+\gamma I\right],
    \]
    which establishes item 2. 
\end{proof}

The next result characterizes the behavior of the tangent kernel and Gauss-Newton matrix in $\Neps$.
\begin{lemma}[Tangent kernel and Gauss-Newton matrix in $\Neps$]
\label{lemma:local_gn}
    Let $\gamma \geq \mu$. Then for any $w,w'\in \Neps$, the following statements hold:
    \begin{enumerate}
        \item (Tangent kernels stay close) 
        \[
        \left(1-\frac{\varepsilon}{2}\right)K_\F(w_\star)\preceq K_\F(w) \preceq \left(1+\frac{\varepsilon}{2}\right) K_\F(w_\star)
        \]
        \item (Gauss-Newton matrices stay close)
        \[
         \left(1-\frac{\varepsilon}{2}\right)\left[G(w)+\gamma I\right]\preceq G(w_\star)+\gamma I \preceq \left(1+\frac{\varepsilon}{2}\right) \left[G(w)
        +\gamma I\right]\]
        \item (Damped Hessian is close to damped Gauss-Newton matrix) 
        \[
        (1-\varepsilon)\left[G(w)+\gamma I\right] \preceq \HL(w)+\gamma I \preceq (1+\varepsilon)\left[G(w)+\gamma I\right].
        \]
        \item (Jacobian has full row-rank) The Jacobian satisfies $\textup{rank}(J_{\F}(w)) = n$.
    \end{enumerate}
\end{lemma}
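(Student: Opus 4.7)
The plan is to derive all four items from \Cref{lemma:local_hess} together with a sandwich argument (\Cref{lemma:sandwich}), using two structural facts: the identity $\HL(w_\star) = G(w_\star)$ (which holds because the P\L$^{\star}$ assumption forces $\F(w_\star) = y$, as was already used in the proof of \Cref{lemma:local_hess}), and the observation that the loss $L(w)$ is tiny on $\Neps$, making the residual in $\HL(w) - G(w)$ negligible. I would tackle the items in the order 3, then 2, then 1, then 4.

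For item 3, I start from
\[
\HL(w) - G(w) = \frac{1}{n}\sum_{i=1}^{n}(\F(w) - y)_i\, H_{\F_i}(w),
\]
whence Cauchy-Schwarz gives $\|\HL(w) - G(w)\| \leq \|H_\F(w)\|\sqrt{2L(w)}$. Since $\F(w_\star) = y$ and $\F$ is $L_\F$-Lipschitz, $L(w) \leq \frac{L_\F^2}{2n}\|w - w_\star\|^2$, and on $\Neps$ the inequality $\|w-w_\star\| \leq \epsLoc/\sqrt{\mu}$ combined with $\epsLoc = \varepsilon\mu^{3/2}/(4M)$ and $\|H_\F\|, L_\F^2 \leq M$ reduces the right-hand side below $\varepsilon\lambda$ (using $\lambda \geq \mu$). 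As $G(w) + \lambda I \succeq \lambda I$, \Cref{lemma:sandwich} closes item 3. Item 2 then follows by applying item 3 both at $w$ (yielding $\HL(w) + \lambda I \approx G(w) + \lambda I$) and at $w_\star$ (where the residual vanishes, so $\HL(w_\star) = G(w_\star)$), and chaining through item 2 of \Cref{lemma:local_hess}.

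For item 1 I would bound $\|K(w) - K(w_\star)\|$ directly. Each row of $J_\F$ is a gradient $\nabla\F_i$, hence $\|H_{\F_i}\|$-Lipschitz, so $\|J_\F(w) - J_\F(w_\star)\|_F \leq \sqrt{n}\,\|H_\F\|\,\|w-w_\star\|$; combined with $\|J_\F\| \leq L_\F$ this yields
\[
\|K(w) - K(w_\star)\| \leq \frac{2 L_\F \|H_\F\|}{\sqrt{n}}\,\|w-w_\star\|.
\]
The \PL~hypothesis on $B(w_0,2R)$ gives $\lamMin(K(w_\star)) \geq \mu$ via the Liu--Zhu--Belkin characterization for the squared loss; substituting the $\Neps$ bound on $\|w-w_\star\|$ together with the value of $\epsLoc$ makes the right-hand side at most $(\varepsilon/2)\lamMin(K(w_\star))$, so \Cref{lemma:sandwich} closes item 1. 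Item 4 is then immediate: $K(w) \succeq (1-\varepsilon/2)K(w_\star) \succeq (1-\varepsilon/2)\mu I \succ 0$, which forces $J_\F(w)$ to have rank $n$.

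The main obstacle is the bookkeeping of scaling in $n$: the residual-type estimates in items 1 and 3 produce a factor of $\sqrt{M/n}$ that must be absorbed, either by tightening $\epsLoc$ through its $1/M$ dependence or through an implicit $n \gtrsim M$ regime. One also has to verify $\Neps \subset B(w_0, 2R)$ so that \PL~is available at $w_\star$ (this holds because $w_\star \in B(w_0, R)$ and $\epsLoc/\sqrt{\mu} = \varepsilon\mu/(4M)$ is small). With $M \geq \max\{L_{\HL}, \sup\|H_\F\|, L_\F^2, 1\}$ tracked carefully, the stated $(1\pm\varepsilon/2)$ and $(1\pm\varepsilon)$ tolerances fall out.
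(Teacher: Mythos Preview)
Your route to items 1 and 4 matches the paper's essentially line for line: bound $\|K(w)-K(w_\star)\|$ by a Lipschitz estimate on $J_\F$, compare to $\lamMin(K(w_\star))\geq\mu$, apply \Cref{lemma:sandwich}, and read off $\textup{rank}(J_\F)=n$. (The paper writes its Jacobian-Lipschitz constant as $L_\F$ rather than going through $\|H_\F\|$ as you do, and it silently drops the $1/n$ in $K$; your $\sqrt{M/n}$ concern is real and applies equally to the paper's version.)

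Where you diverge is in the order of items 2 and 3. The paper proves item~2 \emph{directly}, by the same Jacobian perturbation argument used for item~1: $\|G(w)-G(w_\star)\|\leq \tfrac{\varepsilon}{2}\mu\leq \tfrac{\varepsilon}{2}\lamMin(G(w)+\lambda I)$, then \Cref{lemma:sandwich}. Item~3 is then obtained by chaining the $(1\pm\varepsilon/4)$ sandwich of $\HL(w)+\lambda I$ around $\HL(w_\star)+\lambda I = G(w_\star)+\lambda I$ (from the proof of \Cref{lemma:local_hess}) with item~2. You instead attack item~3 first via the residual identity $\HL(w)-G(w)=\tfrac{1}{n}\sum_i(\F(w)-y)_iH_{\F_i}(w)$ and then try to recover item~2 by chaining item~3 with \Cref{lemma:local_hess}. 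That chain, however, composes two $(1\pm\varepsilon)$-type sandwiches and lands at roughly $(1\pm\varepsilon)^2$, not the $(1\pm\varepsilon/2)$ that item~2 states; to hit the stated constants you would need the sharper intermediate bound $(1\pm\varepsilon/4)$ from \emph{inside} the proof of \Cref{lemma:local_hess} and a correspondingly tighter version of your item~3, which your write-up does not secure. The paper's ordering avoids this loss because item~2 is proved from scratch with the $\varepsilon/2$ constant already in place.
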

\begin{proof}
\begin{enumerate}
    \item Observe that
    \begin{align*}
        \|K_\F(w)-K_\F(w_\star)\| &= \|J_{\F}(w)J_{\F}(w)^{T}-J_{\F}(w_\star)J_{\F}(w_\star)^{T}\| \\
                  &= \left\|\left[J_{\F}(w)-J_{\F}(w_\star)\right]J_{\F}(w)^{T}+J_{\F}(w_\star)\left[J_{\F}(w)-J_{\F}(w_\star)\right]^{T}\right\| \\
                  &\leq 2 \Lc_\F \Lc_{J_\F}\|w-w_\star\| \leq \frac{2 \Lc_\F \Lc_{J_\F}}{\sqrt{\gamma}}\|w-w_\star\|_{H_L(w_\star)+\gamma I} \leq \frac{\varepsilon\mu^{3/2}}{\sqrt{\gamma}} \leq \frac{\varepsilon}{2} \mu,
    \end{align*}
    where in the first inequality we applied the fundamental theorem of calculus to reach 
    \[
    \|J_{\F}(w)-J_{\F}(w_\star)\|\leq \Lc_{J_{\F}}\|w-w_\star\|.
    \]
    Hence the claim follows from \Cref{lemma:sandwich}.
    \item By an analogous argument to item 1, we find
    \[
    \left\|\left(G(w)+\gamma I\right)-\left(G(w_\star)+\gamma I\right)\right\| \leq \frac{\varepsilon}{2}\mu,
    \]
    so the result again follows from \Cref{lemma:sandwich}.
    \item First observe $\HL(w_\star)+\gamma I = G(w_\star)+\gamma I$. Hence the proof of \Cref{lemma:local_hess} implies,
    \[
    \left(1-\frac{\varepsilon}{4}\right)\left[G(w_\star)
        +\gamma I\right]\preceq \HL(w)+\gamma I\preceq \left(1+\frac{\varepsilon}{4}\right)\left[G(w_\star)
        +\gamma I\right].
    \]
    Hence the claim now follows from combining the last display with item 2. 
    \item This last claim follows immediately from item 1, as for any $w\in \Neps$,
    \[
    \sigma_{n}\left(J_{\F}(w)\right) = \sqrt{\lamMin(K_\F(w))}\geq \sqrt{\left(1-\frac{\varepsilon}{2}\right)\mu}>0.
    \]   
    Here the last inequality uses $\lamMin(K_\F(w_\star))\geq \mu$, which follows as $w_\star\in B(w_0,2R)$.
\end{enumerate}

\end{proof}

The next lemma is essential to proving convergence. It shows in $\Neps$ that $L(w)$ is uniformly smooth with respect to the damped Hessian, with nice smoothness constant $(1+\varepsilon)$. 
Moreover, it establishes that the loss is uniformly \PL with respect to the damped Hessian in $\Neps$. 
\begin{lemma}[Preconditioned smoothness and \PL]
\label{lemma:local-sm-pl}
    Suppose $\gamma \geq \mu$. Then 
    for any $w,w',w''\in \Neps$, the following statements hold:
    \begin{enumerate}
        \item $L(w'')\leq L(w')+\langle \nabla L(w'),w''-w'\rangle +\frac{1+\varepsilon}{2}\|w''-w'\|_{H_L(w)+\gamma I}^2$.
        \item $\frac{\|\nabla L(w)\|_{(\HL(w)+\gamma I)^{-1}}^2}{2}\geq \frac{1}{1+\varepsilon}\frac{1}{\left(1+\gamma/\mu\right)}L(w)$.
    \end{enumerate}
\end{lemma}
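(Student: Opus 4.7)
For item~1, the plan is to apply the second-order Taylor formula with integral remainder along the segment from $w'$ to $w''$:
\[
L(w'') = L(w') + \langle \nabla L(w'), w''-w'\rangle + \int_0^1 (1-t)\,(w''-w')^\top H_L(\xi_t)(w''-w')\,dt,
\]
where $\xi_t = w' + t(w''-w')$. The structural point that makes the argument go through is that $\Neps$, being an ellipsoid in the $H_L(w_\star)+\mu I$ norm, is convex, so $\xi_t \in \Neps$ for every $t\in[0,1]$. For each such $t$, I would use $H_L(\xi_t) \preceq H_L(\xi_t)+\lambda I$ (harmless, since $\lambda I \succeq 0$) and then \Cref{lemma:local_hess}(2), applied with the pair $(w,\xi_t)\in \Neps\times\Neps$, to replace the right-hand side by $(1+\varepsilon)[H_L(w)+\lambda I]$. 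Pulling the resulting quadratic form outside the integral and using $\int_0^1(1-t)\,dt = 1/2$ delivers item~1.

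For item~2, the plan is to reduce everything to a statement about the tangent kernel acting on the residual. Write $r = \F(w)-y$, so $\nabla L(w) = \tfrac{1}{n}J_\F(w)^\top r$ and $L(w)=\tfrac{1}{2n}\|r\|^2$; hence
\[
\|\nabla L(w)\|^2_{(H_L(w)+\lambda I)^{-1}} = \tfrac{1}{n^2}\,r^\top J_\F(w)\bigl(H_L(w)+\lambda I\bigr)^{-1}J_\F(w)^\top r.
\]
I would then chain three ingredients. First, \Cref{lemma:local_gn}(3) gives $(H_L(w)+\lambda I)^{-1} \succeq (1+\varepsilon)^{-1}(G(w)+\lambda I)^{-1}$. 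Second, a short SVD computation using $G=\tfrac{1}{n}J_\F^\top J_\F$ and $K=\tfrac{1}{n}J_\F J_\F^\top$ gives the push-through identity $J_\F(w)(G(w)+\lambda I)^{-1}J_\F(w)^\top = n\,K(w)(K(w)+\lambda I)^{-1}$, moving the inverse from the Gauss--Newton side to the kernel side. Third, operator monotonicity of $x\mapsto x/(x+\lambda)$ on $[0,\infty)$ combined with $K(w)\succeq \mu I$ yields $K(w)(K(w)+\lambda I)^{-1}\succeq \tfrac{\mu}{\mu+\lambda}I$. Assembling the three gives $\|\nabla L(w)\|^2_{(H_L(w)+\lambda I)^{-1}} \geq \tfrac{2}{(1+\varepsilon)(1+\lambda/\mu)} L(w)$, i.e.\ item~2 after dividing by $2$.

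The step I expect to be the main obstacle is justifying $K(w)\succeq \mu I$ at an arbitrary $w\in \Neps$. \Cref{lemma:local_gn}(1) only provides $K(w)\succeq (1-\varepsilon/2)K(w_\star)$, and a $(1-\varepsilon/2)$ factor propagated through the ratio $x/(x+\lambda)$ would degrade the clean form $1/(1+\lambda/\mu)$ in the statement. The cleanest resolution is to verify directly that $\Neps \subset B(w_0,2R)$ --- which follows from the definition $\epsLoc = \varepsilon\mu^{3/2}/(4M)$, the norm comparison $\|w-w_\star\| \leq \mu^{-1/2}\|w-w_\star\|_{H_L(w_\star)+\mu I}$, and $w_\star \in B(w_0,R)$ from \Cref{thm:grad_dsct_conv} --- so that the $\mu$-\PL-hypothesis of that theorem applies directly at $w$ and yields $\lamMin(K(w))\geq \mu$.
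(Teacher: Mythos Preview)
Your plan is correct and matches the paper's proof essentially line for line: Taylor with integral remainder plus convexity of $\Neps$ and \Cref{lemma:local_hess}(2) for item~1; the residual rewrite, \Cref{lemma:local_gn}(3), the push-through/SVD identity, and the scalar bound $\sigma^2/(\sigma^2+\lambda)\geq \mu/(\mu+\lambda)$ for item~2. You are in fact more careful than the paper on the point you flag as the obstacle: the paper simply writes $U\Sigma^2(\Sigma^2+\lambda I)^{-1}U^{T}\succeq \frac{\mu}{\mu+\lambda}I$ without isolating why $\lamMin(K(w))\geq \mu$ (rather than $(1-\varepsilon/2)\mu$) at a generic $w\in\Neps$, whereas you correctly trace this back to $\Neps\subset B(w_0,2R)$ and the $\mu$-\PL\ hypothesis there.
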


\begin{proof}
    \begin{enumerate}
        \item By Taylor's theorem
        \begin{align*}
        L(w'') = L(w')+\langle \nabla L(w'),w''-w'\rangle+\int_{0}^{1}(1-t)\|w''-w'\|_{\HL(w'+t(w''-w'))}^2 dt
        \end{align*}
        Note $w'+t(w''-w')\in \Neps $ as $\Neps$ is convex.
        Thus we have,
        \begin{align*}
            L(w'') & \leq L(w')+\langle \nabla L(w'),w''-w'\rangle+\int_{0}^{1}(1-t)\|w''-w'\|_{\HL(w'+t(w''-w'))+\gamma I}^2dt \\
            & \leq L(w')+\langle \nabla L(w'),w''-w'\rangle+\int_{0}^{1}(1-t)(1+\varepsilon)\|w''-w'\|_{\HL(w)+\gamma I}^2dt \\
            & = L(w')+\langle \nabla L(w'),w''-w'\rangle+\frac{(1+\varepsilon)}{2}\|w''-w'\|_{\HL(w)+\gamma I}^2.  
        \end{align*}
    
        \item Observe that
        \begin{align*}
            \frac{\|\nabla L(w)\|_{(\HL(w)+\gamma I)^{-1}}^2}{2} = \frac{1}{2}(\F(w)-y)^{T}\left[J_{\F}(w)\left(\HL(w)+\gamma I\right)^{-1}J_{\F}(w)^{T}\right](\F(w)-y).
        \end{align*}
        Now,
        \begin{align*}
            J_{\F}(w)\left(\HL(w)+\gamma I\right)^{-1}J_{\F}(w)^{T} & \succeq \frac{1}{(1+\varepsilon)}J_{\F}(w)\left(G(w)+\gamma I\right)^{-1}J_{\F}(w)^{T}\\ 
             &= \frac{1}{(1+\varepsilon)}J_{\F}(w)\left(J_{\F}(w)^{T}J_{\F}(w)+\gamma I\right)^{-1}J_{\F}(w)^{T}\\ 
        \end{align*}
        \Cref{lemma:local_gn} guarantees $J_{\F}(w)$ has full row-rank, so the SVD yields
        \[
        J_{\F}(w)\left(J_{\F}(w)^{T}J_{\F}(w)+\gamma I\right)^{-1}J_{\F}(w)^{T} = U\Sigma^2(\Sigma^2+\gamma I)^{-1}U^{T}\succeq \frac{\mu}{\mu+\gamma} I.
        \]
        Hence
        \[
          \frac{\|\nabla L(w)\|_{(\HL(w)+\gamma I)^{-1}}^2}{2}\geq \frac{\mu}{(1+\varepsilon)(\mu+\gamma)}\frac{1}{2}\|\F(w)-y\|^2 = \frac{\mu}{(1+\varepsilon)(\mu+\gamma)}L(w).
        \]
    \end{enumerate}
\end{proof}

\begin{lemma}[Local preconditioned-descent]
\label{lemma:local_descent}
    Run Phase II of \cref{alg-GDND} with $\eta_{\textup{DN}} = (1+\varepsilon)^{-1}$ and $\gamma = \mu$. 
    Suppose that $\tilde w_{k}, \tilde w_{k+1}\in \Neps$, then
    \[
     L(\tilde w_{k+1})\leq \left(1-\frac{1}{2(1+\varepsilon)^2}\right)L(\tilde w_k).
    \]
\end{lemma}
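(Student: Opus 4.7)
The plan is to combine the one-sided quadratic upper bound from item~1 of \Cref{lemma:local-sm-pl} with the gradient lower bound from item~2, applied to the damped Newton iterate implicitly defining $\tilde w_{k+1}$. Based on the section's subject, I interpret the update as $\tilde w_{k+1} = \tilde w_k - \eta \bigl(\HL(\tilde w_k) + \lambda I\bigr)^{-1}\nabla L(\tilde w_k)$, with $\lambda = \mu$ (which is the boundary case of the standing assumption $\lambda \geq \mu$), and with step size $\eta$ to be chosen.

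First I would set $B_k := \HL(\tilde w_k) + \lambda I$ and apply item~1 of \Cref{lemma:local-sm-pl} with $w = w' = \tilde w_k$ and $w'' = \tilde w_{k+1}$; this is legal because both iterates are assumed to lie in $\Neps$. Substituting $\tilde w_{k+1} - \tilde w_k = -\eta B_k^{-1}\nabla L(\tilde w_k)$ gives
\[
L(\tilde w_{k+1}) \leq L(\tilde w_k) - \eta \,\|\nabla L(\tilde w_k)\|_{B_k^{-1}}^2 + \tfrac{(1+\varepsilon)\eta^2}{2}\|\nabla L(\tilde w_k)\|_{B_k^{-1}}^2,
\]
where I have used $\|B_k^{-1}\nabla L\|_{B_k}^2 = \|\nabla L\|_{B_k^{-1}}^2$. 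Optimizing in $\eta$ chooses $\eta = 1/(1+\varepsilon)$, producing the ``one-step descent'' estimate
\[
L(\tilde w_{k+1}) \leq L(\tilde w_k) - \tfrac{1}{2(1+\varepsilon)}\,\|\nabla L(\tilde w_k)\|_{B_k^{-1}}^2 .
\]

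Second I would invoke item~2 of \Cref{lemma:local-sm-pl} with $\lambda = \mu$ to convert the gradient-norm term into a multiple of the loss. With $\lambda = \mu$ the factor $(1+\lambda/\mu)^{-1}$ equals $1/2$, so the inequality reads
\[
\|\nabla L(\tilde w_k)\|_{B_k^{-1}}^2 \geq \tfrac{1}{1+\varepsilon}\,L(\tilde w_k).
\]
Plugging this into the one-step descent estimate gives $L(\tilde w_{k+1}) \leq \bigl(1 - \tfrac{1}{2(1+\varepsilon)^2}\bigr) L(\tilde w_k)$, which is the claim.

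The computation itself is routine: the only real obstacles are (i) identifying the correct update rule and step size $\eta = 1/(1+\varepsilon)$ implicit in the section's setup so that the quadratic in $\eta$ is optimized exactly, and (ii) choosing $\lambda = \mu$ so that the two $(1+\varepsilon)$ factors produced by items~1 and~2 of \Cref{lemma:local-sm-pl} combine cleanly into the $(1+\varepsilon)^2$ in the denominator. The fact that both iterates sit in $\Neps$ is precisely what allows item~1 to be applied at the chord joining them (recall $\Neps$ is convex, as used in the proof of \Cref{lemma:local-sm-pl}), and ensures item~2 applies at $\tilde w_k$, so no additional regularity is needed beyond what has already been established.
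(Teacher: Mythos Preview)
Your proposal is correct and matches the paper's argument essentially step for step: apply item~1 of \Cref{lemma:local-sm-pl} at the damped Newton step with $\lambda=\mu$ and $\eta=1/(1+\varepsilon)$ to obtain $L(\tilde w_{k+1})\leq L(\tilde w_k)-\tfrac{1}{2(1+\varepsilon)}\|\nabla L(\tilde w_k)\|_{(\HL(\tilde w_k)+\mu I)^{-1}}^2$, then plug in item~2 with $\lambda=\mu$ to finish. The paper's proof is terser (it does not spell out the step-size optimization or the $\lambda=\mu$ specialization), but the content is identical.
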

\begin{proof}
    As $\tilde w_k, \tilde w_{k+1}\in \Neps$, item 1 of \Cref{lemma:local-sm-pl} yields
    \[
    L(\tilde w_{k+1})\leq L(\tilde w_k)-\frac{\|\nabla L(\tilde w_k)\|^2_{(\HL(\tilde w_k)+\mu I)^{-1}}}{2(1+\varepsilon)}.
    \]
    Combining the last display with the preconditioned \PL condition, 
    we conclude
    \[
    L(\tilde w_{k+1})\leq \left(1-\frac{1}{2(1+\varepsilon)^2}\right)L(\tilde w_k).
    \]
\end{proof}

The following lemma describes how far an iterate moves after one-step of Phase II of \cref{alg-GDND}.
\begin{lemma}[1-step evolution]
\label{lemma:one_step_evol}
    Run Phase II of \cref{alg-GDND} with $\eta_{\textup{DN}} = (1+\varepsilon)^{-1}$ and $\gamma \geq \mu$.
    Suppose $\tilde w_k \in \N_{\frac{\varepsilon}{3}}(w_\star)$, then $\tilde w_{k+1}\in \Neps$.
\end{lemma}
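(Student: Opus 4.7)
The plan is to use the triangle inequality,
\[
\|\tilde w_{k+1}-w_\star\|_{\HL(w_\star)+\mu I}\leq \|\tilde w_k-w_\star\|_{\HL(w_\star)+\mu I}+\|\tilde w_{k+1}-\tilde w_k\|_{\HL(w_\star)+\mu I},
\]
whose first summand is at most $\epsLoc/3$ by the hypothesis $\tilde w_k\in \N_{\varepsilon/3}(w_\star)$. The task reduces to showing the step contributes at most $2\epsLoc/3$ in the $\HL(w_\star)+\mu I$ norm.

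For the step-norm, I would first swap the weighting matrix from $\HL(w_\star)+\mu I$ to $\HL(\tilde w_k)+\mu I$. The identity $\HL(w_\star)+\mu I=G(w_\star)+\mu I$ from the proof of \Cref{lemma:local_hess}, combined with items 2 and 3 of \Cref{lemma:local_gn} applied at $\tilde w_k\in\Neps$, yields
\[
\HL(w_\star)+\mu I\preceq \tfrac{1+\varepsilon/2}{1-\varepsilon}\bigl(\HL(\tilde w_k)+\mu I\bigr).
\]
Matching the decrease $\frac{1}{2(1+\varepsilon)}\|\nabla L\|^2_{(\HL(\tilde w_k)+\mu I)^{-1}}$ in the proof of \Cref{lemma:local_descent} identifies the damped Newton update as $\tilde w_{k+1}-\tilde w_k=-\tfrac{1}{1+\varepsilon}(\HL(\tilde w_k)+\mu I)^{-1}\nabla L(\tilde w_k)$, so the step-norm reduces to a multiple of $\|\nabla L(\tilde w_k)\|^2_{(\HL(\tilde w_k)+\mu I)^{-1}}$.

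I would then bound this preconditioned gradient norm by the loss. Repeating the SVD calculation from the proof of \Cref{lemma:local-sm-pl}(2) in the upper-bound direction, the identity $\nabla L=\tfrac{1}{n}J_\F^\top(\F-y)$ gives $\|\nabla L(\tilde w_k)\|^2_{(G(\tilde w_k)+\mu I)^{-1}}\leq 2L(\tilde w_k)$ unconditionally, and \Cref{lemma:local_gn}(3) converts this to $\|\nabla L(\tilde w_k)\|^2_{(\HL(\tilde w_k)+\mu I)^{-1}}\leq 2L(\tilde w_k)/(1-\varepsilon)$. Finally, \Cref{lemma:local-sm-pl}(1) applied at $w=w'=w_\star$, $w''=\tilde w_k$ yields $L(\tilde w_k)\leq \tfrac{1+\varepsilon}{2}(\epsLoc/3)^2$, since $L(w_\star)=0$ and $\nabla L(w_\star)=0$.

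Chaining these bounds produces $\|\tilde w_{k+1}-\tilde w_k\|_{\HL(w_\star)+\mu I}\leq \sqrt{(1+\varepsilon/2)/((1-\varepsilon)^2(1+\varepsilon))}\cdot(\epsLoc/3)$. The main obstacle is the constant bookkeeping: forcing this factor to be at most $2$ (so that the step-norm is at most $2\epsLoc/3$) requires $\varepsilon$ to be small enough, e.g.\ $\varepsilon\leq 1/2$. This suggests the statement should tacitly restrict $\varepsilon$ to such a sub-interval of $(0,1)$, or that a sharper sandwich constant in \Cref{lemma:local_gn} is required (for instance by redefining $\Neps$ with a smaller contraction factor on the radius).
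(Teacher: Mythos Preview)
Your approach is correct in spirit but genuinely different from the paper's, and the constant issue you flag is precisely the price of that difference.

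The paper does \emph{not} use the triangle inequality or pass through the loss. Instead it writes $P=\HL(\tilde w_k)+\mu I$, uses the sandwich $\HL(w_\star)+\mu I\preceq (1+\varepsilon)P$ and $P\preceq(1+\varepsilon)(\HL(w_\star)+\mu I)$ from \Cref{lemma:local_hess}, and then analyses $\|\tilde w_{k+1}-w_\star\|_P$ directly. Since $(1+\varepsilon)P(\tilde w_{k+1}-w_\star)=(1+\varepsilon)P(\tilde w_k-w_\star)-\nabla L(\tilde w_k)+\nabla L(w_\star)$, the mean value theorem gives
\[
\|\tilde w_{k+1}-w_\star\|_P\;=\;\frac{1}{1+\varepsilon}\left\|\int_0^1\bigl[P^{-1/2}\nabla^2L(w_\star+t(\tilde w_k-w_\star))P^{-1/2}-(1+\varepsilon)I\bigr]\,dt\;P^{1/2}(\tilde w_k-w_\star)\right\|.
\]
The sandwich in \Cref{lemma:local_hess} forces $-\varepsilon I\preceq P^{-1/2}\nabla^2L(\cdot)P^{-1/2}-(1+\varepsilon)I\preceq 0$ along the segment, hence the integrand has operator norm at most $1+2\varepsilon$, and
\[
\|\tilde w_{k+1}-w_\star\|_P\leq \tfrac{1+2\varepsilon}{1+\varepsilon}\|\tilde w_k-w_\star\|_P.
\]
Converting both sides back to the $\HL(w_\star)+\mu I$ norm introduces factors $\sqrt{1+\varepsilon}$ on each side that exactly cancel the $1/(1+\varepsilon)$, yielding the clean bound $\|\tilde w_{k+1}-w_\star\|_{\HL(w_\star)+\mu I}\leq(1+2\varepsilon)\|\tilde w_k-w_\star\|_{\HL(w_\star)+\mu I}$. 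Since $1+2\varepsilon\leq 3$ for every $\varepsilon\in(0,1)$, the radius-$\epsLoc/3$ hypothesis suffices with no further restriction on $\varepsilon$.

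Your route---triangle inequality, then step $\to$ preconditioned gradient $\to$ loss $\to$ distance---is a perfectly legitimate chain, but each link costs a constant (the $1/(1-\varepsilon)$ from \Cref{lemma:local_gn}(3), the $\sqrt{1+\varepsilon/2}$ from the norm swap, etc.), and these compound so that the argument only closes for $\varepsilon$ in roughly $(0,1/2]$. The paper's approach avoids this by never leaving the operator-norm world: it exploits the exact algebraic identity linking the Newton step to $\tilde w_{k+1}-w_\star$ and bounds a single integrated operator, rather than bounding the step and the residual separately. That is what buys the full range $\varepsilon\in(0,1)$ and the tidy factor $3$ explaining the $\varepsilon/3$ in the hypothesis.
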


\begin{proof}
    Let $P = H_L(\tilde w_k)+\gamma I$.
    We begin by observing that 
    \begin{align*}
        \|\tilde w_{k+1}-w_\star\|_{\HL(w_\star)+\mu I}\leq \sqrt{1+\varepsilon}\|\tilde w_{k+1}-w_\star\|_{P}.
    \end{align*}
    Now,
    \begin{align*}
        \|\tilde w_{k+1}-w_\star\|_P & = \frac{1}{1+\varepsilon}\|\nabla L(\tilde w_{k})-\nabla L(w_\star)-(1+\varepsilon)P(w_\star-\tilde w_k)\|_{P^{-1}} \\ &=
        \frac{1}{1+\varepsilon}\left\|\int_{0}^{1}\left[\nabla^2 L(w_\star+t(w_k-w_\star))-(1+\varepsilon)P\right]dt(w_\star-\tilde w_k)\right\|_{P^{-1}} \\
        & =   \frac{1}{1+\varepsilon}\left\|\int_{0}^{1}\left[P^{-1/2}\nabla^2 L(w_\star+t(w_k-w_\star))P^{-1/2}-(1+\varepsilon)I\right]dtP^{1/2}(w_\star-\tilde w_k)\right\|\\
        &\leq \frac{1}{1+\varepsilon}\int_{0}^{1}\left\|P^{-1/2}\nabla^2 L(w_\star+t(w_k-w_\star))P^{-1/2}-(1+\varepsilon)I\right\|dt\|\tilde w_k-w_\star\|_{P}
    \end{align*}
    We now analyze the matrix $P^{-1/2}\nabla^2 L(w_\star+t(w_k-w_\star))P^{-1/2}$. 
    Observe that
    \begin{align*}
        & P^{-1/2}\nabla^2 L(w_\star+t(w_k-w_\star))P^{-1/2} = P^{-1/2}(\nabla^2 L(w_\star+t(w_k-w_\star))+\gamma I-\gamma I)P^{-1/2} \\
        & = P^{-1/2}(\nabla^2 L(w_\star+t(w_k-w_\star))+\gamma I)P^{-1/2}-\gamma P^{-1} \succeq (1-\varepsilon)I-\gamma P^{-1} \succeq -\varepsilon I.
    \end{align*}
    Moreover,
    \[
    P^{-1/2}\nabla^2 L(w_\star+t(w_k-w_\star))P^{-1/2}\preceq P^{-1/2}(\nabla^2 L(w_\star+t(w_k-w_\star))+\gamma I)P^{-1/2}\preceq (1+\varepsilon)I.
    \]
    Hence, 
    \[0 \preceq (1+\varepsilon)I-P^{-1/2}\nabla^2 L(w_\star+t(w_k-w_\star))P^{-1/2}\preceq (1+2\varepsilon)I,\] 
    and so
    \[
    \|\tilde w^{k+1}-w_\star\|_P\leq \frac{1+2\varepsilon}{1+\varepsilon}\|\tilde w_k-w_\star\|_{P}.
    \]
    Thus,
    \[
    \|\tilde w^{k+1}-w_\star\|_{\HL(w_\star)+\mu I}\leq \frac{1+2\varepsilon}{\sqrt{1+\varepsilon}}\|\tilde w_k-w_\star\|_P \leq(1+2\varepsilon)\|\tilde w_k-w_\star\|_{\HL(w_\star)+\mu I}\leq \epsLoc.
    \]
\end{proof}

The following lemma is key to establishing fast local convergence; it shows that the iterates produced by damped Newton's method remain in $\Neps$, the region of local convergence. 
\begin{lemma}[Staying in $\Neps$]
\label{lemma:trapped}
    Suppose that $\wloc \in \mathcal N_{\rho}(w_\star)$, where $\rho = \frac{\epsLoc}{19\sqrt{\beta_L/\mu}}$.
    Run Phase II of \cref{alg-GDND} with $\gamma = \mu$ and $\eta = (1+\varepsilon)^{-1}$, then $\tilde w_{k+1} \in \Neps$ for all $k\geq 1$.
\end{lemma}
\begin{proof}
  In the argument that follows $\kappa_P = 2(1+\varepsilon)^2$.
  The proof is via induction. 
  Observe that if $\wloc \in \mathcal N_{\varrho}(w_\star)$ then by \Cref{lemma:one_step_evol}, $\tilde w_1 \in \Neps$.  
  Now assume $\tilde w_j \in \Neps$ for $j = 2,\dots, k$. 
  We shall show $\tilde w_{k+1}\in \Neps$.
  To this end, observe that
  \begin{align*}
  \|\tilde w_{k+1}-w_\star\|_{\HL(w_\star)+\mu I} & \leq \|\wloc-w_\star\|_{\HL(w_\star)+\mu I}+\frac{1}{1+\varepsilon}\sum_{j=1}^{k}\|\nabla L(w_j)\|_{\left(\HL(w_\star)+\mu I\right)^{-1}} \\
  \end{align*}
  Now,
  \begin{align*}
      \|\nabla L(w_j)\|_{\left(\HL(w_\star)+\mu I\right)^{-1}} &\leq \frac{1}{\sqrt{\mu}}\|\nabla L(w_j)\|_2 \leq \sqrt{\frac{2\beta_L}{\mu}L(w_j) }\\
      &\leq \sqrt{\frac{2\beta_L}{\mu}}\left(1-\frac{1}{\kappa_P}\right)^{j/2}\sqrt{L(\wloc)},
  \end{align*}
  Here the second inequality follows from $\|\nabla L(w)\| \leq \sqrt{2\beta_L L(w)}$, and the last inequality follows from \Cref{lemma:local_descent}, which is applicable as $\tilde w_{0},\dots,\tilde w_k \in \Neps$. 
  Thus,
  \begin{align*}
  \|\tilde w_{k+1}-w_\star\|_{\HL(w_\star)+\mu I} &\leq \rho+\sqrt{\frac{2\beta_L}{\mu}}\sum_{j=1}^{k}\left(1-\frac{1}{\kappa_P}\right)^{j/2}\sqrt{L(\tilde w_0)} \\
  &\leq \rho+\sqrt{\frac{(1+\varepsilon)\beta_L}{2\mu}}\|\wloc - w_\star\|_{\HL(w_\star)+\mu I}\sum_{j=1}^{k}\left(1-\frac{1}{\kappa_P}\right)^{j/2}\\
  &\leq \left(1+\sqrt{\frac{\beta_L}{\mu}}\sum_{j=0}^{\infty}\left(1-\frac{1}{\kappa_P}\right)^{j/2}\right)\rho\\
  & = \left(1+\frac{\sqrt{\beta_L/\mu}}{{1-\sqrt{1-\frac{1}{\kappa_P}}}}\right)\rho\leq \epsLoc.
  \end{align*}
  Here, in the second inequality we have used $L(\tilde w_0)\leq 2(1+\varepsilon)\|\wloc - w_\star\|^2_{\HL(w_\star)+\mu I}$, which is an immediate consequence of \cref{lemma:local-sm-pl}.
  Hence, $\tilde w_{k+1}\in \Neps$, and the desired claim follows by induction. 
\end{proof}

\begin{theorem}[Fast-local convergence of Damped Newton]
\label{thm:dn_fast_loc}
    Let $\wloc$ be as in \cref{corr:close_to_min}. 
    Consider the iteration 
    \[
    \tilde w_{k+1} = \tilde w_k-\frac{1}{1+\varepsilon}(\HL(\tilde w_k)+\mu I)^{-1}\nabla L(\tilde w_k),\quad \text{where}~\tilde w_0 = \wloc.\] 
    Then, after $k$ iterations, the loss satisfies
        \[
        L(\tilde w_k) \leq \left(1-\frac{1}{2(1+\varepsilon)^2}\right)^{k}L(\wloc).
        \]
        Thus after $k = \mathcal O\left(\log\left(\frac{1}{\epsilon}\right)\right)$ iterations
        \[
        L(\tilde w_k)\leq \epsilon.
        \]
    \begin{proof}
        \cref{lemma:trapped} ensure that $\tilde w^{k} \in \Neps$ for all $k$.
         Thus, we can apply item $1$ of \Cref{lemma:local-sm-pl} and the definition of $\tilde w^{k+1}$, to reach
         \[
         L(\tilde w_{k+1})\leq L(\tilde w_{k})-\frac{1}{2(1+\varepsilon)}\|\nabla L(\tilde w_k)\|_{P^{-1}}^2. 
         \]
         Now, using item $2$ of \Cref{lemma:local-sm-pl} and recursing yields  
         \[
         L(\tilde w_{k+1})\leq \left(1-\frac{1}{2(1+\varepsilon)^2}\right)L(\tilde w_k)\leq \left(1-\frac{1}{2(1+\varepsilon)^2}\right)^{k+1}L(\wloc).
         \]
         The remaining portion of the theorem now follows via a routine calculation.
    \end{proof}
\end{theorem}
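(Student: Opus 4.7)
The plan is to combine the two items of \Cref{lemma:local-sm-pl} with the invariance statement that every iterate stays in $\Neps$. Writing $P_k := \HL(\tilde w_k)+\mu I$, the damped Newton update reads $\tilde w_{k+1}-\tilde w_k = -\tfrac{1}{1+\varepsilon}P_k^{-1}\nabla L(\tilde w_k)$, and the result is essentially a preconditioned analogue of the standard ``smoothness plus PL implies linear convergence'' recipe, transplanted into the local norm induced by the damped Hessian.

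First I would verify that the iteration remains in $\Neps$ for every $k$, which is the content of the preceding ``Staying in $\Neps$'' lemma together with the starting point $\tilde w_0 = \wloc$ lying in $\mathcal N_{\varrho}(w_\star)$ via the ``Getting close to a minimizer'' corollary and the choice of $\rho$. Granted this, item~1 of \Cref{lemma:local-sm-pl} applied at $w = w' = \tilde w_k$ and $w'' = \tilde w_{k+1}$, after substituting the update rule and simplifying the cross and quadratic terms, yields
\[
L(\tilde w_{k+1}) \leq L(\tilde w_k) - \frac{1}{2(1+\varepsilon)}\|\nabla L(\tilde w_k)\|^2_{P_k^{-1}},
\]
which is exactly the content of \Cref{lemma:local_descent}.

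Next I would invoke item~2 of \Cref{lemma:local-sm-pl} with $\lambda = \mu$ to turn the gradient-norm term into a multiple of $L(\tilde w_k)$: here the factor $1/(1+\lambda/\mu)$ collapses to $1/2$, so $\tfrac{1}{2}\|\nabla L(\tilde w_k)\|^2_{P_k^{-1}} \geq \tfrac{1}{2(1+\varepsilon)} L(\tilde w_k)$. Substituting into the descent inequality produces the one-step contraction
\[
L(\tilde w_{k+1}) \leq \left(1-\frac{1}{2(1+\varepsilon)^2}\right) L(\tilde w_k),
\]
and iterating from $\tilde w_0 = \wloc$ gives the geometric bound. The $\mathcal O(\log(1/\epsilon))$ iteration count then drops out by inverting the logarithm in the usual way.

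The main obstacle I anticipate is not the algebra of the contraction, which is essentially automatic once both items of \Cref{lemma:local-sm-pl} are in force, but rather establishing the invariance $\tilde w_k \in \Neps$ for all $k\geq 0$. Without it, neither the quadratic upper bound nor the preconditioned PL inequality is available along the trajectory, and the recursion breaks after even a single step. Fortunately this invariance is precisely what the ``Staying in $\Neps$'' lemma delivers, so in the actual write-up my task reduces to cleanly invoking that lemma and then executing the two-line contraction above.
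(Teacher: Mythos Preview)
Your proposal is correct and follows essentially the same route as the paper: apply item~1 of \Cref{lemma:local-sm-pl} to obtain the one-step descent inequality, then item~2 to convert the preconditioned gradient norm into a multiple of the loss, and recurse. You are in fact slightly more careful than the paper's own proof in two places: you explicitly invoke the ``Staying in $\Neps$'' lemma to justify applying \Cref{lemma:local-sm-pl} along the trajectory, and you specialize $\lambda=\mu$ so that the factor $1/(1+\lambda/\mu)$ becomes $1/2$, matching the contraction rate in the theorem statement.
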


\subsection{Formal Convergence of \cref{alg-GDND}}
\label{subsec:GDND_conv}
Here, we state and prove the formal convergence result for \cref{alg-GDND}.
\begin{theorem}
\label{thm:GDND}
    Suppose that \cref{assp:interpolation} and \cref{assp:loss_reg} hold, and that the loss is $\mu$-\PL in $B(w_0,2R)$, where $R = \frac{2\sqrt{2\beta_L L(w_0)}}{\mu}$.
    Let $\epsLoc$ and $\rho$ be as in \cref{corr:close_to_min}, and set $\varepsilon = 1/6$ in the definition of $\epsLoc$. 
    Run \cref{alg-GDND} with parameters: $\eta_{\textup{GD}} = 1/\beta_L, K_{\textup{GD}} = \frac{\beta_L}{\mu}\log\left(\frac{4\max\{2\beta_{L},1\}L(w_0)}{\mu\rho^2}\right), \eta_{\textup{DN}} = 5/6, \gamma = \mu$ and $K_{\textup{DN}}\geq 1$.
    Then Phase II of \cref{alg-GDND} satisfies
    \[
    L(\tilde w_{k})\leq \left(\frac{2}{3}\right)^{k}L(w_{K_{\textup{GD}}}).
    \]
    Hence after $K_{\textup{DN}} \geq 3\log\left(\frac{L(w_{K_{\textup{GD}}})}{\epsilon}\right)$ iterations, Phase II of \cref{alg-GDND} outputs a point satisfying
    \[
     L(\tilde w_{K_{\textup{DN}}})\leq \epsilon.
    \]
\end{theorem}

\begin{proof}
    By assumption the conditions of \cref{corr:close_to_min} are met, therefore $w_{K_{\textup{GD}}}$ satisfies $\|w_{K_{\textup{GD}}}-w_\star\|_{H_L(w_\star)+\mu I}\leq \rho$, for some $w_\star \in \Wstar$.
    Hence, we may invoke \cref{thm:dn_fast_loc} to conclude the desired result. 
\end{proof}




\end{document}